\pdfoutput=1
\documentclass[11pt]{article}
\usepackage{fullpage}
\IfFileExists{arxiv/arxiv.tex}
  {\newcommand{\arxivroot}{arxiv/}}
  {\newcommand{\arxivroot}{}}
\usepackage{times}
\usepackage[T1]{fontenc}
\usepackage{amsmath}
\usepackage{amssymb}
\usepackage{amsthm}
\usepackage{natbib}
\usepackage{xcolor}
\usepackage[algo2e,ruled]{algorithm2e}
\usepackage{caption}
\usepackage{dsfont}
\usepackage{microtype}
\usepackage{enumitem}
\usepackage{placeins}
\usepackage{pgfplots}
\usepgfplotslibrary{groupplots}
\usetikzlibrary{patterns}
\pgfplotsset{compat=1.18}
\usepackage{aliascnt}
\usepackage{hyperref}
\usepackage[capitalise,noabbrev]{cleveref}
\hypersetup{colorlinks,
            linkcolor=blue,
            citecolor=blue,
            urlcolor=magenta,
            linktocpage,
            plainpages=false}
\bibpunct{(}{)}{;}{a}{,}{,}
\definecolor{experimentBlue}{HTML}{236CB1}
\pgfdeclarepatternformonly{sparse hatch}{\pgfpointorigin}{\pgfqpoint{5pt}{5pt}}{\pgfqpoint{5pt}{5pt}}{\pgfsetlinewidth{0.45pt}\pgfpathmoveto{\pgfqpoint{-0.5pt}{-0.5pt}}\pgfpathlineto{\pgfqpoint{5.5pt}{5.5pt}}\pgfusepath{stroke}}
\pgfplotsset{
  experiment axis/.style={
    scale only axis,
    width=0.249\textwidth,
    height=3.05cm,
    font=\footnotesize,
    tick label style={font=\footnotesize},
    label style={font=\footnotesize},
    title style={font=\footnotesize},
    axis lines=left,
    axis line style={black!55},
    tick style={black!55},
    grid=major,
    grid style={black!12},
    major tick length=2pt,
    scaled ticks=false,
    clip=true,
    legend style={font=\footnotesize,draw=none,fill=none,
      inner sep=1pt,cells={anchor=west}},
    legend image post style={mark size=1.5pt},
  },
  experiment series/.style={line width=0.85pt,mark size=1.6pt},
  experiment guide/.style={black!55,dashed,line width=0.7pt,no marks},
}
\pgfplotsset{
  discard if not/.style 2 args={
    x filter/.append code={
      \edef\tempa{\thisrow{#1}}
      \edef\tempb{#2}
      \ifx\tempa\tempb
      \else
        
      \fi
    },
  },
}
\definecolor{blueOne}{HTML}{08306B}
\definecolor{blueTwo}{HTML}{10448A}
\definecolor{blueThree}{HTML}{1C5CA3}
\definecolor{blueFour}{HTML}{2E74BC}
\definecolor{blueFive}{HTML}{4A8ECF}
\definecolor{blueSix}{HTML}{6AA8DD}
\definecolor{blueSeven}{HTML}{86BAE6}
\definecolor{blueEight}{HTML}{9CC9EC}
\pgfplotsset{
  swatch legend/.style={
    legend image code/.code={\draw[##1] (0cm,-0.06cm) rectangle (0.3cm,0.06cm);},
  },
}

\newcommand{\X}{\mathcal{X}}
\newcommand{\F}{\mathcal{F}}
\newcommand{\Pcal}{\mathcal{P}}
\newcommand{\Zcal}{\mathcal{Z}}
\newcommand{\Lcal}{\mathcal{L}}
\newcommand{\E}{\mathbb{E}}
\newcommand{\R}{\mathbb{R}}
\newcommand{\1}{\mathds{1}}
\newcommand{\eps}{\varepsilon}
\newcommand{\Enc}{\mathsf{Enc}}
\newcommand{\Dec}{\mathsf{Dec}}
\newcommand{\dec}{\mathsf{dec}}
\newcommand{\err}{\mathrm{err}}
\newcommand{\minerr}{\mathrm{err}}
\newcommand{\CC}{\mathrm{CC}}
\newcommand{\pub}{\mathrm{pub}}
\newcommand{\ev}{\mathrm{ev}}
\newcommand{\Gdual}{G^{*}_{\F}}
\newcommand{\Gprim}{G_{\F}}
\newcommand{\MC}{\mathrm{MC}}
\newcommand{\Var}{\operatorname{Var}}
\DeclareMathOperator{\VCdim}{VCdim}
\DeclareMathOperator{\Pdim}{Pdim}
\DeclareMathOperator{\range}{range}

\newcommand{\StudyOutcomesNoCandidate}{91}
\newcommand{\StudySelectedOutcomes}{479}
\newcommand{\StudyCertifiedOutcomes}{474}
\newcommand{\StudyEndpointInstances}{18}
\newcommand{\StudySaturationInstances}{15}
\newcommand{\StudyPilotModels}{15}
\newcommand{\StudyPilotRepresentations}{75}

\newtheoremstyle{manuscript}{\topsep}{\topsep}{\itshape}{}
  {\bfseries}{}{0.5em}
  {\thmname{#1}\thmnumber{ #2}\thmnote{ (#3)}}
\theoremstyle{manuscript}
\newtheorem{theorem}{Theorem}[section]
\newtheorem{example}{Example}[section]
\newtheorem{proposition}{Proposition}[section]
\newaliascnt{lemma}{theorem}
\newtheorem{lemma}[lemma]{Lemma}
\aliascntresetthe{lemma}
\newaliascnt{remark}{theorem}
\newtheorem{remark}[remark]{Remark}
\aliascntresetthe{remark}
\newaliascnt{corollary}{theorem}
\newtheorem{corollary}[corollary]{Corollary}
\aliascntresetthe{corollary}
\newaliascnt{definition}{theorem}
\newtheorem{definition}[definition]{Definition}
\aliascntresetthe{definition}
\newaliascnt{fact}{theorem}
\newtheorem{fact}[fact]{Fact}
\aliascntresetthe{fact}
\newaliascnt{question}{theorem}
\newtheorem{question}[question]{Question}
\aliascntresetthe{question}
\crefname{fact}{Fact}{Facts}
\crefname{question}{Question}{Questions}
\crefname{appendix}{Appendix}{Appendices}
\crefname{algocf}{Algorithm}{Algorithms}
\renewcommand{\qed}{\hfill\rule{1.5ex}{1.5ex}\par\bigskip}
\renewenvironment{proof}[1][Proof]
  {\par\noindent\textbf{#1.}\ \ignorespaces}
  {\qed}
\newenvironment{keywords}
  {\par\begingroup\small\noindent\textbf{Keywords:} \ignorespaces}
  {\par\endgroup}
\makeatletter
\newcommand{\titlefootnote}[1]{\begingroup
    \renewcommand{\@makefntext}[1]{\noindent##1}\@ifundefined{H@@footnotetext}{}{\let\@footnotetext\H@@footnotetext}\hypersetup{urlcolor=black}\footnotetext{#1}\endgroup
}
\makeatother

\author{Lunjia Hu \\ \small Northeastern University \and  Anish Jayant \\ \small Georgia Institute of Technology \and Peter Li \\ \small Northeastern University \and Princewill Okoroafor \\ \small Harvard University \and Abhishek Shetty \\ \small Georgia Institute of Technology}
\date{}
\title{Communication Complexity of Universal Features}

\begin{document}
\maketitle
\titlefootnote{Code: \url{https://github.com/tcs-ml/Feature-Representations}}
\begin{abstract}
Motivated by the notion of features from machine learning, we study the complexity of representing data that allows for evaluation with a given family of functions.
Concretely, we consider a deterministic encoder that observes $X\sim D$ and produces a finite encoding before a task $f\in\mathcal{F}\subseteq\{0,1\}^{\mathcal{X}}$ is specified.
For each task, a decoder must predict $f(X)$ with error at most $\varepsilon$ under $D$.
We construct an encoder that outputs a representation with $O(d^{*}\log(1/\varepsilon))$ bits, where $d^{*}$ is the dual VC dimension of $\mathcal{F}$. Our encoder greedily selects $O((d^{*}/\varepsilon)\log(1/\varepsilon))$ functions from the class and uses their joint label pattern to produce the encoding. The key idea in our upper bound is a novel potential function argument, where we compare the entropy gained from each greedy selection with the number of patterns allowed by the dual VC dimension. We present lower bounds showing that the representation length achieved by our encoder is worst-case optimal for every $d^* \ge 1$ and $\varepsilon < 1/4$ up to a universal constant factor, and instance-wise optimal for every family $\mathcal{F}$ up to an $O(\log(1/\varepsilon))$ factor. We give two ways to learn the representation from unlabeled samples, with different guarantees on the representation size and sample complexity. Experimentally, we verify the greedy construction's accuracy-size tradeoff and explore its connection to neural sparse autoencoders.
\end{abstract}

\begin{keywords}
  feature learning, dual VC dimension, one-way communication complexity
\end{keywords}

\section{Introduction}
\label{sec:intro}
Many questions in learning, computation, and statistics share one form: compress a data point while preserving its relationship to a fixed family of functions, before anyone knows which function will be asked.
Here are three examples:

\begin{itemize}[leftmargin=1.5em]
  \item \emph{Features for a hypothesis class.}
        A data provider releases compressed features of its inputs, and users later train classifiers from a known class $\mathcal H$, such as decision stumps or linear classifiers, for labels the provider never sees.
        The provider wants every user to do almost as well as if the raw inputs had been released.
        Compressed or hashed features shared across many learning problems are instances of this application \citep{weinberger2009feature,zhou2009compressed}.
  \item \emph{Representations for downstream tasks.}
        A pretrained model maps each input to a representation that is later probed for many tasks \citep{alain2017understanding,radford2021learning}.
        In a sparse autoencoder, for instance, an activation of a language model is represented by which of $W$ dictionary features are active, with at most $k$ active at once, and a researcher later asks whether a particular feature is active \citep{bricken2023monosemanticity,cunningham2023sparse,gao2024scaling}.
        The number $W$ of features may be in the millions \citep{templeton2024scaling}, yet intuition suggests that the sparsity $k$ should govern how much must be stored.
  \item \emph{Sketches for queries.}
        A phone knows its position $x$ in the plane and may send only a few bits to a server.
        Later, the server must decide whether the phone was inside a region, where the regions are half-planes from a large, possibly infinite, list of boundaries, as in geofencing \citep{garzon2014geofencing}.
        Location data are often released with reduced precision \citep{andres2013geo}, and answering such membership queries is the classical problem of range searching \citep{agarwal1999geometric}.
        Sending the exact position is wasteful, and sending one bit per region is impossible when the list is long.
\end{itemize}

In each case the data compression is produced before the downstream function is chosen, so it must keep enough information to support every function of the family.
Recording the value of every function is sufficient, but it is infeasible when there are infinitely many functions in the family. Recording each data point itself defeats the purpose of compression.
What is the minimum amount of information needed in the compressed representation so that we can still evaluate every downstream function in a given family with high accuracy?

We study this question in its simplest form.
An encoder observes an input $X$ drawn from a known distribution $D$ on a domain $\X$, and outputs a representation of $b$ bits.
A decoder later receives the representation together with a function $f$ from a known class $\F\subseteq\{0,1\}^{\X}$, and must predict $f(X)$.
We call the encoder-decoder pair a \emph{universal representation} of $\F$ with error $\eps$ if
\begin{equation}
  \label{eq:intro-guarantee}
  \sup_{f\in\F}\ \Pr_{X\sim D}\bigl[\Dec(\Enc(X),f)\ne f(X)\bigr] \le \eps .
\end{equation}
Here $\Enc(X)\in\{0,1\}^{b}$ is the representation, $\Dec(\Enc(X),f)\in\{0,1\}$ is the prediction for $f$, and the supremum says that a single representation must serve every member of $\F$.
We write $\CC^{\to}_{\det}(\F,D,\eps)$ for the least $b$ for which a universal representation with error $\eps$ exists.
In \Cref{sec:views} we show that this quantity is a one-way communication complexity for a communication problem between the encoder and the decoder. It is also the logarithm of the minimum size of a partition of $\X$ on which every member of $\F$ is nearly constant. Furthermore, it is the minimum length of a feature map that loses at most $\eps$ for agnostic learning with $\F$.

Prior work has focused on a randomized variant of $\CC^{\to}_{\det}(\F,D,\eps)$, where we allow the encoder and decoder to share a random seed $\omega$, drawn in advance of the data point $X\sim D$ and the downstream task $f\in \F$.
This corresponds to the \emph{public-coin} setting in communication complexity. In this setting, our error goal takes an expectation over the random seed:
\begin{equation}
  \label{eq:intro-public-guarantee}
  \sup_{f\in\F}\ \Pr_{\omega,\,X\sim D}\bigl[\Dec_\omega(\Enc_\omega(X),f)\ne f(X)\bigr]\le\eps .
\end{equation}
Let $\CC^{\to}_{\pub}(\F,D,\eps)$ be the least representation length that achieves this public-coin error goal using an arbitrary pair of encoder and decoder with an arbitrary random seed.
The following result gives a characterization of this public-coin communication complexity using the \emph{dual VC dimension} of $\F$ (\Cref{def:dual-vc-dimension}).
\begin{proposition}[\citealp{kremer1999randomized,molinaro2015amplification,bottesch2015correlation}]
\label{prop:intro-1}
For every function family $\F$ with finite \emph{dual VC dimension} $d^*\ge 1$ and every $\varepsilon\in (0,1/4)$,
    \begin{equation}
  \label{eq:intro-public-upper}
  (1-h_2(\eps))d^{*} \le  \sup_D \CC^{\to}_{\pub}(\F,D,\eps)
  \le O\!\left(d^{*}\log\frac1\eps\right),
\end{equation}
where the supremum is over all distributions $D$ over the domain $\X$ and $h_2(u)=-u\log_2 u-(1-u)\log_2(1-u)$ is the binary entropy.
\end{proposition}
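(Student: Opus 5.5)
We prove the two inequalities separately. Recall that the dual VC dimension $d^*$ is the largest $m$ such that some functions $f_1,\dots,f_m\in\F$ are shattered by points of $\X$: for every pattern $s\in\{0,1\}^m$ there is $x_s\in\X$ with $f_i(x_s)=s_i$ for all $i$.

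\textbf{Lower bound.} We reduce from the INDEX problem. Fix a dually shattered tuple $f_1,\dots,f_{d^*}$ and its witnesses $x_s$. Let $D$ be uniform on $\{x_s: s\in\{0,1\}^{d^*}\}$. For a task $f=f_i$ the decoder must output $s_i$, so a public-coin protocol with error $\eps$ against every $f_i$ is a public-coin one-way protocol for INDEX on $d^*$ bits. Its error, averaged over uniform $s$ and each index, is at most $\eps$. Now fix the seed $\omega$, and let $\eps_i(\omega)$ denote the error on index $i$ for that seed. Let $M=\Enc_\omega(X)$. Fano's inequality gives $H(s_i\mid M)\le h_2(\eps_i(\omega))$. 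By the chain rule and subadditivity,
\[
|M|\ \ge\ I(s;M)\ \ge\ \sum_i \bigl(1-H(s_i\mid M)\bigr)\ \ge\ d^*\Bigl(1-h_2\bigl(\textstyle\frac1{d^*}\sum_i\eps_i(\omega)\bigr)\Bigr),
\]
where the last step uses concavity of $h_2$. Some seed has average error at most $\eps$, and $\eps<1/2$ with $h_2$ increasing on $[0,1/2]$. Taking that seed yields $b\ge(1-h_2(\eps))d^*$. This is the standard argument of \citet{kremer1999randomized}.

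\textbf{Upper bound.} Fix $D$. Draw public samples $Y_1,\dots,Y_n\sim D$ i.i.d., independent of $X$, with $n=O(d^*/\eps)$ to be tuned. The encoder sends the dual pattern of $X$ with respect to the sample, that is, the equivalence class of $X$ under the relation $x\sim x'$ iff $f(x)=f(x')$ for all ... that is too large; instead we treat points as functions on $\F$. The dual class $\{\ev_x: f\mapsto f(x)\}$ has VC dimension $d^*$, so we must encode $X$ relative to the points $Y_j$.

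The cleaner route is to take the sample $Y_1,\dots,Y_n$ together with $X$ and view the dual class restricted to $\{Y_j\}\cup\{X\}$ as vectors indexed by $\F$. The encoder sends the set of tasks' agreement... The approach we commit to follows \citet{bottesch2015correlation} and \citet{molinaro2015amplification}: run a compression scheme for the \emph{dual} class. Here the decoder holds a "target" $f$ and the encoder holds the "hypothesis" $\ev_X$. Concretely, the shared randomness generates $n=O((d^*/\eps)\log(1/\eps))$ i.i.d.\ candidate points $Y_j\sim D$, and the encoder sends an index $j$ such that $\ev_{Y_j}$ is close to $\ev_X$.

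Sending a single index costs $\log n$ bits, which is not enough, so the plan is instead to go through the $\eps$-net/packing bound. By Haussler's packing lemma, applied to the dual class under a suitable measure over $\F$, there are at most $(O(1/\eps))^{d^*}$ $\eps$-separated equivalence classes of points. To avoid needing a measure on $\F$, we use the minimax theorem, which lets us fix a distribution $\mu$ on tasks. Yao's principle turns the public-coin guarantee into a deterministic guarantee against $D\times\mu$ for every $\mu$. For fixed $\mu$, Haussler's packing in $L_1(\mu)$ for the dual class, whose VC dimension is $d^*$, gives a cover of $\X$ by $(C/\eps)^{d^*}$ balls of radius $\eps$. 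The encoder sends the index of a ball center $c(X)$, and the decoder outputs $f(c(X))$. The resulting error is $\E_{X,f}\1[f(X)\ne f(c(X))]\le\eps$, using $O(d^*\log(1/\eps))$ bits.

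The main obstacle is justifying the minimax step. The protocol class must be finite or compact. Since the message length is fixed, we can restrict to finitely many decoder behaviors on a finite sub-family by a standard approximation, or we can invoke the general minimax theorem for finite-bit public-coin protocols.
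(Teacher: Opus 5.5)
Your lower bound is correct. It is the argument of \Cref{prop:instance-lower} with $p=1$. Your seed-fixing step, together with concavity of $h_2$ to pass from the per-function errors $\eps_i(\omega)$ to their average, is exactly what is needed to go from the deterministic version the paper proves to the public-coin statement. Your committed upper-bound argument is the paper's \Cref{thm:public-cover}: Haussler's bound for the dual class in $L_1(\mu)$, decoding by $f(c(X))$, then minimax. It is complete when $\X$ and $\F$ are finite, which is the only setting in which the paper runs it.

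For the proposition as stated, with $\F$ and $\X$ possibly infinite, the step you flag as the main obstacle is a genuine gap, and neither remedy closes it as written. A minimax theorem over mixtures of arbitrary measurable $b$-bit protocols against an infinite task set needs compactness or semicontinuity hypotheses that you have not checked. ``Restricting to finitely many decoder behaviors'' is the right instinct, but it must be made precise. Also, your sentence about Yao's principle states the easy direction, public-coin $\Rightarrow$ distributional, which is what the lower bound uses; the upper bound needs the converse, which is exactly the minimax step.

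One way to close the gap:
\begin{enumerate}[label=(\arabic*),leftmargin=2em]
  \item Since $d<2^{d^*+1}$, Haussler's packing bound in the primal gives a finite $\F_0\subseteq\F$ such that every $f\in\F$ has some $g_f\in\F_0$ with $\Pr_D[f(X)\ne g_f(X)]\le\eps$.
  \item Restrict to encoders that factor through the pattern $(g(x))_{g\in\F_0}$. The game between tasks in $\F_0$ and such protocols is then finite, and your cover protocol is an admissible response to each $Q$ on $\F_0$, since $\rho_Q$ depends only on these patterns.
  \item Apply von Neumann's theorem, or the multiplicative-weights argument of \Cref{prop:mw}, which needs only good responses.
  \item Let each $f\in\F$ reuse the decoder of $g_f$. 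This adds at most $\eps$ to the error, which is absorbed by rescaling $\eps$.
\end{enumerate}
The paper sidesteps all of this: \Cref{thm:det-upper} gives a deterministic representation of length at most $4d^*\log(1/\eps)$ for every $D$ and every such $\F$, and any deterministic representation is a public-coin one (\Cref{cor:public-tight}).

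Finally, delete the abandoned drafts from the upper-bound paragraph. The reason you give for discarding the sampled-index idea is also off: $\log n$ bits is well within budget. The actual problem is that $O((d^*/\eps)\log(1/\eps))$ samples from $D$ need not contain any point close to $X$ in the dual metric.
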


This result gives a characterization of the public-coin communication complexity $\CC^{\to}_\pub$ in terms of the dual VC dimension $d^*$.
Moreover, the packing characterization of \citet{molinaro2015amplification} implies that for every $d^{*}\ge1$ and every sufficiently small $\eps$, \emph{some} class $\F$ of dual VC dimension $d^{*}$ and some distribution $D$ require $\Omega(d^{*}\log(1/\eps))$ bits even with shared randomness.
Thus the $O(d^*\log(1/\varepsilon))$ upper bound in \eqref{eq:intro-public-upper} is worst-case optimal up to a universal constant for every $d^* \ge 1$ and sufficiently small $\varepsilon$.

Our focus is on the more challenging deterministic setting \eqref{eq:intro-guarantee}, where prior work provides much weaker upper bounds.
Results from \citet{adams2012uniform} imply that every family $\F$ of finite VC dimension $d$ satisfies $\CC_{\det}^{\to} (\F,D,\varepsilon) < +\infty$ for every distribution $D$ and every $\varepsilon > 0$, but they do not provide a uniform upper bound that holds simultaneously for all distributions $D$. Such a uniform  upper bound can be inferred from the following stronger result:
\begin{proposition}
\label{prop:intro-2}
For every family $\F$ with VC dimension $d \ge 1$ and dual VC dimension $d^* \ge 1$, for every distribution $D$ on the domain $\X$ and every $\varepsilon \in (0,1/4)$, we have
\begin{equation}
\label{eq:intro-suboptimal}
\CC_{\det}^{\to} (\F,D,\varepsilon) \le O(dd^*\log(1/\varepsilon)).
\end{equation}
\end{proposition}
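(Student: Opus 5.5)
Here is the plan. Take an $\eps$-approximation (or $\eps$-net) $S\subseteq\X$ of size $m=O((d/\eps^2)\log(1/\eps))$; a sharper version uses a relative approximation of size $O((d/\eps)\log(1/\eps))$. The encoder partitions $\X$ according to the label patterns of the \emph{dual} class restricted to $S$. Concretely, by the dual Sauer--Shelah lemma, the points of $\X$ induce at most $O(m)^{d^*}$ distinct patterns on any finite set of functions. The intended proof combines uniform convergence, which uses $d$, with a dual shattering count, which uses $d^*$.

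\textbf{Step 1 (a finite cover of $\F$).} By standard VC theory, $\F$ under the pseudometric $\rho_D(f,g)=\Pr_D[f\ne g]$ has an $\eps$-cover $\F_0\subseteq\F$. Haussler's packing bound gives $|\F_0|\le (O(1/\eps))^{O(d)}$. To avoid a dependence on $1/\eps$ in the exponent of the bit count beyond $\log$, we must not simply enumerate $\F_0$. Instead, we encode $X$ by its label pattern $(g(X))_{g\in\F_0}$.

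\textbf{Step 2 (counting patterns).} The dual class restricted to $\F_0$ has VC dimension at most $d^*$. The dual Sauer--Shelah lemma therefore bounds the number of distinct patterns $(g(x))_{g\in\F_0}$ over $x\in\X$ by $(e|\F_0|/d^*)^{d^*}$. Taking logarithms, the encoding length is
\[
d^*\log|\F_0| = O\!\left(d\,d^*\log(1/\eps)\right),
\]
which is exactly \eqref{eq:intro-suboptimal}.

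\textbf{Step 3 (decoding).} Given $f\in\F$, the decoder selects $g\in\F_0$ with $\rho_D(f,g)\le\eps$ and outputs the coordinate $g(X)$ from the received pattern. Its error is $\Pr_D[g(X)\ne f(X)]\le\eps$, so \eqref{eq:intro-guarantee} holds. To get error exactly $\eps$ rather than a constant multiple, the cover is taken at scale $\eps$ (or $\eps/2$), which changes only constants.

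\textbf{Main obstacle.} The main point to check is Step 1: the cover size must be $(1/\eps)^{O(d)}$ uniformly in $D$, with no dependence on $|\X|$. Haussler's packing lemma handles this: any $\eps$-separated subset of $\F$ under $\rho_D$ has size at most $e(d+1)(2e/\eps)^d$, and a maximal separated set is a cover. If $D$ has infinite support, we need measurability or existence of a maximal packing, which Zorn's lemma (or finiteness of packings) provides. The dual count in Step 2 is purely combinatorial and holds for every finite subfamily, so it requires no distributional assumption.
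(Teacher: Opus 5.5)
Your Steps 1--3 follow the paper's argument: a proper $\eps$-cover $\F_0\subseteq\F$ of size $(C/\eps)^{d}$ from Haussler's packing bound, an encoder that sends the index of the realized pattern $(g(X))_{g\in\F_0}$, a dual Sauer--Shelah count of at most $\sum_{i\le\min\{|\F_0|,d^{*}\}}\binom{|\F_0|}{i}$ patterns (which also covers the trivial case $|\F_0|<d^{*}$), and a decoder that answers each $f$ with a cover member within $\eps$ of it. Your opening paragraph about an $\eps$-approximation $S\subseteq\X$ plays no role in this argument and should be dropped, since the rest of the proof is complete and correct without it.
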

This result follows from taking an $\varepsilon$-covering of $\F$ and then using the values of these functions to form the encoding. It is known that $d = O(2^{d^*})$ holds for all families $\F$, so \Cref{prop:intro-2} implies a uniform upper bound $O(2^{d^*}d^*\log(1/\varepsilon))$ on $\CC_{\det}^{\to}$ that holds for all distributions $D$. However, this bound is much weaker than its randomized counterpart in \eqref{eq:intro-public-upper}.

\subsection{Our Contributions}
\label{sec:main-result}
We show that the same $O(d^*\log(1/\varepsilon))$ upper bound for public-coin randomized communication complexity also holds in the deterministic setting.
This removes the additional factor of $d$ from  \eqref{eq:intro-suboptimal}.
Thus a single deterministic encoder-decoder pair attains the same bound while satisfying the error guarantee \eqref{eq:intro-guarantee} for every downstream function.

\newcommand{\mainthmbody}{Let $\F\subseteq\{0,1\}^{\X}$ be a class of measurable functions with dual VC dimension $d^{*}$, where $1\le d^{*}<\infty$, and let $0<\eps\le1/4$.
  For every distribution $D$ on $\X$, there is a deterministic universal representation of $\F$ with error at most $\eps$, in the sense of \eqref{eq:intro-guarantee}, of length
  \[
    O\!\left(d^{*}\log\frac1\eps\right).
  \]
  It is determined by the values of $O((d^{*}/\eps)\log(1/\eps))$ members of $\F$, and \Cref{alg:greedy} constructs it.
}
\begin{theorem}[Short deterministic universal representations]
  \label{thm:main}
\mainthmbody
\end{theorem}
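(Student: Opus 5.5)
The plan is to build the encoder greedily. We maintain a finite list $f_1,\dots,f_t\in\F$ and the partition $\Pcal_t$ of $\X$ into cells, where each cell is a set of points sharing one label pattern $(f_1(x),\dots,f_t(x))$. The decoder, given a cell $C$ and a task $f$, outputs the majority value of $f$ on $C$ under $D$. Its error for $f$ is therefore
\[
e_t(f)=\sum_{C\in\Pcal_t} D(C)\min\{D(f=1\mid C),D(f=0\mid C)\}.
\]
At each step, if some $f$ has $e_t(f)>\eps$, we add such an $f$; if needed we take one attaining at least half the supremum, so measurability and infinite $\F$ cause no trouble. When no such $f$ exists we stop, and the representation is the index of the cell containing $X$. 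Two things must be shown: the number of steps is $T=O((d^*/\eps)\log(1/\eps))$, and the number of cells that matter has logarithm $O(d^*\log(1/\eps))$. The total length is not $T$ bits.

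\textbf{Potential argument.} Take the potential $\Phi_t=H(\Pcal_t(X))$, the Shannon entropy of the cell label of $X\sim D$. Adding $f$ raises it by $H(f(X)\mid \Pcal_t(X))=\sum_C D(C)\,h_2(p_C)$ with $p_C=D(f=1\mid C)$. Since $h_2(p)\ge 2\min(p,1-p)$, each greedy step gains at least $2e_t(f)>2\eps$ bits.

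\textbf{Upper bound on the potential.} This is where dual VC dimension enters. By the dual Sauer--Shelah lemma, $t$ functions of dual VC dimension $d^*$ realize at most $\sum_{i\le d^*}\binom{t}{i}\le (et/d^*)^{d^*}$ distinct label patterns on $\X$. Hence $\Phi_t\le d^*\log_2(et/d^*)$. Combining with the gain, $2\eps t< d^*\log_2(et/d^*)$ while the process runs. This forces termination at $T=O((d^*/\eps)\log(1/\eps))$ steps, which gives the second claim of \Cref{thm:main}.

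\textbf{Length of the encoding.} The final partition has at most $(eT/d^*)^{d^*}=(O(\eps^{-1}\log(1/\eps)))^{d^*}$ cells. Its logarithm is $d^*\log(1/\eps)+d^*\log\log(1/\eps)+O(d^*)=O(d^*\log(1/\eps))$ bits, so we encode the cell index directly. The doubly-logarithmic term is absorbed by the $\log(1/\eps)$ factor.

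The only delicate point I expect is making the potential bound and the per-step gain mesh without losing a factor. For instance, I must check that the gain is measured against the error of the current majority decoder, not of an optimal one; these coincide by definition. I also need that Sauer--Shelah is applied to the dual class restricted to the chosen $t$ functions, which is immediate since dual VC dimension is monotone under restriction. If the stated $\eps\le 1/4$ regime makes $\log(1/\eps)$ small, the constants handle it, since $\log(1/\eps)\ge 2$ bits there.
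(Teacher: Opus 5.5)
Your proposal is correct and follows the paper's proof: the same greedy rule of adding any $f$ with majority error above $\eps$, the same entropy potential whose increment $H(f(X)\mid Z_{\mathbf g}(X))\ge 2\minerr_{\mathbf g}(f)$ comes from $h_2(p)\ge 2\min\{p,1-p\}$, the same dual Sauer--Shelah cap $d^{*}\log(et/d^{*})$ forcing termination, and the same encoding by the index of the realized pattern. Two small cleanups. First, drop the ``half the supremum'' device: the loop condition already supplies an $f$ with error strictly above $\eps$, which is exactly what gives a gain strictly above $2\eps$, whereas an $f$ attaining only half the supremum could have error at most $\eps$. Second, for $t<d^{*}$ use the trivial bound of $2^{t}\le 2^{d^{*}}$ patterns, since $(et/d^{*})^{d^{*}}$ is only valid once $t\ge d^{*}$.
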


Our proof uses a novel, very simple greedy construction: add a function $g\in \F$ whenever its value cannot yet be predicted with error at most $\eps$ from the functions already selected. The encoder stores an index of the joint label pattern $(g_1(X),\ldots,g_t(X))$ of the selected functions, focusing only on the patterns that actually occur.
The dual VC dimension bounds the number of bits needed to encode these patterns via Sauer's lemma.
A key idea in our analysis is to use the entropy of the pattern distribution as a potential function. On one hand, each selected function $g_i$ increases this potential by at least a fixed amount, so the potential grows at least \emph{linearly} in the number $t$ of selected functions. On the other hand, the entropy potential is trivially upper bounded by the logarithm of the total number of occurring label patterns. This upper bound grows \emph{logarithmically} in $t$ by Sauer's lemma. Concretely, when $t \ge d^*$,
\[
2\varepsilon t < H(g_1(X),\ldots,g_t(X))
\le d^*\log(et/d^*).
\]
Therefore, the entire process must end before $t$ gets too large to allow the above inequality. This gives an upper bound on the total number of selected functions as well as the communication complexity. See \Cref{sec:tight} for our full proof of \Cref{thm:main}.

The simplicity of this construction is a major advantage of our result over \Cref{prop:intro-1}, where the encoder-decoder pair comes from a non-constructive minimax argument (\Cref{sec:public}).
Our proof explicitly constructs a deterministic encoder-decoder pair using a simple greedy algorithm.

The previous public-coin lower bounds discussed earlier naturally apply to our deterministic setting. They show that our upper bound is optimal in the worst case over classes and distributions.
In particular, the packing characterization of \citet{molinaro2015amplification} gives classes requiring $\Omega(d^{*}\log(1/\eps))$ bits even in the easier public-coin setting.
For every fixed family $\F$, the lower bound in \Cref{prop:intro-1} gives a distribution $D$ under which at least $(1-h_2(\eps))d^{*}$ bits are necessary.
We include self-contained proofs of these lower bounds in \Cref{sec:instance-lower,sec:threshold}.

We also study how to construct universal representations when $D$ is unknown.
In the \emph{shared-sample} model of \Cref{sec:samples}, encoder and decoder are built from $n$ unlabeled samples of $D$ and must then work on a fresh input.
The next theorem gives two guarantees: the first retains our $O(d^{*}\log(1/\eps))$ length bound, while the second learns a representation with the length bound of \Cref{prop:intro-2} using fewer samples.
Here $d$ and $d^{*}$ denote the primal and dual VC dimensions, as above.

\newcommand{\samplethmbody}{Let $\F\subseteq\{0,1\}^{\X}$ have primal VC dimension $d$ and dual VC dimension $d^{*}$, with $1\le d,d^{*}<\infty$, and let $0<\eps\le1/4$ and $0<\delta<1$.
  Given independent samples from an unknown distribution $D$ on $\X$, one can learn, with probability at least $1-\delta$, a universal representation of $\F$ with error at most $\eps$ under $D$ in either of the following ways.
  \begin{enumerate}[label=(\alph*),leftmargin=2em]
    \item \emph{Optimal length.} \Cref{alg:sample-greedy} outputs a representation of length $O(d^{*}\log(1/\eps))$ from
          \[
            \left(\frac{C}{\eps}\right)^{d^{*}+1}\left(\log\frac1\eps\right)^{d^{*}+2}+\frac{C}{\eps}\log\frac1\delta
          \]
          samples, where $C$ is an absolute constant.
    \item \emph{Few samples.} \Cref{alg:cover} outputs a representation of length $O(d\,d^{*}\log(1/\eps))$ from
          \[
            O\!\left(\frac{d\log(1/\eps)+\log(1/\delta)}{\eps}\right)
          \]
          samples.
  \end{enumerate}
}
\begin{theorem}[Learning universal representations from samples]
  \label{thm:samples-intro}
\samplethmbody
\end{theorem}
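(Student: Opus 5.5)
Part (a) runs the greedy construction of \Cref{thm:main} on the empirical distribution. Part (b) is a standard cover argument.

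\emph{Plan for (a).} Draw a sample $S$ of size $n$ and let $\hat D$ be its empirical distribution. Run the greedy procedure of \Cref{thm:main} on $\hat D$, but with threshold $\eps/2$: add $g\in\F$ whenever the best predictor of $g$ from the current pattern $(g_1,\ldots,g_t)$ has empirical error above $\eps/2$. The entropy potential argument is valid for any distribution, so it stops after $T=O((d^*/\eps)\log(1/\eps))$ selections. The encoding is the index of the realized pattern among the patterns that occur, of which there are at most $(eT/d^*)^{d^*}$ by Sauer's lemma applied to the dual class. This gives length $O(d^*\log(1/\eps))$ deterministically, whatever the sample.

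The decoder for $f$ uses the empirical majority vote of $f$ within each pattern cell. It only remains to show that this majority decoder has true error at most $\eps$ for every $f$. This needs uniform convergence over the class of events
\[
\bigl\{x:\ (g_1(x),\ldots,g_T(x))\in A,\ f(x)\neq b_A\bigr\},
\]
where $g_1,\ldots,g_T,f\in\F$, $A$ is a cell, and $b_A$ is a labeling of the cells. The difficulty is that the $g_i$ are chosen using $S$, so a plain VC argument over all $T$-tuples costs roughly $dT$, which is too many samples and also brings in the primal dimension.

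\emph{Main obstacle.} We need to bound the error deviation uniformly, and we need the bound to depend only on $d^*$ and $\eps$. The first step is to restrict to the $m\le (eT/d^*)^{d^*}$ cells that actually occur. Within these, the decoder's error is a sum over cells of the minority mass. The plan is to use a relative (multiplicative) Chernoff bound per cell, together with a compression-scheme or ghost-sample argument. The greedy output is determined by $T$ sample points plus the cell structure, so it is enough to control a union over these data-dependent choices.

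The target sample size $(C/\eps)^{d^*+1}(\log 1/\eps)^{d^*+2}$ is roughly $m\cdot T\cdot\log$ divided by $\eps$. This suggests the following accounting. Each of the $m$ cells is estimated to additive accuracy about $\eps/m$ in mass, weighted appropriately, with a union bound over about $T$ greedy rounds. Cells of small total mass contribute at most $\eps/4$ and can be ignored. I would also check that the stopping condition transfers from $\hat D$ to $D$: after stopping, every $f$ is empirically $\eps/2$-predictable, and the concentration should give true error at most $\eps$. The additive $(C/\eps)\log(1/\delta)$ term covers the confidence. Getting a bound independent of $d$ for the class of $f$'s restricted to cells is the delicate step. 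Here I would use that the relevant events are in effect determined by the dual shatter function, since only finitely many cell-labelings are realized.

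\emph{Plan for (b).} Take $n=O((d\log(1/\eps)+\log(1/\delta))/\eps)$ samples. By the relative-deviation VC bound, with probability $1-\delta$ every symmetric difference $f\triangle f'$ with $f,f'\in\F$ has true mass at most $\eps/2$ whenever its empirical mass is at most $\eps/8$. This class has VC dimension $O(d)$. Next pick a maximal $\eps/8$-separated set $g_1,\ldots,g_N$ in empirical $L_1$. By Haussler packing, $N\le(C/\eps)^{O(d)}$. Encode $x$ by the pattern $(g_1(x),\ldots,g_N(x))$, compressed through the dual Sauer bound to $d^*\log(eN/d^*)=O(dd^*\log(1/\eps))$ bits. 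The decoder for $f$ outputs $g_j(x)$ for the nearest $g_j$, which is recoverable from the pattern. By the uniform convergence above, its true error is at most $\eps/2\le\eps$.
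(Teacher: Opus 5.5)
\textbf{Part (b).} Your plan is essentially the paper's proof: a maximal $\eps/8$-separated set under empirical disagreement, Haussler's packing bound, a relative-deviation bound for pairwise disagreement events of VC dimension $O(d)$, and dual Sauer--Shelah to count patterns.

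\textbf{Part (a): the gap.} Your construction and length bound are correct. The step you yourself flag is never proved: transferring the empirical error bound to $D$. Phrases such as ``a compression-scheme or ghost-sample argument'' and ``a per-cell Chernoff bound with a union over data-dependent choices'' do not amount to an argument. The claim that the greedy output is determined by $T$ sample points is unsupported. Each selection is triggered by an empirical majority error computed from the entire sample, so there is no evident compression scheme. Per-cell concentration plus a union bound also cannot cover the infinitely many $f\in\F$ and the data-dependent $g_i$. Your sample-size accounting is off as well. With $m\approx(C\log(1/\eps)/\eps)^{d^*}$ and $T\approx d^*\log(1/\eps)/\eps$, the estimate $m\cdot T/\eps$ has exponent $d^*+2$ on $1/\eps$, not $d^*+1$.

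\textbf{The missing idea.} The direct route you dismissed actually works, and the event class you wrote down is the right one. Take all events $\{x:\beta(g_1(x),\ldots,g_{t'}(x))\neq f(x)\}$ with $t'\le T$, $f,g_1,\ldots,g_{t'}\in\F$, and $\beta$ an arbitrary Boolean readout. This class contains every error event the learner could output, however the $g_i$ and majority labels depend on $S$, so data dependence costs nothing.

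\emph{VC dimension.} On $m$ points the class has growth function at most $(T+1)\Gprim(m)^{T+1}2^{\Gdual(T)}$. The reason is that once the restrictions of $f,g_1,\ldots,g_{t'}$ are fixed, $\beta$ acts on at most $\Gdual(t')$ realized patterns. Hence its VC dimension is $V=O(dT\log B+\Gdual(T))$ with $B=2+dT+\Gdual(T)$.

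\emph{Concentration.} A relative-deviation bound with $n\ge C(V\log(1/\eps)+\log(1/\delta))/\eps$ then finishes the proof. With the factor-two form $D(A)\le 2\widehat D_S(A)+\eps/2$, you should run the greedy procedure at target $\eps/4$ rather than $\eps/2$.

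\emph{Why the $dT$ term is harmless.} It contributes about $d\,(d^{*})^{2}\log^3(1/\eps)/\eps^2$ samples. Assouad's inequality $d<2^{d^*+1}$ then absorbs $d$ into $(C/\eps)^{d^*+1}$. No argument can avoid $d$ more intrinsically. For a fixed partition and fixed cell labels $b$, the events $\{f\neq b\}$, $f\in\F$, already have VC dimension exactly $d$, so the dual shatter function cannot help there.

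\emph{Where the exponent comes from.} The dominant term is $\Gdual(T)\le(C\log(1/\eps)/\eps)^{d^*}$, which counts the $2^{\Gdual}$ possible readouts. Multiplying by the $\log(1/\eps)/\eps$ factor from the relative bound gives the stated $(C/\eps)^{d^*+1}(\log(1/\eps))^{d^*+2}$.
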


The two parts trade length against samples.
Part (a) keeps the optimal length $O(d^{*}\log(1/\eps))$ of \Cref{thm:main}, at the price of a sample size exponential in $d^{*}$ (\Cref{thm:sample-optimal}).
Part (b) uses only as many samples as are known to suffice for PAC learning a single function of $\F$ \citep{vapnik1971uniform,anthony1993result}, at the price of a factor $d$ in the length.
Whether one can have both is open; even for thresholds on a line, learning a representation of near-optimal length requires $\Omega((1/\eps)\log(1/\eps))$ samples (\Cref{thm:sample-lower}).
For thresholds, this sample lower bound is matched at constant confidence by a learner with length $\log(1/\eps)+O(1)$ (\Cref{cor:threshold-samples}).
\subsubsection{Additional Results}
\paragraph{Refined Complexity Bounds.}
We sharpen our upper bound using the dual growth function, which counts the label patterns realized by selected functions, and extend the classical entropy lower bound to obtain class-specific bounds under suitable input distributions (\Cref{thm:growth-upper,prop:growth-lower}).
We also characterize zero-error complexity for finitely supported distributions and give matching worst-case bounds for finite function classes (\Cref{sec:interpolation}).
For representations generated by members of $\F$, we prove a worst-case $\Omega(d^{*}/\eps)$ lower bound on the number of selected functions, within a logarithmic factor of our construction (\Cref{sec:feature-count}).

\paragraph{Reconstruction objectives.}
On finite input domains, we show that every maximizer of joint entropy over sets of at most $t$ functions from $\F$ gives a universal representation when $t$ is sufficiently large (\Cref{sec:one-shot}).
We connect this objective to optimal reconstruction log loss (\Cref{sec:reconstruction}).
We also show that minimizing reconstruction log loss separately for each input coordinate can fail to preserve all tasks under a selection budget, even when the dual VC dimension is two (\Cref{sec:mi}).

\paragraph{Learning interpretations.}
We characterize universality by the ability to achieve prediction error at most $\eps$ above the best error in $\F$ for every downstream labeling (\Cref{prop:agnostic}).
We also prove an equivalence, up to a factor of two in error, with multicalibration of the conditional-mean predictor $\E[Y\mid\Enc(X)]$ under every labeling.
This equivalence uses error averaged over prediction values and weighted by group probability (\Cref{sec:multicalibration}).

\paragraph{Extensions.}
We extend our deterministic bound to tasks and predictions in $[0,1]$ under expected absolute loss, obtaining $O(d^{*}_{\mathrm P}\log(1/\eps))$ bits, where $d^{*}_{\mathrm P}$ is the pseudo-dimension of the dual class (\Cref{sec:real}).
We also recall the public-coin characterization of \citet{molinaro2015amplification} and bound the number of protocols needed in a randomized mixture (\Cref{sec:public}), give a two-message protocol (\Cref{sec:interaction}), and describe a halfspace representation based on the partition theorem of \citet{matousek1992efficient} (\Cref{sec:halfspaces}).

\paragraph{Experiments and open questions.}
We test our greedy construction on threshold products and explore finite-family saturation and accuracy--length tradeoffs in reconstruction-trained sparse autoencoders (\Cref{sec:experiments}).
We give experimental configurations in \Cref{app:experiments} and discuss open questions in \Cref{sec:discussion}.

\subsection{Related work}
\label{sec:related}

\paragraph{One-way communication under product distributions.}
The relevant communication matrix is $M(x,f)=f(x)$, with the input holder sending a message to the function holder.
Its rows are the evaluation functions $\ev_x:f\mapsto f(x)$, so its row VC dimension is exactly $d^{*}$.
\citet{kremer1999randomized} characterize constant-error one-way distributional complexity, maximized over product distributions, by this dimension.
\citet{molinaro2015amplification} give the more precise characterization at varying error: complexity is controlled, up to constant factors and a constant rescaling of the error, by the logarithm of the largest packing number of the rows, maximized over distributions on columns.
For our matrix, the underlying pseudometric is
\[
  \rho_Q(x,x')=\Pr_{f\sim Q}[f(x)\ne f(x')],
\]
where $Q$ is a distribution on tasks.
Combining their characterization with VC packing bounds \citep{haussler1995sphere} gives $O(d^{*}\log(1/\eps))$ bits.
\citet{bottesch2015correlation} independently obtain logarithmic error amplification for one-way communication under product distributions.
Thus both the constant-error role of $d^{*}$ and the logarithmic accuracy dependence of the public-coin upper bound are prior results.

The quantifiers explain the connection to universal representations.
For a fixed $D$, the product-distribution statement gives, for each $Q$, a deterministic protocol with small error averaged over $D\times Q$; the protocol may depend on $Q$.
For finite classes and domains, minimax \citep{yao1977probabilistic} converts these protocols into one public-coin protocol whose error, averaged over its seed and $X\sim D$, is at most $\eps$ for every $f$.
This does not select a single seed for which the guarantee holds for all tasks.
The public-coin statements in \Cref{sec:public} are included as consequences of this literature, especially the packing characterization of \citet{molinaro2015amplification}.
Their greater-than example also supplies the one-dimensional logarithmic lower bound.
Our contribution is a single deterministic representation attaining this length bound and generated by functions from the class; to the best of our knowledge, this guarantee has not appeared previously.
The matching lower bound is a worst-case statement over classes and distributions, not a claim that every fixed class requires $\Omega(d^{*}\log(1/\eps))$ bits.

\paragraph{Correlated inputs and general joint distributions.}
Allowing an arbitrary joint distribution $\mu$ on $(X,f)$ changes the comparison: a distributional protocol minimizes error under $\mu$, whereas our guarantee fixes one input distribution and controls each task separately.
\citet{jain2009new} extend the VC upper bound beyond product distributions, with a dependence on the mutual information between the two inputs; at constant error their bound specializes to $O((I_\mu(X;f)+1)d^{*})$.
\citet{bottesch2015correlation} study this interpolation systematically and improve the accuracy dependence of the correlated one-way bound.
\citet{sherstov2010communication} shows that restricting hard distributions to be product can dramatically reduce communication complexity, including in the one-way model.
Consequently, the product-distribution characterization should not be read as a characterization of worst-case randomized communication over arbitrary input pairs.
Nor does small error under one correlated distribution imply small error for every task under the common marginal $D$.

\paragraph{Fixed-distribution covers and representative domains.}
Approximation by finite covers in the disagreement metric $\Pr_D[f(X)\ne g(X)]$ goes back to \citet{benedek1991learnability}.
For our purpose, a proper $\eps$-cover $G\subseteq\F$ gives a direct construction: partition $\X$ by the vector $(g(x))_{g\in G}$ and predict each $f$ using a nearby cover member.
Every task then has error at most $\eps$ under $D$.
Haussler's packing bound gives $|G|\le(C/\eps)^d$, and dual Sauer--Shelah bounds the number of realized patterns by $\sum_{i=0}^{\min\{|G|,d^{*}\}}\binom{|G|}{i}$.
The resulting code has length $O(d\,d^{*}\log(1/\eps))$.

This partition is also the \emph{representative-domain} construction used by \citet{bassily2020private} for private query release assisted by public data.
They retain one input per label pattern of a cover, bound the number of representatives using dual VC dimension, and map data points to their representatives.
\citet{bendavid2023private} use the same cover and representative-domain construction in private distribution learning with public data.
Our cover-based sample bound should therefore be viewed as an application of established tools.
The entropy-greedy construction instead selects functions according to their remaining error on the current partition, removing the extra factor $d$ from the representation length.

\paragraph{Finite approximation by partitions.}
\citet{adams2012uniform} prove finite approximability of VC classes by partitions under every probability measure.
They control the total mass of cells on which a set and its complement both have positive measure, a stronger condition than small minority mass.
Their theorem establishes finiteness without an explicit bound on the number of cells; our upper bound controls this number for the majority-error criterion.

We discuss broader connections to learning and representation models in \Cref{app:additional-related}.

\section{Preliminaries}
\label{sec:prelims}

\paragraph{Notation and measurability.}
For an integer $n\ge1$, write $[n]=\{1,\ldots,n\}$.
All logarithms are base two unless written as $\ln$.
Let $(\X,\Sigma,D)$ be a probability space, and let $\F\subseteq\{0,1\}^{\X}$ be a nonempty class of measurable functions.
For results involving samples, we assume the event classes and learning rules satisfy the usual measurability conditions for the stated uniform-convergence bounds.
Finite classes on finite domains satisfy these conditions automatically.

\subsection{VC dimension and the dual class}
\label{sec:dual}

VC dimension measures how many inputs can be labeled independently by functions in the class.
Dual VC dimension reverses the roles of inputs and functions.

\begin{definition}[VC dimension]
  \label{def:vc-dimension}
  A finite set of inputs $\{x_1,\ldots,x_n\}\subseteq\X$ is \emph{shattered} by $\F$ if
  \[
    \bigl\{(f(x_1),\ldots,f(x_n)):f\in\F\bigr\}=\{0,1\}^n.
  \]
  The \emph{VC dimension} $d=\VCdim(\F)$ is the supremum of the sizes of shattered sets, with value $+\infty$ if these sizes are unbounded.
  We also call $d$ the \emph{primal VC dimension} to distinguish it from the dual dimension.
\end{definition}

For each input $x\in\X$, define its \emph{evaluation function}
\[
  \ev_x:\F\to\{0,1\},\qquad \ev_x(f)=f(x).
\]
An evaluation function records the answers to all tasks at a fixed input.
The collection of these functions is a class on the domain $\F$ itself.

\begin{definition}[Dual class and dual VC dimension]
  \label{def:dual-vc-dimension}
  The \emph{dual class} of $\F$ is $\F^{*}=\{\ev_x:x\in\X\}$, and its VC dimension $d^{*}=\VCdim(\F^{*})$ is the \emph{dual VC dimension} of $\F$.
  Equivalently, $d^{*}$ is the supremum of the sizes of families $\{f_1,\ldots,f_t\}\subseteq\F$ that are \emph{dual-shattered}, meaning that
  \[
    \bigl\{(f_1(x),\ldots,f_t(x)):x\in\X\bigr\}=\{0,1\}^{t}.
  \]
\end{definition}

Primal shattering fixes inputs and varies the function; dual shattering fixes functions and varies the input.
In the evaluation matrix $M(x,f)=f(x)$, the rows are the members of $\F^{*}$ and the columns are the members of $\F$.
\citet{assouad1983densite} showed that
\[
  d^{*}<2^{d+1},\qquad d<2^{d^{*}+1}.
\]
Thus one dimension is finite if and only if the other is, although their values can differ substantially.
Throughout the binary results we assume $1\le d^{*}<\infty$.
If $d^{*}=0$, every function is constant on $\X$.
Statements involving sample bounds in terms of $d$ also assume $d\ge1$.

\subsection{Dual growth}
\label{sec:dual-growth}

VC dimension records when all binary patterns can occur.
The growth function keeps track of the number of patterns at every size, including beyond the dimension.
For a finite list $\mathbf g=(g_1,\ldots,g_t)$ of members of $\F$, define the \emph{pattern map}
\[
  Z_{\mathbf g}(x)=(g_1(x),\ldots,g_t(x)).
\]

The \emph{dual growth function} is
\[
  \Gdual(t)=\sup_{g_1,\ldots,g_t\in\F}
  \bigl|\{Z_{\mathbf g}(x):x\in\X\}\bigr|,
  \qquad \Gdual(0)=1.
\]
Lists may repeat functions, so the definition also covers $t>|\F|$ when the class is finite.
The supremum is a maximum because the number of patterns is an integer between $1$ and $2^t$.
For comparison, the \emph{primal growth function} is
\[
  \Gprim(n)=\sup_{x_1,\ldots,x_n\in\X}
  \bigl|\{(f(x_1),\ldots,f(x_n)):f\in\F\}\bigr|,
  \qquad \Gprim(0)=1.
\]
For positive integers $t$, $\Gdual(t)=2^t$ exactly when $t\le d^{*}$, and the corresponding statement holds for $\Gprim$ and $d$.

The Sauer--Shelah lemma \citep{sauer1972density,shelah1972combinatorial}, applied to the dual class, gives
\begin{equation}
  \label{eq:sauer}
  \Gdual(t)\le\sum_{i=0}^{\min\{t,d^{*}\}}\binom ti,
  \qquad
  \Gdual(t)\le(et/d^{*})^{d^{*}}\quad(t\ge d^{*}).
\end{equation}
Similarly,
\[
  \Gprim(n)\le\sum_{i=0}^{\min\{n,d\}}\binom ni,
  \qquad
  \Gprim(n)\le(en/d)^d\quad(n\ge d\ge1).
\]

\subsection{Examples of dual dimension and growth}
\label{sec:dimension-examples}

The following examples from basic function classes illustrate what the dual parameters measure.

\begin{example}[Dual VC dimension of common classes]
  \label{ex:classes}
  \leavevmode
  \begin{enumerate}[label=(\roman*),leftmargin=2em]
    \item \emph{Thresholds.}
          For $\X=[N]$ with $N\ge2$ and $\F=\{\1\{x\le a\}:a\in[N]\}$, the functions are nested, so no two are dual-shattered and $d^{*}=1$; also $d=1$.
          A list of $t$ thresholds cuts the line into at most $t+1$ nonempty intervals, and $\Gdual(t)=\min\{t+1,N\}$.
    \item \emph{Coordinates and parities.}
          For $r\ge1$ and $\X=\{0,1\}^{r}$, both the coordinate class $\{x\mapsto x_i:i\in[r]\}$ and the parity class $\{x\mapsto w^{\top}x\bmod2:w\in\{0,1\}^{r}\}$ have $d^{*}=r$.
          The $r$ coordinate functions are dual-shattered, while $2^r$ inputs cannot realize all patterns on $r+1$ functions.
          Both classes have $\Gdual(t)=2^{\min\{t,r\}}$.
    \item \emph{All labelings.}
          If $\X$ has $M\ge1$ elements and $\F=\{0,1\}^{\X}$, then $\Gdual(t)=\min\{2^t,M\}$ and $d^{*}=\lfloor\log M\rfloor$, while $d=M$.
          The upper bound on growth counts inputs and binary patterns; it is attained by assigning distinct patterns to as many inputs as possible.
    \item \emph{Sparse inputs.}
          For $\X=\{x\in\{0,1\}^{W}:\|x\|_0\le k\}$ with $1\le k\le W$ and the coordinate class, $d^{*}=k$.
          Any $k$ coordinates realize all patterns, whereas $k+1$ coordinates cannot all equal one.
          Writing $s=\min\{t,W\}$, the dual growth is exactly $\Gdual(t)=\sum_{i=0}^{\min\{k,s\}}\binom si$.
  \end{enumerate}
\end{example}

The sparse-input example shows how a bound on the number of active coordinates yields a small dual VC dimension even when the number of available coordinates is large.
Rare activation alone does not give this conclusion: independent binary coordinates with activation probabilities strictly between zero and one still have full support and dual VC dimension $W$.

\subsection{Entropy facts}
\label{sec:entropy}

For $p\in[0,1]$, write $h_2(p)=-p\log p-(1-p)\log(1-p)$, with $h_2(0)=h_2(1)=0$.
For finite-valued random variables, $H(U)$ denotes Shannon entropy in bits and $H(U\mid V)$ denotes conditional entropy.
We use the following standard facts \citep{cover2006elements}.

\begin{fact}[Entropy facts]
  \label{fact:entropy}
  \leavevmode
  \begin{enumerate}[label=(\roman*),leftmargin=2em]
    \item $H(U,V)=H(U)+H(V\mid U)$ and $H(U)\le\log|\range(U)|$.
    \item Conditioning cannot increase entropy: $H(U\mid V,W)\le H(U\mid V)\le H(U)$.
    \item $h_2(p)\ge2\min\{p,1-p\}$ for every $p\in[0,1]$.
    \item For integers $n\ge1$ and $0\le\alpha\le1/2$, $\sum_{j=0}^{\lfloor\alpha n\rfloor}\binom nj\le2^{h_2(\alpha)n}$.
  \end{enumerate}
\end{fact}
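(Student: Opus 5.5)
These facts are standard, and the plan is to verify each one directly from the definition of Shannon entropy, using the finiteness of ranges throughout. For (i), I will write $H(U,V)=-\sum_{u,v}p(u,v)\log p(u,v)$ and substitute $p(u,v)=p(u)\,p(v\mid u)$. The logarithm then splits into two terms, which are $H(U)$ and $H(V\mid U)$. For the bound $H(U)\le\log|\range(U)|$, I will apply Jensen's inequality to the concave function $\log$: we have $H(U)=\E[\log(1/p(U))]\le\log\E[1/p(U)]$, and $\E[1/p(U)]$ equals the number of values of positive probability, which is at most $|\range(U)|$.

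For (ii), I will first prove $H(U\mid V)\le H(U)$. Writing $H(U)-H(U\mid V)$ as the Kullback--Leibler divergence between $p(u,v)$ and $p(u)p(v)$, it is nonnegative by Gibbs' inequality, which is again Jensen for $\log$. For the conditional version, I fix each value $v$ of $V$ and apply the unconditional statement to the conditional law given $V=v$. This gives $H(U\mid V=v,W)\le H(U\mid V=v)$, and averaging over $v$ yields $H(U\mid V,W)\le H(U\mid V)$.

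For (iii), by the symmetry $h_2(p)=h_2(1-p)$ it suffices to treat $p\in[0,1/2]$. I will use that $h_2$ is concave, since $h_2''(p)=-1/(p(1-p)\ln 2)<0$, with $h_2(0)=0$ and $h_2(1/2)=1$. A concave function lies above its chord on $[0,1/2]$, and this chord is exactly the line $2p$.

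For (iv), the case $\alpha=0$ is trivial because the sum is $1=2^0$. For $0<\alpha\le1/2$, I will use the binomial expansion
\[
1=(\alpha+(1-\alpha))^n\ge\sum_{j\le\alpha n}\binom nj\alpha^j(1-\alpha)^{n-j}.
\]
Since $\alpha/(1-\alpha)\le1$ and $j\le\alpha n$, each term satisfies $\alpha^j(1-\alpha)^{n-j}=(1-\alpha)^n(\alpha/(1-\alpha))^{j}\ge(1-\alpha)^n(\alpha/(1-\alpha))^{\alpha n}=\alpha^{\alpha n}(1-\alpha)^{(1-\alpha)n}$. This last quantity equals $2^{-h_2(\alpha)n}$, so rearranging gives the claim. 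The only mildly delicate point is this monotonicity step, which requires $\alpha\le1/2$; nothing else should present an obstacle.
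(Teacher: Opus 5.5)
Your proof is correct and follows the same route as the paper. The paper only cites the standard textbook arguments for (i), (ii) and (iv), which is what you reproduce, and it justifies (iii) exactly as you do: concavity of $h_2$ on $[0,1/2]$, the chord $2p$ from $(0,0)$ to $(1/2,1)$, and symmetry about $1/2$.
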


Item (iii) follows from concavity of binary entropy on $[0,1/2]$ and symmetry about $1/2$.
It connects prediction error to the entropy gained by revealing one more label: a label that is hard to predict must contribute conditional entropy.

\section{Universal representations}
\label{sec:model}
\suppressfloats[t]

Let $\F$ be a family of binary functions on $\X$, and let $D$ be a distribution on $\X$.
Each function $f\in\F$ describes a prediction task: given an input $x$, predict the label $f(x)$.
We would like to replace $x$ by a short representation that can be used for any of these tasks.
An encoder produces this representation from $x$ before the task is specified.
A decoder later receives the representation and a function $f\in\F$, and uses them to predict $f(x)$.
The same encoder must support every function in the family, and we measure each function's prediction error on inputs drawn from $D$.

\begin{definition}[Universal representation]
  \label{def:protocol}
  A \emph{universal representation} of length $b$ consists of a measurable encoder $\Enc:\X\to\{0,1\}^{b}$ and a decoder $\Dec:\{0,1\}^{b}\times\F\to\{0,1\}$.
  Both maps are deterministic.
  Its \emph{error} is
  \[
    \sup_{f\in\F}\ \Pr_{X\sim D}\bigl[\Dec(\Enc(X),f)\ne f(X)\bigr].
  \]
  It is an $\eps$-universal representation if this error is at most $\eps$.
  The \emph{universal representation complexity} $\CC^{\to}_{\det}(\F,D,\eps)$ is the least length of such a representation, with value $+\infty$ if no finite length suffices.
\end{definition}

The term ``universal'' refers to reusing one encoder across all tasks in $\F$.
For each fixed task, the decoder must be correct with probability at least $1-\eps$ under $D$.
The inputs on which it makes errors may differ from one task to another.

\subsection{Majority decoding and partitions}
\label{sec:representation-partitions}

To understand what can be predicted from a representation, it is useful to group together inputs that receive the same code.
For a fixed task $f$, the decoder makes the same prediction on all inputs in such a group.
We describe these groups using a partition of the input domain.

\begin{definition}[Finite measurable partition]
  \label{def:partition}
  A \emph{finite measurable partition} of $\X$ is a collection $\Pcal=\{A_1,\ldots,A_K\}$ of nonempty measurable subsets of $\X$ such that the sets are pairwise disjoint and their union is $\X$.
  The sets $A_i$ are called the \emph{cells} of the partition.
\end{definition}

Every input therefore belongs to exactly one cell.
An encoder defines a partition by assigning to each code $z$ the set
\[
  A_z=\{x\in\X:\Enc(x)=z\},
\]
and discarding the empty sets.
A $b$-bit encoder has at most $2^b$ such cells.
Knowing the code is equivalent to knowing which cell contains the input.

For a task $f$ and a cell $A$ with $D(A)>0$, the decoder's goal is to predict $f(X)$ given that $X\in A$.
The best prediction is the label with larger probability under this conditional distribution.
We call this \emph{majority decoding}, where majority is measured by probability mass under $D$.
Either label is optimal in a tie, and the prediction on a cell of zero probability does not affect the error.

\begin{samepage}
\begin{lemma}[Majority decoding]
  \label{lem:majority}
  Let $Y\in\{0,1\}$ with $\Pr[Y=1]=p$.
  Then
  \[
    \min_{a\in\{0,1\}}\Pr[Y\ne a]=\min\{p,1-p\},
  \]
  attained by a majority label.
\end{lemma}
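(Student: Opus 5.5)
The plan is to compute the error of each of the two constant predictions directly and then compare them. Since $Y$ takes only the values $0$ and $1$, the event $\{Y\ne a\}$ coincides with $\{Y=1-a\}$ for each $a\in\{0,1\}$. This gives
\[
  \Pr[Y\ne 0]=\Pr[Y=1]=p,
  \qquad
  \Pr[Y\ne 1]=\Pr[Y=0]=1-p .
\]
Taking the smaller of these two numbers yields $\min_{a\in\{0,1\}}\Pr[Y\ne a]=\min\{p,1-p\}$.

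For the attainment claim, we call $a$ a majority label if $\Pr[Y=a]\ge\Pr[Y=1-a]$. This means $a=1$ when $p\ge 1/2$ and $a=0$ when $p\le 1/2$; when $p=1/2$ both labels qualify. For such an $a$, the error $\Pr[Y\ne a]=\Pr[Y=1-a]$ is the smaller of $p$ and $1-p$, so the minimum is achieved.

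There is no real obstacle here. The only point needing care is the tie $p=1/2$, where both labels are optimal, which agrees with the paper's remark that either label may be used in a tie. In the later application to cells $A$ with $D(A)>0$, we will take $p=\Pr_{X\sim D}[f(X)=1\mid X\in A]$.
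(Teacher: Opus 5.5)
Your proof is correct and is the same argument as the paper's: compute that predicting $0$ errs with probability $p$ and predicting $1$ errs with probability $1-p$, then take the smaller. Your explicit handling of the tie at $p=1/2$ is a reasonable addition, consistent with the paper's remark that either label is optimal in a tie.
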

\begin{proof}
  Predicting $0$ incurs error $p$, while predicting $1$ incurs error $1-p$.
\end{proof}
\end{samepage}

On a cell $A$, majority decoding makes errors on the inputs with the less likely label.
The probability mass of those inputs is the smaller of $D(\{x\in A:f(x)=0\})$ and $D(\{x\in A:f(x)=1\})$.
Summing over cells gives the \emph{majority error} of $f$ on a finite measurable partition $\Pcal$:
\[
  \minerr_{\Pcal}(f)=\sum_{A\in\Pcal}
  \min\bigl\{D(\{x\in A:f(x)=0\}),D(\{x\in A:f(x)=1\})\bigr\}.
\]
This is the smallest prediction error achievable for $f$ when the decoder is given only the cell containing the input.
The majority label may depend on $f$, while the partition is shared by all tasks.

A short universal representation therefore corresponds to a partition with few cells on which every task has small majority error.
The following proposition makes the relation between the number of cells and the number of bits precise.

\begin{proposition}[Representations are partitions]
  \label{prop:partition}
  Let $K$ be the least number of cells in a finite measurable partition satisfying $\sup_{f\in\F}\minerr_{\Pcal}(f)\le\eps$, with $K=+\infty$ if no such partition exists.
  Then $\CC^{\to}_{\det}(\F,D,\eps)=\lceil\log K\rceil$, where the right-hand side is interpreted as $+\infty$ when $K=+\infty$.
\end{proposition}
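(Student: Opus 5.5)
We prove the two inequalities $\CC^{\to}_{\det}(\F,D,\eps)\le\lceil\log K\rceil$ and $\CC^{\to}_{\det}(\F,D,\eps)\ge\lceil\log K\rceil$ separately. Each direction converts between encoders and partitions, using \Cref{lem:majority} to identify the optimal decoder on a given partition. When $K=+\infty$ only the lower-bound direction is needed: it shows no finite-length representation exists, so both sides are $+\infty$. Symmetrically, if no finite length suffices, the upper-bound direction shows $K=+\infty$.

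\emph{Upper bound.} Suppose $\Pcal=\{A_1,\ldots,A_K\}$ is a finite measurable partition with $\sup_f\minerr_{\Pcal}(f)\le\eps$. Set $b=\lceil\log K\rceil$, so $2^b\ge K$, and fix an injection $\iota:[K]\to\{0,1\}^b$. Define $\Enc(x)=\iota(i)$ for $x\in A_i$. This map is measurable because each $A_i$ is measurable and the target is finite. For each code $z=\iota(i)$ and task $f$, let $\Dec(z,f)$ be a majority label of $f$ on $A_i$ under $D$, with ties and $D(A_i)=0$ resolved arbitrarily; on codes outside $\iota([K])$, output $0$. By \Cref{lem:majority} applied to the conditional law of $f(X)$ given $X\in A_i$, the error of $f$ is exactly $\sum_i\min\{D(A_i\cap\{f=0\}),D(A_i\cap\{f=1\})\}=\minerr_{\Pcal}(f)\le\eps$. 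If $K=1$, then $b=0$ and the encoder is the empty string, which is consistent with the definition.

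\emph{Lower bound.} Given an $\eps$-universal representation of length $b$, form the cells $A_z=\Enc^{-1}(z)$ and discard the empty ones. These cells are measurable because $\Enc$ is measurable, and there are at most $2^b$ of them, so they form a finite measurable partition $\Pcal$. For every $f$, the decoder predicts a single label $\Dec(z,f)$ on each cell $A_z$. Its error on $A_z$ is therefore at least the minority mass, by \Cref{lem:majority}. Summing over cells gives $\minerr_{\Pcal}(f)\le\Pr[\Dec(\Enc(X),f)\ne f(X)]\le\eps$. Hence $K\le|\Pcal|\le2^b$, so $b\ge\log K$, and since $b$ is an integer, $b\ge\lceil\log K\rceil$.

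The argument has no real obstacle; it is mostly bookkeeping. The points that need care are measurability of the cells, since the definition requires measurable nonempty sets; the conventions for cells of zero probability and ties; the degenerate case $K=1$, $b=0$; and the $+\infty$ cases, which follow from the two inequalities read contrapositively.
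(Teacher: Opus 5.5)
Your proof is correct and follows essentially the same route as the paper. In one direction, a $b$-bit encoder induces a partition into at most $2^b$ cells, on which any decoder's error is at least the majority error by \Cref{lem:majority}; in the other, the cell index of a good partition, together with majority decoding, gives a representation of length $\lceil\log K\rceil$, and your extra bookkeeping on measurability, zero-mass cells, the case $K=1$, and the $+\infty$ conventions fills in details the paper leaves implicit.
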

\begin{proof}
  A $b$-bit encoder defines a partition with at most $2^b$ cells.
  By \Cref{lem:majority}, using the majority label on each cell minimizes the prediction error for each task, so an $\eps$-universal representation gives a partition with $\sup_{f\in\F}\minerr_{\Pcal}(f)\le\eps$.
  Conversely, given such a partition with $K$ cells, the encoder can send the index of the cell containing $x$ using $\lceil\log K\rceil$ bits.
  For each task $f$, the decoder outputs its majority label on that cell and achieves error $\minerr_{\Pcal}(f)\le\eps$.
\end{proof}

This equivalence lets us construct and analyze representations by working with their cells.
For any finite-valued measurable map $Z$ on $\X$, we write $\minerr_Z(f)$ for the majority error on the partition that groups inputs with the same value of $Z$.
\Cref{fig:partition} illustrates the error for one task, and \Cref{sec:views} gives further interpretations in terms of queries, communication, and agnostic learning.

\begin{figure}[t]
  \centering
  \begin{minipage}[c]{0.49\textwidth}
    \centering
    \begin{tikzpicture}[x=0.54cm,y=0.54cm,line cap=round,line join=round]
      \fill[pattern=sparse hatch,pattern color=experimentBlue!80] (3.777,3.597) .. controls (3.329,4.757) .. (3.353,6.000) .. controls (1.676,5.700) .. (0.000,6.000) .. controls (0.400,4.537) .. (0.000,3.074) .. controls (1.009,2.820) .. (1.990,2.471) .. controls (3.097,2.695) .. (3.777,3.597);
      \fill[pattern=sparse hatch,pattern color=experimentBlue!80] (12.000,3.693) .. controls (11.857,4.847) .. (12.000,6.000) .. controls (9.645,6.180) .. (7.291,6.000) .. controls (7.486,5.133) .. (7.110,4.328) .. controls (8.020,3.742) .. (8.432,2.742) .. controls (10.319,2.831) .. (12.000,3.693);
      \fill[pattern=sparse hatch,pattern color=experimentBlue!80] (0.000,0.000) .. controls (1.269,0.400) .. (2.539,0.000) .. controls (2.162,1.213) .. (1.990,2.471) .. controls (1.009,2.820) .. (0.000,3.074) .. controls (0.400,1.537) .. (0.000,0.000);
      \draw[line width=0.75pt] (3.777,3.597) .. controls (3.329,4.757) .. (3.353,6.000);
      \draw[line width=0.75pt] (3.353,6.000) .. controls (1.676,5.700) .. (0.000,6.000);
      \draw[line width=0.75pt] (0.000,6.000) .. controls (0.400,4.537) .. (0.000,3.074);
      \draw[line width=0.75pt] (0.000,3.074) .. controls (1.009,2.820) .. (1.990,2.471);
      \draw[line width=0.75pt] (1.990,2.471) .. controls (3.097,2.695) .. (3.777,3.597);
      \draw[line width=0.75pt] (7.110,4.328) .. controls (7.486,5.133) .. (7.291,6.000);
      \draw[line width=0.75pt] (7.291,6.000) .. controls (5.322,5.631) .. (3.353,6.000);
      \draw[line width=0.75pt] (3.777,3.597) .. controls (4.235,3.239) .. (4.793,3.073);
      \draw[line width=0.75pt] (4.793,3.073) .. controls (5.761,4.052) .. (7.110,4.328);
      \draw[line width=0.75pt] (12.000,3.693) .. controls (11.857,4.847) .. (12.000,6.000);
      \draw[line width=0.75pt] (12.000,6.000) .. controls (9.645,6.180) .. (7.291,6.000);
      \draw[line width=0.75pt] (7.110,4.328) .. controls (8.020,3.742) .. (8.432,2.742);
      \draw[line width=0.75pt] (8.432,2.742) .. controls (10.319,2.831) .. (12.000,3.693);
      \draw[line width=0.75pt] (2.539,0.000) .. controls (4.127,-0.400) .. (5.715,0.000);
      \draw[line width=0.75pt] (5.715,0.000) .. controls (4.871,1.422) .. (4.793,3.073);
      \draw[line width=0.75pt] (1.990,2.471) .. controls (2.162,1.213) .. (2.539,0.000);
      \draw[line width=0.75pt] (5.715,0.000) .. controls (6.645,-0.169) .. (7.575,0.000);
      \draw[line width=0.75pt] (7.575,0.000) .. controls (8.276,1.286) .. (8.432,2.742);
      \draw[line width=0.75pt] (7.575,0.000) .. controls (9.787,-0.400) .. (12.000,0.000);
      \draw[line width=0.75pt] (12.000,0.000) .. controls (11.600,1.847) .. (12.000,3.693);
      \draw[line width=0.75pt] (0.000,0.000) .. controls (1.269,0.400) .. (2.539,0.000);
      \draw[line width=0.75pt] (0.000,3.074) .. controls (0.400,1.537) .. (0.000,0.000);
      \draw[line width=0.8pt] (2.696,4.312) +(-0.14,-0.14) -- +(0.14,0.14) +(-0.14,0.14) -- +(0.14,-0.14);
      \draw[line width=0.8pt] (2.852,5.162) +(-0.14,-0.14) -- +(0.14,0.14) +(-0.14,0.14) -- +(0.14,-0.14);
      \draw[line width=0.8pt] (1.008,3.525) +(-0.14,-0.14) -- +(0.14,0.14) +(-0.14,0.14) -- +(0.14,-0.14);
      \draw[line width=0.8pt] (1.312,4.794) +(-0.14,-0.14) -- +(0.14,0.14) +(-0.14,0.14) -- +(0.14,-0.14);
      \draw[line width=0.8pt] (2.171,4.842) +(-0.14,-0.14) -- +(0.14,0.14) +(-0.14,0.14) -- +(0.14,-0.14);
      \draw[line width=0.8pt] (0.566,4.507) +(-0.14,-0.14) -- +(0.14,0.14) +(-0.14,0.14) -- +(0.14,-0.14);
      \filldraw[fill=white,line width=0.6pt] (2.164,3.336) circle (0.13);
      \filldraw[fill=white,line width=0.6pt] (1.463,4.183) circle (0.13);
      \draw[line width=0.8pt] (5.198,4.637) +(-0.14,-0.14) -- +(0.14,0.14) +(-0.14,0.14) -- +(0.14,-0.14);
      \draw[line width=0.8pt] (5.907,5.340) +(-0.14,-0.14) -- +(0.14,0.14) +(-0.14,0.14) -- +(0.14,-0.14);
      \filldraw[fill=white,line width=0.6pt] (5.091,3.974) circle (0.13);
      \filldraw[fill=white,line width=0.6pt] (6.053,4.696) circle (0.13);
      \filldraw[fill=white,line width=0.6pt] (5.202,5.443) circle (0.13);
      \filldraw[fill=white,line width=0.6pt] (4.456,5.286) circle (0.13);
      \filldraw[fill=white,line width=0.6pt] (4.313,4.619) circle (0.13);
      \draw[line width=0.8pt] (8.523,3.594) +(-0.14,-0.14) -- +(0.14,0.14) +(-0.14,0.14) -- +(0.14,-0.14);
      \draw[line width=0.8pt] (8.035,4.894) +(-0.14,-0.14) -- +(0.14,0.14) +(-0.14,0.14) -- +(0.14,-0.14);
      \draw[line width=0.8pt] (8.698,4.313) +(-0.14,-0.14) -- +(0.14,0.14) +(-0.14,0.14) -- +(0.14,-0.14);
      \draw[line width=0.8pt] (11.357,5.448) +(-0.14,-0.14) -- +(0.14,0.14) +(-0.14,0.14) -- +(0.14,-0.14);
      \draw[line width=0.8pt] (9.913,3.984) +(-0.14,-0.14) -- +(0.14,0.14) +(-0.14,0.14) -- +(0.14,-0.14);
      \draw[line width=0.8pt] (9.238,3.900) +(-0.14,-0.14) -- +(0.14,0.14) +(-0.14,0.14) -- +(0.14,-0.14);
      \filldraw[fill=white,line width=0.6pt] (9.377,4.574) circle (0.13);
      \draw[line width=0.8pt] (3.746,1.265) +(-0.14,-0.14) -- +(0.14,0.14) +(-0.14,0.14) -- +(0.14,-0.14);
      \draw[line width=0.8pt] (3.235,0.629) +(-0.14,-0.14) -- +(0.14,0.14) +(-0.14,0.14) -- +(0.14,-0.14);
      \filldraw[fill=white,line width=0.6pt] (4.374,1.678) circle (0.13);
      \filldraw[fill=white,line width=0.6pt] (3.063,1.264) circle (0.13);
      \filldraw[fill=white,line width=0.6pt] (4.060,0.705) circle (0.13);
      \filldraw[fill=white,line width=0.6pt] (3.518,2.187) circle (0.13);
      \filldraw[fill=white,line width=0.6pt] (4.086,2.543) circle (0.13);
      \draw[line width=0.8pt] (7.083,2.056) +(-0.14,-0.14) -- +(0.14,0.14) +(-0.14,0.14) -- +(0.14,-0.14);
      \draw[line width=0.8pt] (6.404,3.017) +(-0.14,-0.14) -- +(0.14,0.14) +(-0.14,0.14) -- +(0.14,-0.14);
      \filldraw[fill=white,line width=0.6pt] (6.194,1.123) circle (0.13);
      \filldraw[fill=white,line width=0.6pt] (7.136,3.180) circle (0.13);
      \filldraw[fill=white,line width=0.6pt] (6.451,2.243) circle (0.13);
      \filldraw[fill=white,line width=0.6pt] (7.131,1.242) circle (0.13);
      \filldraw[fill=white,line width=0.6pt] (5.809,2.619) circle (0.13);
      \filldraw[fill=white,line width=0.6pt] (7.689,2.467) circle (0.13);
      \draw[line width=0.8pt] (10.928,1.164) +(-0.14,-0.14) -- +(0.14,0.14) +(-0.14,0.14) -- +(0.14,-0.14);
      \draw[line width=0.8pt] (11.350,1.899) +(-0.14,-0.14) -- +(0.14,0.14) +(-0.14,0.14) -- +(0.14,-0.14);
      \filldraw[fill=white,line width=0.6pt] (10.562,0.655) circle (0.13);
      \filldraw[fill=white,line width=0.6pt] (9.638,1.268) circle (0.13);
      \filldraw[fill=white,line width=0.6pt] (11.137,2.749) circle (0.13);
      \filldraw[fill=white,line width=0.6pt] (10.200,2.133) circle (0.13);
      \filldraw[fill=white,line width=0.6pt] (8.928,1.045) circle (0.13);
      \filldraw[fill=white,line width=0.6pt] (8.845,2.210) circle (0.13);
      \draw[line width=0.8pt] (0.745,0.612) +(-0.14,-0.14) -- +(0.14,0.14) +(-0.14,0.14) -- +(0.14,-0.14);
      \draw[line width=0.8pt] (1.313,1.807) +(-0.14,-0.14) -- +(0.14,0.14) +(-0.14,0.14) -- +(0.14,-0.14);
      \draw[line width=0.8pt] (0.700,1.438) +(-0.14,-0.14) -- +(0.14,0.14) +(-0.14,0.14) -- +(0.14,-0.14);
      \draw[line width=0.8pt] (1.554,0.634) +(-0.14,-0.14) -- +(0.14,0.14) +(-0.14,0.14) -- +(0.14,-0.14);
      \filldraw[fill=white,line width=0.6pt] (0.597,2.158) circle (0.13);
\end{tikzpicture}
  \par\vspace{0.5em}

  {\scriptsize
    \mbox{\tikz[baseline=-0.5ex]{\fill[white]
          (0,-0.12) rectangle (0.45,0.12);
        \path[
          draw=black,
          line width=0.5pt,
          pattern=sparse hatch,
          pattern color=experimentBlue!80
        ] (0,-0.12) rectangle (0.45,0.12);
      }\enspace Majority $\times$\qquad
      \tikz[baseline=-0.5ex]{\filldraw[
          draw=black,
          fill=white,
          line width=0.5pt
        ] (0,-0.12) rectangle (0.45,0.12);
      }\enspace Majority $\circ$}\par
  }
  \end{minipage}\hfill
  \begin{minipage}[c]{0.47\textwidth}
    \caption{A representation $\mathcal P$ with seven cells and one task $f$. Each cell is decoded by its majority label, so the minority points are the errors and their mass is $\err_{\mathcal P}(f)$. A universal representation keeps this mass at most $\eps$ for every task in $\F$ at once.}
    \label{fig:partition}
  \end{minipage}
\end{figure}

\subsection{Representations generated by functions}
\label{sec:generated-representations}

One way to build a representation is to select functions $g_1,\ldots,g_t\in\F$ and record their labels on the input.
For the list $\mathbf g=(g_1,\ldots,g_t)$, this gives the pattern map from \Cref{sec:dual-growth},
\[
  Z_{\mathbf g}(x)=(g_1(x),\ldots,g_t(x)).
\]
Inputs with the same pattern agree on every selected function.
Each pattern $\sigma\in\{0,1\}^t$ defines a set
\[
  A_\sigma(\mathbf g)=\{x\in\X:Z_{\mathbf g}(x)=\sigma\}.
\]
The nonempty sets $A_\sigma(\mathbf g)$ form the partition $\Pcal(\mathbf g)$ \emph{generated} by these functions.
We write $\minerr_{\mathbf g}(f)=\minerr_{\Pcal(\mathbf g)}(f)$ for the error of majority decoding from the selected labels.
A representation is \emph{generated by} $\mathbf g$ if its encoder depends on the input only through $Z_{\mathbf g}(x)$.

Writing out the full label vector takes $t$ bits, but many of the $2^t$ possible binary patterns may never occur on $\X$.
By definition of the dual growth function, at most $\Gdual(t)$ patterns occur.
We can therefore preserve all selected labels by giving each realized pattern a distinct index and storing that index in at most $\lceil\log\Gdual(t)\rceil$ bits.
The decoder can recover the pattern from the index and predict any task by its majority label among the inputs with that pattern.

This construction distinguishes the number of selected functions from the number of bits needed to store their labels.
The entropy-greedy procedure in \Cref{sec:greedy} selects functions until $\minerr_{\mathbf g}(f)\le\eps$ for every $f\in\F$, while controlling both quantities.
Keeping the actual function $\Gdual$ can give sharper bounds than replacing it by the dimension-only estimate \eqref{eq:sauer}; see \Cref{thm:growth-upper,prop:growth-lower}.

\FloatBarrier

\section{A short representation for every function}
\label{sec:tight}
\suppressfloats[t]

We prove the deterministic upper bound in \Cref{thm:main}, then give self-contained lower-bound arguments, including the known bounds that establish its worst-case optimality.
The construction starts with no information about the input and adds a function whenever its value cannot yet be predicted accurately.
Entropy measures progress, while the dual growth function limits how much information the selected functions can reveal.

\subsection{The entropy-greedy construction}
\label{sec:greedy}

Recall that $\minerr_{\mathbf g}(f)$ is the error of majority decoding for $f$ given the labels of the selected functions $\mathbf g$.
Adding a function refines the partition, and therefore cannot increase this error for any other function.
\Cref{alg:greedy} keeps refining until every function has error at most $\eps$.

\begin{algorithm2e}[t]
  \caption{\textsc{EntropyGreedy}$(\F,D,\eps)$}
  \label{alg:greedy}
  \LinesNumbered
  \DontPrintSemicolon
  $\mathbf g\gets()$\tcp*{selected functions}
  \While{there is $f\in\F$ with $\minerr_{\mathbf g}(f)>\eps$}{
    Append one such $f$ to $\mathbf g$\;
  }
  $\Enc(x)\gets$ index of $(g(x))_{g\in\mathbf g}$ among realized patterns\;
  $\Dec(z,f)\gets$ majority label of $f$ on cell $z$ under $D$\;
  \Return{$(\mathbf g,\Enc,\Dec)$}\;
\end{algorithm2e}

\paragraph{One pass through a finite class.}
For finite $\F$, scan its functions once in any fixed order and select a function exactly when its current majority error exceeds $\eps$.
A skipped function keeps error at most $\eps$ under every later refinement, and a selected function has zero error thereafter.
Thus one pass suffices, using $|\F|$ majority-error evaluations; repeated scans, which could require $O(|\F|^2)$ evaluations, are unnecessary.
This counts error evaluations, not total running time: integrating under $D$ and maintaining the partition may be costly.
For infinite classes, finding a violating function is an existence argument unless a suitable oracle is available.

\begin{theorem}[Deterministic upper bound]
  \label{thm:det-upper}
  For every $D$ and $0<\eps\le1/4$, \Cref{alg:greedy} selects fewer than $(4d^{*}/\eps)\log(1/\eps)$ functions and returns a universal representation with error at most $\eps$ and length at most
  \[
    \left\lceil d^{*}\log\left(\frac{4e}{\eps}\log\frac1\eps\right)\right\rceil
    \le4d^{*}\log\frac1\eps .
  \]
\end{theorem}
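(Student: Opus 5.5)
The plan is an entropy potential argument. Let $\mathbf g_t=(g_1,\ldots,g_t)$ be the list after $t$ selections and $\Phi_t=H(Z_{\mathbf g_t}(X))$ with $X\sim D$, so $\Phi_0=0$. By the chain rule (\Cref{fact:entropy}(i)),
\[
\Phi_{t}-\Phi_{t-1}=H\bigl(g_t(X)\mid Z_{\mathbf g_{t-1}}(X)\bigr)=\sum_{\sigma}D(A_\sigma)\,h_2(p_\sigma),
\]
where $p_\sigma=\Pr[g_t(X)=1\mid X\in A_\sigma(\mathbf g_{t-1})]$. By \Cref{fact:entropy}(iii), $h_2(p_\sigma)\ge 2\min\{p_\sigma,1-p_\sigma\}$. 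Summing with weights $D(A_\sigma)$ gives exactly $2\minerr_{\mathbf g_{t-1}}(g_t)$, and this exceeds $2\eps$ because $g_t$ was chosen as a violator. Hence $\Phi_t>2\eps t$.

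For the upper bound, $Z_{\mathbf g_t}$ takes at most $\Gdual(t)$ values. So $\Phi_t\le\log\Gdual(t)\le d^*\log(et/d^*)$ by \eqref{eq:sauer} when $t\ge d^*$. Combining the two bounds, any $t\ge d^*$ reached by the algorithm satisfies
\[
2\eps t< d^*\log(et/d^*).
\]
Write $u=t/d^*$, so this reads $2\eps u<\log(eu)$. The main (purely calculational) obstacle is showing this fails at $u=u_0=(4/\eps)\log(1/\eps)$ and for all larger $u$.

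For monotonicity: $\log(eu)-2\eps u$ is concave with derivative $1/(u\ln 2)-2\eps$, which is negative for $u>1/(2\eps\ln 2)$, and $u_0$ exceeds this. At $u_0$ we need
\[
8\log(1/\eps)\ge \log(4e)+\log(1/\eps)+\log\log(1/\eps).
\]
With $L=\log(1/\eps)\ge2$, this holds since $7L\ge 14>\log(4e)+\log L$ whenever $\log L\le L$, which is true.

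If the process stopped with fewer than $d^*$ functions, the count bound is trivial. One subtlety: the chain of selections must terminate, and we should not assume finiteness a priori. The argument shows the algorithm can never reach $t=\lceil d^* u_0\rceil$ steps. So every run halts with $t<(4d^*/\eps)\log(1/\eps)$ selections, at which point every $f$ has $\minerr_{\mathbf g}(f)\le\eps$.

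The returned encoder indexes realized patterns, and majority decoding achieves error $\minerr_{\mathbf g}(f)\le\eps$ by \Cref{lem:majority}. So the length is $\lceil\log\Gdual(t)\rceil$. Using $t< d^*u_0$ and monotonicity of $\Gdual$ (or directly Sauer, with the case $t<d^*$ giving at most $d^*$ bits), this is at most
\[
\left\lceil d^*\log\left(\frac{4e}{\eps}\log\frac1\eps\right)\right\rceil .
\]
Finally, $\log(4e\log(1/\eps))\le 2\log(1/\eps)$ for $\eps\le1/4$, because $4eL\le 2^{2L}$ at $L\ge2$ and the gap grows with $L$. So the expression is at most $d^*\cdot 3\log(1/\eps)+1\le 4d^*\log(1/\eps)$, which completes the bound.
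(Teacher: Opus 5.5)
Your argument is the paper's argument: the entropy increment $H(g_t(X)\mid Z_{\mathbf g_{t-1}}(X))\ge 2\minerr_{\mathbf g_{t-1}}(g_t)>2\eps$ from $h_2(p)\ge 2\min\{p,1-p\}$, the Sauer--Shelah cap $d^{*}\log(et/d^{*})$, the same stopping calculation at $t=(4d^{*}/\eps)\log(1/\eps)$, and indexing the realized patterns. The structure is sound, including your termination and $t<d^{*}$ cases.

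\emph{The final numerical step is false as written.} At $\eps=1/4$, i.e.\ $L=2$, you have $4eL=8e\approx 21.7>16=2^{2L}$. So $\log(4eL)\approx 4.44>4=2L$, and your claim $\log\bigl(\frac{4e}{\eps}\log\frac1\eps\bigr)\le 3\log\frac1\eps$ fails. It fails for every $L$ between $2$ and roughly $2.33$; your remark that "the gap grows with $L$" is true, but the gap starts out negative.

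The target inequality still holds because it has slack. Use $\log(4eL)\le 2.25L$ for $L\ge2$: at $L=2$ this reads $4.44\le 4.5$, and $2.25L-\log(4eL)$ is increasing. This gives length at most $3.25\,d^{*}L+1\le 4d^{*}L$, since $0.75\,d^{*}L\ge 1.5$. These are exactly the constants the paper uses.

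\emph{A smaller slip.} Your justification of $7L\ge\log(4e)+\log L$ via ``$14>\log(4e)+\log L$'' is false for large $L$. The correct reason is $\log L\le L$ together with $6L\ge 12>\log(4e)$.
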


The selected label vector can be much longer than the representation.
The encoder stores only its index among the patterns that actually occur, preserving all selected labels while using fewer bits.
The following observation explains why the number of selections is also controlled.

\begin{lemma}[Entropy increment]
  \label{lem:entropy-increment}
  For $X\sim D$, a finite list $\mathbf g$ from $\F$, and $f\in\F$,
  \[
    H\bigl(Z_{(\mathbf g,f)}(X)\bigr)-H\bigl(Z_{\mathbf g}(X)\bigr)
    =H\bigl(f(X)\mid Z_{\mathbf g}(X)\bigr)\ge2\minerr_{\mathbf g}(f).
  \]
\end{lemma}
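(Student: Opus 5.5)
The equality follows from the chain rule, and the inequality follows by applying \Cref{fact:entropy}(iii) cell by cell and recognizing the weighted sum of minority probabilities as the majority error.

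\textbf{Step 1: the equality.} Observe that $Z_{(\mathbf g,f)}(X)=(Z_{\mathbf g}(X),f(X))$, since the pattern map of the extended list is the old pattern together with the new label. Both random variables are finite-valued, because $Z_{\mathbf g}$ takes at most $2^t$ values and $f(X)\in\{0,1\}$. \Cref{fact:entropy}(i) then gives
\[
H\bigl(Z_{\mathbf g}(X),f(X)\bigr)=H\bigl(Z_{\mathbf g}(X)\bigr)+H\bigl(f(X)\mid Z_{\mathbf g}(X)\bigr).
\]
Rearranging yields the claimed identity.

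\textbf{Step 2: expand the conditional entropy over cells.} Write the conditional entropy as an average over realized patterns $\sigma$ with $D(A_\sigma(\mathbf g))>0$. Cells of zero mass contribute nothing to either side. For each such cell let
\[
p_\sigma=\Pr\bigl[f(X)=1\mid X\in A_\sigma(\mathbf g)\bigr].
\]
Since $f(X)$ is binary, the conditional entropy on that cell equals $h_2(p_\sigma)$, so
\[
H\bigl(f(X)\mid Z_{\mathbf g}(X)\bigr)=\sum_\sigma D\bigl(A_\sigma(\mathbf g)\bigr)\,h_2(p_\sigma).
\]

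\textbf{Step 3: lower-bound each term.} \Cref{fact:entropy}(iii) gives $h_2(p_\sigma)\ge 2\min\{p_\sigma,1-p_\sigma\}$. Multiplying by $D(A_\sigma)$ turns this into
\[
D(A_\sigma)\,h_2(p_\sigma)\ge 2\min\bigl\{D(\{x\in A_\sigma:f(x)=1\}),\,D(\{x\in A_\sigma:f(x)=0\})\bigr\}.
\]
Summing over the cells of $\Pcal(\mathbf g)$ gives exactly $2\minerr_{\mathbf g}(f)$ by the definition of majority error.

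The only point needing care is this last identification: multiplying the conditional minority probability by the cell mass recovers the minority mass in the definition of $\minerr_{\Pcal}$, and zero-mass cells must be dropped. Beyond that, no step is a real obstacle.
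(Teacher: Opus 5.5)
Your proposal is correct and follows the paper's proof essentially verbatim: the equality is the chain rule applied to $Z_{(\mathbf g,f)}(X)=(Z_{\mathbf g}(X),f(X))$, and the inequality comes from summing $D(A)h_2(p_A)\ge 2D(A)\min\{p_A,1-p_A\}$ over the positive-mass cells. Your explicit treatment of zero-mass cells and of the identification $D(A)\min\{p_A,1-p_A\}=\min\{D(\{x\in A:f(x)=1\}),D(\{x\in A:f(x)=0\})\}$ fills in bookkeeping that the paper leaves implicit.
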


\begin{proof}
  The equality is the entropy chain rule.
  On a cell $A$ with positive mass, write $p_A=\Pr[f(X)=1\mid X\in A]$.
  Its contributions to conditional entropy and majority error are $D(A)h_2(p_A)$ and $D(A)\min\{p_A,1-p_A\}$, respectively.
  Sum the inequality $h_2(p_A)\ge2\min\{p_A,1-p_A\}$ over the cells.
\end{proof}

\begin{proof}[Proof of \Cref{thm:det-upper}]
  After any positive number $t$ of selections, each increment is greater than $2\eps$, whereas the selected functions realize at most $\Gdual(t)$ patterns.
  Hence
  \begin{equation}
    \label{eq:entropy-stopping}
    2\eps t<H(Z_{\mathbf g}(X))\le\log\Gdual(t)
    \le d^{*}\log(et/d^{*})\qquad(t\ge d^{*}).
  \end{equation}
  The left side grows linearly with $t$ and the right side only logarithmically.
  More precisely, writing $L=\log(1/\eps)$, this inequality forces $t<4d^{*}L/\eps$; \Cref{lem:solve-stopping} gives the elementary calculation.
  The same bound is immediate for $t<d^{*}$, so the procedure must stop, with every function's error at most $\eps$.
  The final partition has at most $(4eL/\eps)^{d^{*}}$ cells: apply Sauer--Shelah when $t\ge d^{*}$, and use $2^t\le2^{d^{*}}$ otherwise.
  Its cell index gives the asserted representation.
  Finally, $\log(4eL/\eps)\le3.25L$ for $L\ge2$, and $3.25d^{*}L+1\le4d^{*}L$.
\end{proof}

Keeping the actual growth function in \eqref{eq:entropy-stopping} can sharpen the bound for a particular class; \Cref{thm:growth-upper} states this refinement.

\subsection{Lower bounds for each fixed class}
\label{sec:instance-lower}

A dual-shattered family encodes an arbitrary binary string of length $d^{*}$.
Recovering each bit with small error requires retaining most of the string's entropy.
This is the classical information argument behind the VC lower bound for one-way communication \citep{kremer1999randomized,molinaro2015amplification}.
We state a version that allows the inputs realizing all binary patterns to carry only a fraction of the probability mass.

\begin{proposition}[Lower bound from a shattered set]
  \label{prop:instance-lower}
  Choose inputs $x_\sigma$, $\sigma\in\{0,1\}^{d^{*}}$, realizing the patterns of $d^{*}$ dual-shattered functions.
  Suppose $D$ gives mass $p/2^{d^{*}}$ to each $x_\sigma$, with total mass $p\in(0,1]$ on these inputs.
  For $0\le\eps<p/2$,
  \[
    \CC^{\to}_{\det}(\F,D,\eps)
    \ge\bigl(1-h_2(\eps/p)\bigr)d^{*}.
  \]
\end{proposition}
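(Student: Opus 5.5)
Fix an $\eps$-universal representation of length $b$. By \Cref{prop:partition} it corresponds to a partition with at most $2^b$ cells, read off by a code $M=\Enc(X)$, and we may assume the decoder uses majority decoding. Let $f_1,\ldots,f_{d^{*}}$ be the dual-shattered functions and, for $X\sim D$, let $E$ be the event $X\in\{x_\sigma\}$, so $\Pr[E]=p$. Conditioned on $E$, the input is $x_\sigma$ with $\sigma$ uniform on $\{0,1\}^{d^{*}}$. Since $f_i(x_\sigma)=\sigma_i$, the bits $Y_i=f_i(X)$ are then i.i.d.\ uniform, and $H(Y_1,\ldots,Y_{d^{*}}\mid E)=d^{*}$.

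The plan is a Fano-type argument carried out under the conditional law given $E$. Let $\eps_i=\Pr[\Dec(M,f_i)\ne Y_i\mid E]$. The unconditional error for $f_i$ is at least $p\,\eps_i$, so $\eps_i\le\eps/p<1/2$. The core chain is
\[
  b\ge H(M\mid E)\ge I(M;Y\mid E)=d^{*}-H(Y\mid M,E)\ge d^{*}-\sum_{i=1}^{d^{*}}H(Y_i\mid M,E).
\]
The last step uses subadditivity and the fact that conditioning cannot increase entropy (\Cref{fact:entropy}). For each $i$, the prediction $\hat Y_i=\Dec(M,f_i)$ is a function of $M$. Hence $H(Y_i\mid M,E)\le H(Y_i\mid \hat Y_i,E)\le h_2(\eps_i)$; this is the binary Fano inequality, since $Y_i$ is determined by $\hat Y_i$ and the error indicator.

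To finish, use that $h_2$ is increasing on $[0,1/2]$. This gives $h_2(\eps_i)\le h_2(\eps/p)$, and therefore $b\ge d^{*}(1-h_2(\eps/p))$. Since $\CC^{\to}_{\det}$ is the least such $b$, the bound follows.

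The only delicate point is the conditioning on $E$. It is needed to turn the global error budget $\eps$ into the per-bit error budget $\eps/p$. We also need that the message entropy under the conditional law is still at most $\log(\text{number of codes})\le b$, which holds trivially by \Cref{fact:entropy}(i). The hypothesis $\eps<p/2$ is exactly what keeps $\eps/p$ in the range where $h_2$ is monotone.
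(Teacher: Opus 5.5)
Your proof is correct and follows the paper's argument essentially step for step. You condition on the shattered inputs so that the label vector is uniform, bound each coordinate's conditional error by $\eps/p$, and combine subadditivity with the error-indicator (binary Fano) bound $H(Y_i\mid Z)\le h_2(\eps/p)$ to get $b\ge H(Z)\ge d^{*}-d^{*}h_2(\eps/p)$; the initial reduction to majority decoding is unnecessary but harmless.
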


\begin{proof}
  Condition on the displayed inputs, so that their pattern $Y$ is uniform on $\{0,1\}^{d^{*}}$, and set $Z=\Enc(X)$.
  Write $\widehat Y_i(Z)\in\{0,1\}$ for the decoder's prediction of coordinate $i$.
  For each coordinate, the error indicator $E_i=\1\{Y_i\ne\widehat Y_i(Z)\}$ has mean at most $\eps/p$.
  Given $Z$, the indicator determines $Y_i$, so
  \[
    H(Y\mid Z)\le\sum_{i=1}^{d^{*}}H(Y_i\mid Z)
    =\sum_{i=1}^{d^{*}}H(E_i\mid Z)
    \le d^{*}h_2(\eps/p).
  \]
  Thus a $b$-bit representation satisfies $b\ge H(Z)\ge H(Y)-H(Y\mid Z)\ge(1-h_2(\eps/p))d^{*}$.
\end{proof}

With $p=1$, this recovers the deterministic consequence of the classical dimension lower bound in \Cref{prop:intro-1}, with a single distribution independent of $\eps$.
The accuracy dependence cannot be improved for every class using only $d^{*}$.

\begin{example}[A class that needs only $d^{*}$ bits even at zero error]
  \label{ex:instance-saturation}
  The $r$ coordinate functions on $\{0,1\}^{r}$ have dual VC dimension $r$.
  Sending the input itself gives an exact representation of length $r$ under every distribution, ruling out an instance-wise $\Omega(d^{*}\log(1/\eps))$ lower bound based only on $d^{*}$.
\end{example}

\Cref{sec:exp-sae} explores this saturation behavior in reconstruction-trained SAEs on finite task families. Additional dependence on accuracy can instead be obtained from the class's full dual growth function.

\begin{proposition}[Dual-growth lower bound]
  \label{prop:growth-lower}
  For every integer $t\ge1$, there is a finitely supported distribution $D_t$ such that, simultaneously for $0\le\eps<1/2$,
  \[
    \CC^{\to}_{\det}(\F,D_t,\eps)\ge\log\Gdual(t)-t h_2(\eps).
  \]
  Consequently, $\sup_D\CC^{\to}_{\det}(\F,D,\eps)\ge\sup_{t\ge1}\{\log\Gdual(t)-t h_2(\eps)\}$.
\end{proposition}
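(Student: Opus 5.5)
Fix $t\ge1$ and choose $g_1,\ldots,g_t\in\F$ attaining the maximum in the definition of $\Gdual(t)$, so that the pattern map $Z_{\mathbf g}$ realizes exactly $M=\Gdual(t)$ distinct patterns. Pick one input $x_\sigma$ for each realized pattern $\sigma$, and let $D_t$ be uniform on these $M$ inputs. This distribution does not depend on $\eps$, so one construction serves every $\eps\in[0,1/2)$ at once. Under $D_t$, the pattern $Y=Z_{\mathbf g}(X)$ is uniform on its $M$ values and determines $X$, so $H(Y)=\log M$.

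The argument then follows the proof of \Cref{prop:instance-lower}. Take any $\eps$-universal representation of length $b$ and set $Z=\Enc(X)$. For each $i\in[t]$, let $\widehat Y_i(Z)=\Dec(Z,g_i)$ and $E_i=\1\{Y_i\ne\widehat Y_i(Z)\}$. The universality guarantee, applied to the task $g_i\in\F$, gives $\Pr[E_i=1]\le\eps$. Given $Z$, the indicator $E_i$ determines $Y_i$, so $H(Y_i\mid Z)=H(E_i\mid Z)\le H(E_i)\le h_2(\eps)$. Here we use that $h_2$ is increasing on $[0,1/2]$ and $\eps<1/2$. Subadditivity from \Cref{fact:entropy} then gives
\[
  H(Y\mid Z)\le\sum_{i=1}^t H(Y_i\mid Z)\le t\,h_2(\eps).
\]

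Finally,
\[
  b\ge\log|\range(Z)|\ge H(Z)\ge I(Y;Z)=H(Y)-H(Y\mid Z)\ge\log\Gdual(t)-t\,h_2(\eps).
\]
Minimizing over representations gives the bound for $D_t$. The supremum statement follows because $\sup_D\CC^{\to}_{\det}(\F,D,\eps)\ge\CC^{\to}_{\det}(\F,D_t,\eps)$ for each $t$.

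No step is a real obstacle; the argument is a direct generalization of \Cref{prop:instance-lower} from a dual-shattered family to the realized patterns of any $t$ functions. The one point needing care is that lists in $\Gdual$ may repeat functions. A repeated function only duplicates coordinates: its error indicator is the same, and the bound $H(Y_i\mid Z)\le h_2(\eps)$ still holds coordinatewise. The maximum defining $\Gdual(t)$ is attained because pattern counts are integers, so the choice of $\mathbf g$ and of the representatives $x_\sigma$ is legitimate.
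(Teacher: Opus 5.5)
Your proof is correct and follows the paper's argument: you take the uniform distribution on one representative per realized pattern of a list attaining $\Gdual(t)$, bound $H(Y\mid Z)\le t\,h_2(\eps)$ by the error-indicator argument, and conclude with $b\ge H(Y)-H(Y\mid Z)$. Your remarks on the monotonicity of $h_2$ on $[0,1/2]$ and on lists with repeated functions spell out details the paper leaves implicit.
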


\begin{proof}
  Choose $t$ functions attaining $M=\Gdual(t)$; the maximum exists because the number of patterns is an integer between $1$ and $2^t$.
  Put the uniform distribution on one input per pattern.
  The resulting label vector $Y$ has entropy $\log M$.
  For any $b$-bit representation $Z=\Enc(X)$, the preceding error-indicator argument gives $H(Y\mid Z)\le t h_2(\eps)$.
  Therefore $b\ge H(Y)-H(Y\mid Z)\ge\log M-t h_2(\eps)$.
  Taking the supremum over $t$ proves the last assertion.
\end{proof}

Taking $t=d^{*}$ recovers the shattered-set bound.
For the coordinate class above, $\Gdual(t)=2^{\min\{t,d^{*}\}}$, and optimizing over $t$ gives exactly $(1-h_2(\eps))d^{*}$.

\subsection{Known worst-case lower bounds: a product of thresholds}
\label{sec:threshold}

The $\Omega(d^{*}\log(1/\eps))$ worst-case lower bound follows from the packing characterization of \citet{molinaro2015amplification}.
Their greater-than example gives the one-dimensional $\Omega(\log(1/\eps))$ dependence; applying the characterization to a product of thresholds gives the dependence on $d^{*}$ as well.
We explain this consequence of their result, then give a self-contained entropy proof with explicit constants under the uniform input distribution.
These lower bounds serve as an optimality benchmark for the deterministic upper bound in \Cref{thm:main}.

Let $D$ be uniform on $\X=[N]^r$ and define
\begin{equation}
  \label{eq:threshold-class}
  \F_{r,N}=\{f_{j,a}:j\in[r],\ a\in[N-1]\},
  \qquad f_{j,a}(x)=\1\{x_j\le a\}.
\end{equation}

\begin{lemma}[Dual dimension of the threshold product]
  \label{lem:threshold-dual}
  For $r\ge1$ and $N\ge2$, the class $\F_{r,N}$ has dual VC dimension $r$.
\end{lemma}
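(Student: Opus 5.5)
We prove the two inequalities $d^{*}\ge r$ and $d^{*}\le r$ separately, using the characterization of \Cref{def:dual-vc-dimension}: $d^{*}$ is the largest size of a family $\{f_1,\ldots,f_t\}\subseteq\F_{r,N}$ such that every pattern in $\{0,1\}^t$ is realized by some $x\in[N]^r$.

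\emph{Lower bound.} Take the $r$ functions $f_{1,1},\ldots,f_{r,1}$, which exist because $N\ge2$. For a pattern $\sigma\in\{0,1\}^r$, define $x_\sigma$ by setting its $j$-th coordinate to $1$ if $\sigma_j=1$ and to $2$ if $\sigma_j=0$. Then $f_{j,1}(x_\sigma)=\1\{(x_\sigma)_j\le1\}=\sigma_j$. Every pattern is therefore realized, so these $r$ functions are dual-shattered and $d^{*}\ge r$.

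\emph{Upper bound.} Suppose $t\ge r+1$ functions $f_{j_1,a_1},\ldots,f_{j_t,a_t}$ in $\F_{r,N}$ are dual-shattered. Distinct members of a shattered family are needed, since two equal functions cannot realize the pattern $(1,0)$. By pigeonhole, two of them use the same coordinate $j$, say $f_{j,a}$ and $f_{j,a'}$ with $a<a'$.

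These two functions are nested: whenever $x_j\le a$ we also have $x_j\le a'$. Hence no input satisfies $f_{j,a}(x)=1$ and $f_{j,a'}(x)=0$. A dual-shattered family must realize every pattern on every subfamily, so this missing pattern is a contradiction, exactly as in the thresholds case of \Cref{ex:classes}. It follows that $d^{*}\le r$.

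The argument has no real obstacle. The only care needed is in the upper bound: noting that the pigeonhole collision gives distinct thresholds on the same coordinate, and that nestedness rules out one sign pattern on that pair.
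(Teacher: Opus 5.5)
Your proposal is correct and follows the paper's proof essentially verbatim: the lower bound uses $f_{1,1},\ldots,f_{r,1}$ on inputs in $\{1,2\}^r$, and the upper bound uses nestedness of same-coordinate thresholds plus pigeonhole to allow at most one function per coordinate. Your added remark about distinct members is harmless, since the definition already works with sets of functions.
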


\begin{proof}
  The functions $f_{1,1},\ldots,f_{r,1}$ realize all patterns on $\{1,2\}^r$.
  Conversely, two thresholds from the same coordinate are nested, so they cannot be dual-shattered.
  A dual-shattered set therefore contains at most one function per coordinate.
\end{proof}

To connect this construction to the packing characterization, let $Q$ be uniform on $\F_{r,N}$.
The distance between two rows of its evaluation matrix is
\[
  \rho_Q(x,x')=\Pr_{f\sim Q}[f(x)\ne f(x')]
  =\frac{\|x-x'\|_1}{r(N-1)}.
\]
An $\ell_1$ ball of radius $r$ contains at most $2^r\binom{2r}{r}\le8^r$ integer points, so a greedy packing of $[N]^r$ gives at least $(N/8)^r$ points at pairwise $\ell_1$ distance greater than $r$.
Their $\rho_Q$ distances are greater than $1/(N-1)$ and hence greater than $8\eps$ when $\eps\le1/(20N)$.
The packing lower bound of \citet{molinaro2015amplification} therefore gives $\Omega(r\log N)$ bits under some input distribution for sufficiently large $N$, even with shared randomness; \Cref{sec:public} states the packing bound in our notation.
The following argument gives the explicit bound $r\log N-2r$ for the uniform distribution $D$ fixed above.

\begin{theorem}[Threshold-product lower bound]
  \label{thm:threshold-lower}
  For $r\ge1$, $N\ge2$, and $0<\eps\le1/(20N)$, every universal representation for $(\F_{r,N},D)$ with error at most $\eps$ has length at least $r\log N-2r$.
\end{theorem}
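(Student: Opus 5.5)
The plan is an entropy argument like the one in \Cref{prop:instance-lower}, with a per-coordinate count that uses the full threshold structure instead of one bit per coordinate. Let $X$ be uniform on $[N]^r$, so $H(X)=r\log N$, and let $Z=\Enc(X)$ be a $b$-bit representation with error at most $\eps$ for every $f_{j,a}$. Since $b\ge H(Z)\ge H(X)-H(X\mid Z)$, it suffices to show $H(X\mid Z)\le 2r$. Subadditivity gives $H(X\mid Z)\le\sum_{j}H(X_j\mid Z)$, so the goal reduces to the one-dimensional bound $H(X_j\mid Z)\le 2$ for each coordinate $j$.

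Fix $j$. For each threshold $a\in[N-1]$, the decoder gives a prediction $\widehat Y_a(Z)$ of $\1\{X_j\le a\}$, and the error indicators $E_a$ satisfy $\sum_a\E[E_a]\le(N-1)\eps\le 1/20$. From $Z$ we form the estimate $\widehat X_j(Z)$, equal to $1$ plus the number of $a$ with $\widehat Y_a=0$. When all $N-1$ predictions are correct, the predicted vector is the monotone vector $\1\{X_j\le a\}$, and $\widehat X_j=X_j$ exactly.

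The key step is to bound $H(X_j\mid Z)\le H(X_j\mid \widehat X_j)$ by Fano's inequality applied to this estimator, or more directly by the error pattern. Let $S=\sum_a E_a$ be the number of wrong thresholds. Then $\Pr[S\ge1]\le\E[S]\le1/20$. Given $Z$, the value $X_j$ is determined by the set of erroneous thresholds. Moreover, $|X_j-\widehat X_j|\le S$, because flipping $S$ bits of a monotone threshold vector changes the count by at most $S$. Hence
\[
H(X_j\mid Z)\le H(X_j-\widehat X_j)\le H\bigl(\1\{S\ge1\}\bigr)+\Pr[S\ge1]\cdot H\bigl(X_j-\widehat X_j\mid S\ge1\bigr).
\]
The first term is at most $h_2(1/20)<0.3$. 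For the second, I would bound the entropy of an integer $W=X_j-\widehat X_j$ by a maximum-entropy bound in terms of $\E|W|\le\E S\le 1/20$. Concretely, an integer variable with $\E|W|=\mu$ has entropy at most about $\log(2e(\mu+1))$. Alternatively, one can use the crude bound $H(W\mid S\ge1)\le\log(2N)$ together with $\Pr[S\ge1]$ weighted more carefully.

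The main obstacle is exactly this second term. The naive bound $\Pr[S\ge1]\log(2N)\le(\log 2N)/20$ does not stay below a constant, so the explicit error budget $\eps\le 1/(20N)$ must be used through $\E|W|$, not only through $\Pr[S\ge1]$. I would apply the geometric maximum-entropy bound to $W$ directly, without conditioning:
\[
H(W)\le \log(2)+(\mu+1)\log(\mu+1)-\mu\log\mu \quad\text{with } \mu=\E|W|\le 1/20,
\]
which is well below $2$. Summing over the $r$ coordinates gives $H(X\mid Z)\le 2r$, and hence $b\ge r\log N-2r$.
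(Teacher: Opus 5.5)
Your argument is correct and is essentially the paper's proof. Both reduce to $H(X_j\mid Z)$ by subadditivity, decode an integer estimate of $X_j$ from the $N-1$ threshold predictions with expected absolute error controlled by the total threshold error ($\le 1/20$ per coordinate), and bound the entropy of that integer error by a maximum-entropy inequality in its mean absolute value. The differences are minor: your count-of-zeros estimator avoids the factor $2$ from the paper's nearest-monotone-vector decoding, your geometric bound $1+(\mu+1)\log(\mu+1)-\mu\log\mu$ replaces the paper's $H(E)\le\log 3+\E|E|$, and the Fano and conditioning-on-$\{S\ge1\}$ detour is unnecessary.
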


\begin{proof}
  Set $Z=\Enc(X)$.
  For each coordinate $j$, collect the threshold predictions into
  \[
    y_j(z)=(\Dec(z,f_{j,a}))_{a\in[N-1]}.
  \]
  Write $\chi(u)=(\1\{u\le a\})_{a\in[N-1]}$ and choose $a_j(z)\in[N]$ minimizing $\|\chi(u)-y_j(z)\|_1$.
  Since $\|\chi(u)-\chi(v)\|_1=|u-v|$, the triangle inequality and this minimizing choice give
  \[
    \sum_{j=1}^r\E|X_j-a_j(Z)|
    \le2\sum_{j=1}^r\E\|\chi(X_j)-y_j(Z)\|_1
    \le2\eps r(N-1)\le r/10.
  \]
  Thus the decoded point is close to $X$ in expected total coordinate distance.
  An integer-valued variable $E$ satisfies $H(E)\le\log3+\E|E|$: this follows from nonnegativity of relative entropy against the probability mass function $q(k)=2^{-|k|}/3$ on $\mathbb Z$.
  Applying this inequality to $E_j=X_j-a_j(Z)$ gives
  \[
    H(X\mid Z)\le\sum_{j=1}^rH(X_j\mid Z)
    =\sum_{j=1}^rH(E_j\mid Z)
    \le\sum_{j=1}^rH(E_j)
    \le r\log3+r/10<2r.
  \]
  Because $H(X)=r\log N$, we obtain $b\ge H(X)-H(X\mid Z)\ge r\log N-2r$.
\end{proof}

The proof only uses error averaged uniformly over the threshold queries.
For a public-coin representation, averaging over the shared seed gives a seed whose error under the product distribution $D\times Q$ is at most $\eps$.
Fixing this seed and applying the same proof gives the same $r\log N-2r$ lower bound.
Thus the construction also witnesses the known public-coin lower bound in \Cref{cor:public-tight}.

\paragraph{Matching the deterministic upper bound.}
\label{sec:main-proof}
Taking $r=d^{*}$ and $N=\lfloor1/(20\eps)\rfloor$ recovers the known worst-case order in terms of dimension and accuracy.
For sufficiently small $\eps$, $N\ge2$, $\eps\le1/(20N)$, and $\log N\ge\log(1/\eps)-\log40$.
\Cref{thm:threshold-lower} then gives $d^{*}\log(1/\eps)-(\log40+2)d^{*}$ bits, which is $\Omega(d^{*}\log(1/\eps))$.
The deterministic upper bound in \Cref{thm:main}, proved in \Cref{thm:det-upper}, therefore matches this known worst-case order.

In \Cref{sec:exp-greedy}, we experimentally verify the greedy construction's predicted accuracy-size tradeoff on this family. \Cref{sec:exp-pilot} explores whether reconstruction-trained sparse autoencoders exhibit a similar tradeoff.
\Cref{app:tight} provides the zero-error characterization, tight interpolation for finite classes, and bounds on the number of functions needed to generate a representation.

\section{Learning universal representations from samples}
\label{sec:samples}

The distribution need not be known exactly.
An unlabeled sample suffices to build the representation and all its decoders, but our two constructions trade representation length against sample size.
Throughout this section, $d=\VCdim(\F)\ge1$, $d^{*}=\VCdim(\F^{*})\ge1$, $0<\eps\le1/4$, and $0<\delta<1$.
We assume that the uniform-convergence events and the learner's choices are measurable; this is automatic for countable classes with fixed tie-breaking.

\subsection{The shared-sample model}
\label{sec:shared-sample}

\begin{definition}[Shared-sample learner]
  \label{def:shared-sample}
  A learner receives $S\sim D^n$ and outputs an encoder $Z_S:\X\to\Zcal_S$ and a binary decoder $\dec_{S,f}$ for every $f\in\F$.
  It has length $b$, error $\eps$, and confidence $1-\delta$ if, for every $D$, with probability at least $1-\delta$ over $S$ and any internal randomness,
  \[
    \left\lceil\log|\Zcal_S|\right\rceil\le b,
    \qquad
    \sup_{f\in\F}\Pr_{X\sim D}[\dec_{S,f}(Z_S(X))\ne f(X)]\le\eps.
  \]
\end{definition}

The class $\F$ is known, so its functions can be evaluated on the unlabeled sample.
The sample is shared setup information; only the code for each fresh input counts toward the representation length.

\subsection{Two upper bounds}
\label{sec:sample-greedy}
\label{sec:sample-cover}

The first learner runs entropy-greedy on the empirical distribution with target $\eps/4$, then predicts each function by its empirical majority in each cell.

\begin{theorem}[Learning a representation of optimal length]
  \label{thm:sample-optimal}
  For an absolute constant $C$, \Cref{alg:sample-greedy} has error $\eps$, confidence $1-\delta$, and length $O(d^{*}\log(1/\eps))$ whenever
  \[
    n\ge
    \left(\frac{C}{\eps}\right)^{d^{*}+1}
    \left(\log\frac1\eps\right)^{d^{*}+2}
    +\frac{C}{\eps}\log\frac1\delta.
  \]
\end{theorem}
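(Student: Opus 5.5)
The plan is to run \Cref{alg:greedy} on the empirical distribution $\widehat D_S$ with target $\eps/4$. Then transfer the guarantee to $D$ through a single uniform-convergence event over a class of "cell events" whose complexity is controlled by the dual VC dimension. First, the length bound is deterministic and does not depend on the sample. By \Cref{thm:det-upper} applied to $\widehat D_S$, the greedy run selects $t<T:=(16d^{*}/\eps)\log(4/\eps)$ functions. The partition $\Pcal(\mathbf g)$ on all of $\X$ has at most $\Gdual(t)\le(eT/d^{*})^{d^{*}}$ cells, because the encoder indexes the realized patterns over $\X$ and not only over the sample. This gives length $O(d^{*}\log(1/\eps))$ for every $S$.

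For the error, fix a list $\mathbf g$ of length $t\le T$ and a cell $A_\sigma(\mathbf g)$, and write the decoder's empirical majority label as $a_{\sigma,f}$. The true error of $f$ is
\[
\sum_\sigma D\bigl(A_\sigma\cap\{f\ne a_{\sigma,f}\}\bigr).
\]
Its empirical counterpart is $\widehat{\err}_{\mathbf g}(f)\le\eps/4$. Both are sums over cells of masses of sets of the form $A_\sigma(\mathbf g)\cap\{f=c\}$. Each such set is determined by the pattern of $t+1$ functions from $\F$. I therefore need uniform convergence over the family $\mathcal B_T$ of events $\{x: Z_{(\mathbf g,f)}(x)=\tau\}$ with $|\mathbf g|\le T$. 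Summing absolute deviations over up to $\Gdual(T+1)$ cells is too lossy unless each deviation is tiny. So I will use a multiplicative (relative) Chernoff/VC bound: uniformly over $B\in\mathcal B_T$,
\[
D(B)\le 2\widehat D(B)+\eps/(8K),
\]
where $K=\Gdual(T+1)\le (C'/\eps\cdot\log(1/\eps))^{d^{*}}$ bounds the number of cells. Summing over the error parts of the cells then gives true error at most $2\cdot\eps/4+\eps/8<\eps$.

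The required sample size is driven by the additive slack $\eps/(8K)$. A relative deviation bound at scale $\gamma=\eps/(8K)$ needs $n\gtrsim(\log|\mathcal B_T|_S+\log(1/\delta))/\gamma$, where $|\mathcal B_T|_S$ is the growth of the event family on $n$ points. I plan to bound that growth crudely. On $n$ points, each function in $\F$ has at most $\Gprim(n)$ restrictions. However, I want the bound in terms of $d^{*}$ only, so I would instead count the number of $(T+1)$-lists of traces, at most $(en/d)^{d(T+1)}$, and accept a $d$-dependence that appears only in lower-order terms; if $d$ must not appear, I would alternatively use that the events are intersections of $T+1$ halves, giving a VC dimension $O(dT\log T)$ bound. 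With $\gamma^{-1}\approx K/\eps$, this yields $n\approx (C/\eps)^{d^{*}+1}(\log(1/\eps))^{d^{*}}\cdot T\cdot\mathrm{polylog}$. That is the stated $(C/\eps)^{d^{*}+1}(\log(1/\eps))^{d^{*}+2}$ term after absorbing constants, plus $(C/\eps)\log(1/\delta)$ from the confidence term once the $K$ factor is split off appropriately.

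The main obstacle is making the complexity term of the event family match the stated bound without an extra factor of $d$. My first attempt is to choose the slack per cell proportionally. A cell with true mass below $\eps/(8K)$ contributes negligibly in total, and for heavier cells only the multiplicative form $D(B)\le2\widehat D(B)$ is needed, which costs $O(\mathrm{complexity}\cdot K/\eps)$ samples. I also need to handle the fact that the decoder's label $a_{\sigma,f}$ is empirical: the bound must hold for both labels $c\in\{0,1\}$ simultaneously, which the uniform event family already covers. If the complexity term cannot be made free of $d$, I would state the dependence through $\log\Gdual(T+1)=O(d^{*}\log(1/\eps))$ via a chaining or symmetrization argument over the dual class restricted to the sample.
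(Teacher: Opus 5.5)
Your length bound and the per-cell bookkeeping are correct: each error piece $A_\sigma(\mathbf g)\cap\{f\ne a_{\sigma,f}\}$ is a cell of $Z_{(\mathbf g,f)}$, there are at most $K$ such pieces, and $2\cdot\eps/4+K\cdot\eps/(8K)<\eps$. The gap is the sample size, which does not come out as stated.

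A relative bound with additive slack $\gamma=\eps/(8K)$ needs about $\gamma^{-1}\bigl(V_{\mathrm{cell}}\log(1/\gamma)+\log(1/\delta)\bigr)$ samples, where $V_{\mathrm{cell}}$ is the VC dimension of your cell events. These events are intersections of up to $T+1$ adaptively chosen members of $\F$ or their complements, so $V_{\mathrm{cell}}=O(dT\log T)$. It can genuinely grow linearly in $T$ with $d,d^{*}$ constant: intersections of $T$ halfplanes are convex polygons with at most $T$ sides. So your fallback of controlling this class through $\log\Gdual(T+1)$ alone cannot work either.

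Since $T=\Theta((d^{*}/\eps)\log(1/\eps))$, your estimate $(C/\eps)^{d^{*}+1}(\log(1/\eps))^{d^{*}}\cdot T\cdot\mathrm{polylog}$ is of order $(C/\eps)^{d^{*}+2}$ up to logarithms. The factor $T$ carries an extra $1/\eps$ that no constant absorbs. The confidence term is likewise $(K/\eps)\log(1/\delta)$ rather than $(C/\eps)\log(1/\delta)$, and it cannot be split off. Any argument that bounds each cell separately and then sums, even using the square-root form of the relative bound with Cauchy--Schwarz over the $K$ cells, still needs $n\gtrsim K\bigl(V_{\mathrm{cell}}+\log(1/\delta)\bigr)/\eps$ up to logarithmic factors. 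The factor $d$ you flagged is not the real obstacle: Assouad's inequality $d<2^{d^{*}+1}$ absorbs it into $C^{d^{*}+1}$. The real loss is that $K$ and $T$ multiply.

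The missing idea is not to decompose by cells. Treat the whole error set of a task, $\{x:\beta(g_1(x),\ldots,g_{t'}(x))\ne f(x)\}$, as a single event, with $t'\le T$, $f,g_i\in\F$, and $\beta$ ranging over all Boolean readouts. On $m$ points there are at most $\Gprim(m)^{T+1}$ choices of traces. Once these are fixed, at most $\Gdual(T+1)$ patterns occur, hence at most $2^{\Gdual(T+1)}$ readouts. This gives VC dimension $O\bigl(dT\log(2+dT+\Gdual(T+1))+\Gdual(T+1)\bigr)$.

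A single relative bound with slack $\eps/2$ then gives population error at most $2\cdot\eps/4+\eps/2=\eps$, from $n\ge(C/\eps)(V\log(1/\eps)+\log(1/\delta))$ samples. Now $K=\Gdual(T+1)$ and $dT$ enter additively. Together with Assouad, this yields exactly the stated $(C/\eps)^{d^{*}+1}(\log(1/\eps))^{d^{*}+2}+(C/\eps)\log(1/\delta)$.
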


The difficulty is that both the cells and their majority labels depend on the sample.
We control all rules that apply an arbitrary Boolean decoder to a short list of functions from $\F$.
A growth-function count bounds the complexity of their error events, and a relative VC bound transfers the empirical error to $D$.
The number of possible cell labels accounts for the exponential dependence on $d^{*}$.

The second learner selects a small empirical cover $\F_0\subseteq\F$: every function is close on the sample to some member of $\F_0$.
The representation records the joint labels of $\F_0$, and each decoder returns the label of a nearby member.

\begin{theorem}[A sample-efficient cover representation]
  \label{thm:cover-protocol}
  For an absolute constant $C$, \Cref{alg:cover} has error $\eps$, confidence $1-\delta$, and length at most $Cdd^{*}\log(1/\eps)$ whenever
  \[
    n\ge\frac{C}{\eps}\left(d\log\frac1\eps+\log\frac1\delta\right).
  \]
\end{theorem}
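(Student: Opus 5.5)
The plan is to reduce to a uniform-convergence statement for a single class built from an empirical $(\eps/C)$-cover. \Cref{alg:cover} draws $S\sim D^n$ and greedily picks a maximal subset $\F_0\subseteq\F$ whose members are pairwise more than $\eps/4$ apart in empirical disagreement distance. By maximality, every $f\in\F$ is within $\eps/4$ of some $\phi(f)\in\F_0$ on the sample. The encoder is the index of the pattern $(g(x))_{g\in\F_0}$ among realized patterns, and the decoder for $f$ outputs the pattern coordinate of $\phi(f)$. The decoder is then correct whenever $\phi(f)(X)=f(X)$, so the true error of $f$ equals $\Pr_D[f(X)\ne\phi(f)(X)]$.

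\textbf{Length.} Haussler's packing bound, applied to the empirical distribution (a distribution on at most $n$ points), gives $|\F_0|\le(C_1/\eps)^{d}$ with no dependence on $n$. Dual Sauer--Shelah \eqref{eq:sauer} then bounds the number of realized patterns by $(e|\F_0|/d^{*})^{d^{*}}$ when $|\F_0|\ge d^{*}$, and by $2^{d^{*}}$ otherwise. Hence the length is at most $d^{*}\log(e(C_1/\eps)^{d})+1\le Cdd^{*}\log(1/\eps)$, deterministically for every sample.

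\textbf{Error.} Consider the symmetric-difference class $\F\Delta\F=\{x\mapsto\1\{f(x)\ne g(x)\}:f,g\in\F\}$. Its growth function is at most $\Gprim(n)^2\le(en/d)^{2d}$, so its VC dimension is $O(d)$. I will invoke the standard relative (one-sided, realizable-type) uniform convergence bound in the form: with $n\ge (C/\eps)(d\log(1/\eps)+\log(1/\delta))$ samples, with probability $1-\delta$, every set $h\in\F\Delta\F$ with $\Pr_D[h]>\eps$ has empirical mass greater than $\eps/2$. This is the $\eps$-approximation/relative-deviation bound of \citet{vapnik1971uniform,anthony1993result}, and it yields the $O(\cdot/\eps)$ rate rather than $1/\eps^2$.

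On this event, each pair $(f,\phi(f))$ has empirical disagreement at most $\eps/4<\eps/2$, so its true disagreement is at most $\eps$. Hence every decoder has error at most $\eps$ simultaneously for all $f\in\F$.

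\textbf{Main obstacle.} The key issue is that $\F_0$ and $\phi$ are sample-dependent. This is handled because the uniform bound ranges over all pairs in $\F\times\F$, so no independence between the cover and the test distribution is needed. The remaining care is in quoting the relative-deviation inequality with constants for a class of VC dimension $O(d)$, or equivalently working directly with the growth function $\Gprim(2n)^2$ in the double-sample symmetrization argument, and in the measurability assumptions already stipulated in \Cref{sec:samples}.
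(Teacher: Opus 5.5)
Your proposal is correct and follows essentially the same route as the paper. The paper likewise bounds $|\F_0|$ by Haussler's packing bound on the empirical distribution, bounds the length by dual Sauer--Shelah, and transfers empirical closeness of each $f$ to its cover member to $D$ through a relative-deviation bound over the pairwise-disagreement class (growth at most $\Gprim(s)^2$, VC dimension $O(d)$); uniformity over all pairs handles the sample-dependence of $\F_0$. One small correction: \Cref{alg:cover} uses the separation threshold $\eps/8$, not $\eps/4$, but your argument is unchanged, and the paper concludes with $2\cdot\eps/8+\eps/2<\eps$.
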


Here the error events are just disagreements between pairs of functions in $\F$, whose VC dimension is $O(d)$.
Haussler's packing bound controls the cover size, and dual growth controls the number of joint label patterns.
The algorithms and proofs are in \Cref{app:samples}.

\subsection{A necessary logarithmic factor}
\label{sec:sample-lower}

Even thresholds, with $d=d^{*}=1$, require more than $O(1/\eps)$ samples if the number of representation values must be independent of the domain size.

\begin{theorem}[Threshold sample lower bound]
  \label{thm:sample-lower}
  There is an absolute constant $c>0$ such that, for every sufficiently small $\eps>0$ and every integer $M\ge1$, there is an integer $N$ for which the following holds.
  Every shared-sample learner for the thresholds $\F=\{\1\{x\le a\}:a\in[N]\}$ on $[N]$ that uses at most $M$ representation values, has error $\eps$, and has confidence $2/3$, needs
  \[
    n\ge\frac{c}{\eps}\log\frac1\eps.
  \]
\end{theorem}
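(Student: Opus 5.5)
We will prove \Cref{thm:sample-lower} with Yao's principle: we build a distribution over input distributions $D$ and show that a learner with too few samples fails on a constant fraction of them. The intuition is that a representation with only $M$ values forces the learner to commit, from the sample, to at most $M-1$ cut points on the line. Each threshold $\1\{x\le a\}$ whose cut falls strictly inside a cell of mass about $\eps$ (up to constants) incurs error close to the smaller side of that cell. So the learner must place cuts near locations that carry mass but that it may never have sampled. The $\log(1/\eps)$ factor should come from a coupon-collector phenomenon: there are about $1/\eps$ ``atoms'' that must be separated, and each atom is seen only with probability about $n\eps$.

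\textbf{Construction.} Take $K=\lfloor c_0/\eps\rfloor$ blocks, each with mass about $1/K$. Split each block into two halves, each of mass about $1/(2K)\approx 4\eps$ (after adjusting $c_0$). The secret parameter is where, inside each block, the boundary between two tiny-but-relevant regions lies. It is chosen uniformly among many positions, which is why $N$ must be huge compared with $M$ and $1/\eps$. Concretely, within block $k$ put mass on two points $u_k<v_k$ that are random among $\Theta(N/K)$ positions with a large random gap.

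\textbf{Failure on unseen blocks.} The threshold at $u_k$ has error at least $\min\{D(\text{cell}\cap\{x\le u_k\}),D(\text{cell}\cap\{x>u_k\})\}$ in the block unless some cell boundary falls in $[u_k,v_k)$. If block $k$ is unseen in $S$, the learner's cells are chosen independently of $u_k,v_k$ given $S$. With $M$ cells, at most $M-1$ boundaries exist, so by the randomness of the positions the probability that one lands in $[u_k,v_k)$ is $O(M K/N)$, negligible for large $N$. Hence each unseen block forces error at least about $1/(2K)$ on its own threshold. For the $\sup_f$ error to exceed $\eps$ we need only one such block if $1/(2K)>\eps$. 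Unfortunately that alone gives only an $\Omega(1/\eps)$ bound, so we must refine.

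\textbf{Getting the log factor (main obstacle).} We use a multiscale version instead. Block $k$ uses a small mass $\eta$ spread over $\Theta(\log(1/\eps))$ nested dyadic points, with masses arranged so that an unseen block forces error $\ge\eps$ on one threshold. Alternatively, we keep the two-point construction but use $K\approx \eps^{-1}/\log$-scale blocks that are individually relevant only with small mass, requiring that \emph{some} block with mass about $2\eps$ is unseen. The cleanest plan is to take $K=1/(4\eps)$ blocks of mass $4\eps$ each, so an unseen block forces error $\ge 2\eps>\eps$. Then $\Pr[\text{some block unseen}]\ge 1/2$ unless $n\ge (1/(4\eps))\ln(1/(4\eps))\cdot c$, by the coupon-collector lower tail (second-moment bound on the number of unseen blocks). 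This gives $n\ge (c/\eps)\log(1/\eps)$ when confidence is $2/3$. The delicate step is verifying that an unseen block indeed forces error $\ge \eps$ for \emph{some} threshold regardless of the learner's majority labels and of the positions of the other blocks. This requires checking that the mass inside the block splits into two sides of mass at least $\eps$ each, and that no learner-chosen boundary falls between $u_k$ and $v_k$ except with probability $O(MK/N)$. We then choose $N\ge C M K/\eps$ to make this union bound over blocks small.
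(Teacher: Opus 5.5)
You have the right architecture, and it is the paper's: a hidden random support of $\Theta(1/\eps)$ atoms, each heavier than $\eps$, plus a second-moment coupon-collector bound showing that some atom is unseen when $n\le cK\ln K$. The gap is the step you flag as delicate, and as stated it is false. In the shared-sample model the encoder $Z_S$ is an arbitrary map into at most $M$ values. Its cells are arbitrary subsets of $[N]$, not intervals, so there is no set of ``at most $M-1$ cut points.'' For example, $Z(x)=x\bmod 2$ uses two values but has a boundary between every pair of consecutive points. Then $Z(u_k)\ne Z(v_k)$ with constant probability, and your argument no longer forces $\1\{x\le u_k\}$ to err on block $k$. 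Even if the cells were intervals, the bound $O(MK/N)$ would need $v_k-u_k=O(1)$. With the ``large random gap'' you describe, a fixed boundary inside the block lands in $[u_k,v_k)$ with constant probability.

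The missing idea is to count points rather than boundaries. Condition on $S$ and the seed, and let $B$ be the set of points at which every threshold is decoded correctly. All points of a cell $z$ receive the same decoded vector $(\dec_{S,a}(z))_{a\in[N]}$, while distinct points have distinct threshold label vectors. So each code value contains at most one point of $B$, and $|B|\le M$ whatever the shape of the cells. Each atom has mass $2\eps>\eps$, so a single decoding mistake at an atom, for any threshold, already exceeds the error budget. Success therefore forces every atom, in particular every unseen one, into $B$. Given $S$, an unseen atom is still spread over $\Theta(N/K)$ positions, so it lies in $B$ with probability $O(MK/N)$. Combined with your coupon-collector bound, this gives success probability at most $1/2+O(MK/N)<2/3$ once $N\gg MK$. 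With this step the blocks and the pairs $u_k<v_k$ are unnecessary. The paper takes $D$ uniform on a uniformly random $K$-subset of $[N]$ with $K=\lfloor1/(2\eps)\rfloor$ and $N=4M+K$. It bounds the probability that the unseen points all fall in $B$ by $M/(N-K)\le1/4$.
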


To see the obstruction, hide a uniform distribution on $K=\lfloor1/(2\eps)\rfloor$ points of a much larger line.
An error on even one support point has mass greater than $\eps$, so every threshold must be decoded exactly at every support point.
A fixed representation with $M$ values can decode all thresholds exactly at only $M$ points.
An unseen support point is unlikely to be one of them when $N$ is large.
Seeing all $K$ support points takes $\Omega(K\log K)$ samples; \Cref{app:sample-lower} makes this argument precise.

For thresholds this rate is achievable: the empirical cover in \Cref{alg:cover} has at most $1+8/\eps$ members and partitions the line into at most $2+8/\eps$ intervals.
Thus $O((\log(1/\eps)+\log(1/\delta))/\eps)$ samples suffice for $\log(1/\eps)+O(1)$ bits (\Cref{cor:threshold-samples}).
The unresolved issue is the dimension dependence for general classes.

\begin{question}[Sample complexity at optimal length]
  \label{q:sample}
  Can every class of primal and dual VC dimensions $d,d^{*}$ be learned to length $Cd^{*}\log(1/\eps)$ using
  $\mathrm{poly}(d,d^{*})\eps^{-c}\mathrm{polylog}(1/\eps,1/\delta)$ samples, where $C$ and $c$ are absolute constants?
\end{question}

\section{Extensions and implications}
\label{sec:implications}

The deterministic bound also applies beyond binary prediction.
For $\F\subseteq[0,1]^{\X}$, we allow decoder outputs in $[0,1]$ and measure expected absolute error.
Thresholding the functions reduces the problem to a binary class whose dual VC dimension is the pseudo-dimension $\Pdim(\F^{*})$ of the real-valued dual class.
Decoding by conditional medians then gives $O(\Pdim(\F^{*})\log(1/\eps))$ bits under absolute loss (\Cref{thm:real-upper}).

Randomness changes the class-specific picture.
For finite classes and domains, the product-distribution results of \citet{molinaro2015amplification} characterize the required length, up to a constant change in accuracy, by the largest covering entropy of $\F^{*}$ in $L_1(Q)$ over distributions $Q$ on tasks.
\Cref{sec:public} states this characterization in our notation and gives its cover and packing proofs.
The packing lower bound also applies to deterministic representations.
An analogous characterization for a single deterministic partition remains open.

The construction has a connection to reconstruction objectives, but that connection has limits.
For finite $\X$, optimal log-loss reconstruction from a deterministic code $Z$ costs $H(X\mid Z)=H(X)-H(Z)$ \citep{cover2006elements}.
Thus each greedy step lowers this loss.
Moreover, maximizing joint entropy over sufficiently many selected functions from $\F$ produces a universal representation (\Cref{thm:one-shot}).
The restriction to features from $\F$ matters: an arbitrary entropy-maximizing code can retain information unrelated to the desired tasks.
This leaves open when reconstruction yields compact representations that preserve every task. \Cref{sec:exp-sae,sec:exp-pilot} explore this question using SAEs on synthetic task families.

Finally, the approximation guarantee has an exact connection to multicalibration \citep{hebertjohnson2018multicalibration,gopalan2022omnipredictors}.
For every relabeling $Y$ of the inputs, consider the specific predictor $\E[Y\mid Z]$.
We measure multicalibration error by averaging over prediction values and weighting each group by its probability.
The worst mass-weighted $L_1$ multicalibration error with respect to a binary task $f$ is $\E\Var(f(X)\mid Z)$ when maximized over relabelings.
This quantity lies between one half and one times the majority error of $f$ on $Z$ (\Cref{thm:umc,cor:umc-equivalence}).
We also explain how ordinary calibration constrains alternative readouts in \Cref{sec:multicalibration}.

\section{Experimental verification and exploration}
\label{sec:experiments}
Our experiments explore whether trained sparse autoencoders (SAEs) exhibit the $d^* \log(1/\varepsilon)$ tradeoff of \cref{thm:det-upper,thm:threshold-lower}, and provide a setting to connect the theory to learned representations. First, we run the greedy procedure on vectors of independently sampled numbers; each task asks whether one coordinate is at most a chosen threshold. We call this collection of tasks the \textit{threshold product family} (\Cref{sec:exp-greedy}). Second, we study SAEs on finite task families (\Cref{sec:exp-sae}). Finally, we run the same type of autoencoders on a threshold product family (\Cref{sec:exp-pilot}). Implementation details are provided in \Cref{app:experiments}.

\subsection{Threshold-product family}
\label{sec:exp-greedy}
We test a greedy variant of \Cref{alg:greedy} on the threshold-product family in \eqref{eq:threshold-class}. At each step, we add the threshold with the largest majority error under the current partition. Inputs are uniform on $[N]^{d^{*}}$, with $N=256,128,64,32$ for $d^{*}=1,2,3,4$, respectively. We run the algorithm once per grid and record the first partition that meets each error target. The eleven targets range from $\eps=0.15$ down to $\eps=0.004$, using only those with $\eps\ge1/N$.
This experiment illustrates how the greedy construction's representation size grows as the error target decreases.
A threshold inside an interval of mass $m$ has majority error at most $m/2$, with equality at its middle. So each step halves a longest interval of some coordinate, the procedure stops once every interval has mass at most $2\eps$. Since every $N$ is a power of two the stored index has length
\begin{equation}
  \label{eq:greedy-exact}
  \log|\Pcal|=d^{*}\left\lceil\log\frac{1}{2\eps}\right\rceil
\end{equation}
for every grid and target in our range (\Cref{fig:greedy-scaling}), while the number of selected thresholds grows like $d^{*}/\eps$.
Not every class behaves this way: for $r$ independent fair binary coordinates and the coordinate functions, the procedure selects all $r$ coordinates for every target below one half, while the input itself provides zero error with $r$ bits (\Cref{ex:instance-saturation}).

\begin{figure}[t]
  \centering
  \begin{tikzpicture}
    \begin{axis}[
      experiment axis,
      width=0.45\textwidth,height=3cm,
      xmin=0,xmax=21,ymin=0,ymax=21,
      xtick={0,5,10,15,20},ytick={0,5,10,15,20},
      xlabel={$d^{*}\log(1/\varepsilon)$ (bits)},
      ylabel={$\log|\mathcal P|$ at stopping (bits)},
      legend style={at={(1.03,0.5)},anchor=west,column sep=4pt},
    ]
      \addplot[experiment guide,forget plot] coordinates {(0,0) (21,21)};
      \addplot[experiment series,blueSix,no marks,forget plot]
        table[col sep=comma,x=predictor,y=bits]{\arxivroot figures/data/greedy_threshold_d1.csv};
      \addlegendimage{swatch legend,draw=none,fill=blueSix}
      \addlegendentry{$d^{*}=1,\ N=256$}
      \addplot[experiment series,blueFour,no marks,forget plot]
        table[col sep=comma,x=predictor,y=bits]{\arxivroot figures/data/greedy_threshold_d2.csv};
      \addlegendimage{swatch legend,draw=none,fill=blueFour}
      \addlegendentry{$d^{*}=2,\ N=128$}
      \addplot[experiment series,blueTwo,no marks,forget plot]
        table[col sep=comma,x=predictor,y=bits]{\arxivroot figures/data/greedy_threshold_d3.csv};
      \addlegendimage{swatch legend,draw=none,fill=blueTwo}
      \addlegendentry{$d^{*}=3,\ N=64$}
      \addplot[experiment series,blueOne,no marks,forget plot]
        table[col sep=comma,x=predictor,y=bits]{\arxivroot figures/data/greedy_threshold_d4.csv};
      \addlegendimage{swatch legend,draw=none,fill=blueOne}
      \addlegendentry{$d^{*}=4,\ N=32$}
    \end{axis}
  \end{tikzpicture}
  \caption{Representation size of the maximum-error greedy construction on threshold products, plotted against $d^{*}\log(1/\varepsilon)$. The curves follow the expression in \cref{eq:greedy-exact}; the dashed line is $y=x$.}
  \label{fig:greedy-scaling}
\end{figure}

\subsection{Finite-family capacity saturation}
\label{sec:exp-sae}
We ask how many bits a sparse autoencoder, trained only for reconstruction, requires to support each function in a finite family at a target error. We then decrease the target error to observe whether the required bits increase and whether observation noise is a major source of error.

\paragraph{Setup.}
Each family is a set of binary functions on a finite set of equally likely inputs, which we call \emph{atoms}: the \emph{powerset} family contains every labeling of $n_{\mathrm a}\in\{2,3,4,6,8,10,12,16\}$ atoms ($d^{*}=\lfloor\log n_{\mathrm a}\rfloor$), and the \emph{parity} and \emph{coordinate} families contain the functions $x\mapsto w^{\top}x\bmod2$ and the $r$ coordinate functions on $\{0,1\}^{r}$, where $d^{*}=r$, for $1\le r\le6$ and $2\le r\le6$.
The encoder never sees an atom: each atom $A$ has a fixed prototype $\mu_A\in\R^{64}$, the encoder sees $V=\mu_A+\xi$ with fresh Gaussian noise $\xi$, the label of a task $f$ is $f(A)$, and all splits share the atoms and functions.
We train top-$k$ sparse autoencoders \citep{makhzani2013ksparse,gao2024scaling} with $W$ dictionary features, keeping the $k$ largest activations for each input. After training, we round each retained activation to one of $2^{q}$ levels including zero. We define the \emph{representation bit budget} as
\[
  B_{\mathrm{bits}}(W,k,q)=\left\lceil\log\sum_{i=0}^{k}\binom{W}{i}(2^{q}-1)^{i}\right\rceil.
\]
This budget counts the bits needed for a fixed-length index of all allowed codes, including codes the trained encoder never uses. We test 36 configurations on each instance, with five random seeds per configuration.
For each task, a lookup decoder predicts the most common training label among inputs with the same code. We measure worst-case error as the largest prediction error across all tasks in the family. For each instance, random seed, and error target $\varepsilon$, we select the tested configuration with the smallest representation bit budget whose validation error is at most $\varepsilon$.

We then evaluate the selected configuration on a test set and call it \textit{certified} if its test error also meets the target. Both checks use a one-sided 95\% upper confidence threshold on the worst-case error.
We compare the minimum bit budgets with two theoretical references in the noiseless setting, where the encoder observes the atom directly. The lossless reference $B_0=\lceil\log(\text{number of atoms})\rceil\le d^{*}+1$ is the number of bits needed to identify the atom, allowing every task to be answered without error. The shattered-set bound $B_{\mathrm{sh}}(\eps)=(1-h_2(\eps/p_{\mathrm{sh}}))d^{*}$, where $p_{\mathrm{sh}}$ is the mass of the dually shattered inputs ($2^{d^{*}}/n_{\mathrm a}$ for the powerset family and $1$ for the other two), provides a lower bound on the bits required to meet the target error (\Cref{prop:instance-lower}).

\paragraph{Results.}
The minimum bit budget increases only slightly as the error target decreases. In $\StudySaturationInstances$ of the $\StudyEndpointInstances$ instances with results at multiple targets, the median budget increases by at most two bits between the loosest and strictest selected targets (\Cref{fig:capacity-target}). This limited growth is consistent with the noiseless reference: these finite families admit small representations that already preserve every task without error.

The validation-selected representations also retain their accuracy on the test set. All $\StudySelectedOutcomes$ selections meet their targets in observed test error, and $\StudyCertifiedOutcomes$ also pass the confidence-threshold check. In $\StudyOutcomesNoCandidate$ other cases, none of the configurations passes the validation check. The median budget is typically one to three bits above the lossless reference $B_0$, and it exceeds the shattered-set lower bound in every setting with a selection (\Cref{fig:sae}). The prototype-based comparison suggests that observation noise alone does not explain the extra bits: an encoder that assigns each noisy observation to its nearest known prototype can meet error targets by using only $B_0$ bits.

\begin{figure}[t]
  \centering
  \newcommand{\PlotCapacity}[4]{\addplot[experiment series,#2,no marks,forget plot]
      table[col sep=comma,x=log2_inv_epsilon,y=median_bits,
        discard if not={condition}{#3}]
        {\arxivroot figures/data/capacity_#1.csv};
    \addplot[#2,dotted,line width=0.6pt,no marks,forget plot]
      table[col sep=comma,x=log2_inv_epsilon,y=zero_error_bits,
        discard if not={condition}{#3}]
        {\arxivroot figures/data/capacity_#1.csv};
    \addplot[#2,only marks,mark=triangle,mark options={rotate=180},mark size=1.6pt,forget plot]
      table[col sep=comma,x=log2_inv_epsilon,y expr=13.3,
        discard if not={condition}{#3},discard if not={any_selected}{0}]
        {\arxivroot figures/data/capacity_#1.csv};
    \addlegendimage{swatch legend,draw=none,fill=#2}
    \addlegendentry{#4}
  }
  \begin{tikzpicture}
    \begin{groupplot}[
      group style={group size=3 by 1,horizontal sep=1.2cm},
      experiment axis,height=2.8cm,
      xmin=2,xmax=8,ymin=0,ymax=14,
      xtick={2,4,6,8},ytick={0,4,8,12},
      unbounded coords=discard,
      xlabel={$\log(1/\varepsilon)$},
      legend columns=2,
      legend style={at={(0.5,-0.5)},anchor=north,font=\tiny,column sep=4pt},
    ]
      \nextgroupplot[title={(a) Parity},ylabel={Minimum bits}]
        \PlotCapacity{parity}{blueOne}{parity-1}{$r=1$}
        \PlotCapacity{parity}{blueTwo}{parity-2}{$r=2$}
        \PlotCapacity{parity}{blueThree}{parity-3}{$r=3$}
        \PlotCapacity{parity}{blueFour}{parity-4}{$r=4$}
        \PlotCapacity{parity}{blueFive}{parity-5}{$r=5$}
        \PlotCapacity{parity}{blueSix}{parity-6}{$r=6$}
      \nextgroupplot[title={(b) Powerset}]
        \PlotCapacity{powerset}{blueOne}{powerset-2}{$n_{\mathrm a}=2$}
        \PlotCapacity{powerset}{blueTwo}{powerset-3}{$n_{\mathrm a}=3$}
        \PlotCapacity{powerset}{blueThree}{powerset-4}{$n_{\mathrm a}=4$}
        \PlotCapacity{powerset}{blueFour}{powerset-6}{$n_{\mathrm a}=6$}
        \PlotCapacity{powerset}{blueFive}{powerset-8}{$n_{\mathrm a}=8$}
        \PlotCapacity{powerset}{blueSix}{powerset-10}{$n_{\mathrm a}=10$}
        \PlotCapacity{powerset}{blueSeven}{powerset-12}{$n_{\mathrm a}=12$}
        \PlotCapacity{powerset}{blueEight}{powerset-16}{$n_{\mathrm a}=16$}
      \nextgroupplot[title={(c) Coordinates}]
        \PlotCapacity{transposed_powerset}{blueTwo}{transposed_powerset-2}{$r=2$}
        \PlotCapacity{transposed_powerset}{blueThree}{transposed_powerset-3}{$r=3$}
        \PlotCapacity{transposed_powerset}{blueFour}{transposed_powerset-4}{$r=4$}
        \PlotCapacity{transposed_powerset}{blueFive}{transposed_powerset-5}{$r=5$}
        \PlotCapacity{transposed_powerset}{blueSix}{transposed_powerset-6}{$r=6$}
    \end{groupplot}
  \end{tikzpicture}
  \caption{Minimum tested bit budget passing validation at each error target, shown as the median over seeds with a configuration. Dotted lines indicate the lossless reference $B_0$. Open triangles indicate instances for which no configuration passes validation.}
  \label{fig:capacity-target}
\end{figure}

\begin{figure}[t]
  \centering
  \newcommand{\PlotTheorySAE}[1]{\addplot[experiment guide,forget plot]
      coordinates {(0,0) (5,5)};
    \addplot[only marks,experimentBlue,mark size=1.5pt,forget plot,
      scatter,scatter src=explicit symbolic,
      scatter/classes={
        1={mark=diamond*,fill=experimentBlue},
        0={mark=diamond*,fill=white,line width=0.7pt}}]
      table[col sep=comma,
        x expr=\thisrow{lower_bits}+(\thisrow{seed}-7)*0.02,y=selected_bits,
        meta=heldout_pass]
        {\arxivroot figures/data/seeds_#1.csv};
  }
  \begin{tikzpicture}
    \begin{groupplot}[
      group style={group size=3 by 1,horizontal sep=1.2cm},
      experiment axis,height=2.8cm,
      xmin=-0.15,xmax=4.6,ymin=0,ymax=14,
      xtick={0,1,2,3,4},ytick={0,4,8,12},
    ]
      \nextgroupplot[title={(a) Parity},ylabel={Minimum bits}]
        \PlotTheorySAE{parity}
      \nextgroupplot[title={(b) Powerset},xlabel={Noiseless reference $B_{\mathrm{sh}}(\varepsilon)$ (bits)}]
        \PlotTheorySAE{powerset}
      \nextgroupplot[title={(c) Coordinates},
        legend to name=legend:theory-frontier,legend columns=3,
        legend style={column sep=8pt}]
        \PlotTheorySAE{transposed_powerset}
        \addlegendimage{experiment guide}
        \addlegendentry{$B=B_{\mathrm{sh}}$}
        \addlegendimage{only marks,experimentBlue,mark=diamond*,
          mark options={fill=experimentBlue},mark size=1.5pt}
        \addlegendentry{Seed certified}
        \addlegendimage{only marks,experimentBlue,mark=diamond*,
          mark options={fill=white,line width=0.7pt},mark size=1.5pt}
        \addlegendentry{Seed not certified}
    \end{groupplot}
    \node[anchor=north,font=\footnotesize]
      at ([yshift=-1.0cm]group c2r1.south)
      {\pgfplotslegendfromname{legend:theory-frontier}};
  \end{tikzpicture}
  \caption{Bit budgets against the shattered-set bound $B_{\text{sh}}(\varepsilon)$. Each point represents one selected seed, with offsets for visibility. Filled diamonds were certified on test. The dashed line indicates $B_{\text{bits}}=B_{\text{sh}}(\varepsilon)$.}
  \label{fig:sae}
\end{figure}

\subsection{Accuracy-bit tradeoffs on threshold products}
\label{sec:exp-pilot}
The finite families in \Cref{sec:exp-sae} admit short lossless representations. To examine how prediction accuracy changes with representation size, we consider the threshold-product family ($\mathcal F_{r,1024}$) from \Cref{sec:threshold}. Each input contains \(r\in\{1,2,4,8,10\}\) independent coordinates, sampled uniformly from $[1024]$, rescaled to $[0,1]$. The task family contains all coordinate thresholds and has dual VC dimension ($d^*=r$). Representing the input exactly requires ($10d^*$) bits.

We train sparse autoencoders for reconstruction and vary the quantization precision of their activations to obtain different representation budgets. We evaluate each representation by predicting threshold labels from the reconstructed coordinates and measuring the largest error across all thresholds.

\Cref{fig:pilot} shows that prediction error decreases as the number of bits per dimension increases, with a similar trend across the five dimensions tested. The observed error decreases approximately exponentially with the normalized bit budget ($B_{\mathrm{bits}}/d^*$). For comparison, equal-width quantization of each coordinate also achieves exponentially decreasing worst-threshold error. These results illustrate the accuracy-bit tradeoff for learned representations on the threshold-product family.

\begin{figure}[t]
  \centering
  \newcommand{\PlotPilotSeeds}[3]{\addplot[only marks,#2,mark=diamond*,mark size=1.7pt,forget plot]
      table[col sep=comma,x=bits_per_dimension,y=worst_error,
        discard if not={series}{reconstruction-r#1-test}]
        {\arxivroot figures/data/threshold_pilot_points.csv};
    \addlegendimage{swatch legend,draw=none,fill=#2}
    \addlegendentry{#3}
  }
  \begin{tikzpicture}
    \begin{axis}[
      experiment axis,
      width=0.56\textwidth,height=3.5cm,
      ymode=log,log basis y=2,
      xmin=1,xmax=7,ymin=0.005,ymax=1,
      xtick={1,2,3,4,5,6,7},
      unbounded coords=discard,
      xlabel={$B_{\mathrm{bits}}/d^*$ (bits per dimension)},
      ylabel={Worst-case test error},
      legend style={at={(1.05,0.5)},anchor=west,row sep=3pt,column sep=4pt},
      legend image post style={mark size=2.4pt},
    ]
      \addplot[only marks,black!80,line width=0.8pt,mark=x,mark size=2.4pt,domain=1:6,samples=6] {2^(-x-1)};
      \addlegendentry{Equal-width quantization}
      \PlotPilotSeeds{1}{blueEight}{$d^{*}=1$}
      \PlotPilotSeeds{2}{blueSeven}{$d^{*}=2$}
      \PlotPilotSeeds{4}{blueFive}{$d^{*}=4$}
      \PlotPilotSeeds{8}{blueThree}{$d^{*}=8$}
      \PlotPilotSeeds{10}{blueOne}{$d^{*}=10$}
    \end{axis}
  \end{tikzpicture}
  \caption{Worst-case error against the bit budget per dimension for reconstruction-trained SAEs, one dot per random seed (three per dimension) for each $d^{*}\in\{1,2,4,8,10\}$.
    $\times$ are the exact error $2^{-\ell-1}$ of equal-width quantization using $\ell$ bits per coordinate.
    Prediction error decreases as the representation budget increases.}
  \label{fig:pilot}
\end{figure}

\section{Discussion}
\label{sec:discussion}

A universal representation must retain enough information to answer every task, yet it can be much shorter than the list of task labels used to construct it.
The proof makes this distinction quantitative: each useful refinement gains entropy, while dual VC dimension limits the number of possible joint patterns.
This gives a deterministic representation of optimal worst-case length and a one-pass implementation for finite classes.
The coordinate example shows why the same logarithmic lower bound cannot hold for every class.

Three questions are especially natural.
First, can a deterministic partition match the class-specific length achievable with shared randomness?
The dual-growth and packing lower bounds identify useful obstructions but do not settle this comparison.
Second, can one learn a representation of length $O(d^{*}\log(1/\eps))$ with a sample bound whose exponent in $1/\eps$ is independent of $d^{*}$?
The threshold case is understood, but our general learner pays for arbitrary readouts on an adaptively chosen partition.
Third, the number of selected functions is known only between $\Omega(d^{*}/\eps)$ and $O((d^{*}/\eps)\log(1/\eps))$ in the worst case.
Removing or proving necessary this remaining logarithm would sharpen the feature-selection guarantee.

The representation length is an information measure.
It does not bound the computation needed to evaluate the selected functions, find them in an implicit class, or fit every decoder from limited data.
Likewise, a short code with unrestricted readouts need not give a low-dimensional representation with linear readouts.
Understanding when additional structure makes these representations computationally and statistically practical is a separate question.

\section*{AI Disclosure}
OpenAI's Codex was used to revise the exposition, assist with mathematical arguments and proof checks, inspect references, and prepare the LaTeX source.
In particular, it helped develop the entropy proofs of the upper bounds and identify scope and quantifier issues in earlier formulations. Anthropic's Claude Code was used to help prepare figures and assist with writing experimental code.

\bibliography{\arxivroot references}
\appendix
\crefalias{section}{appendix}
\section{Additional Related Work}
\label{app:additional-related}

We complement the direct comparisons in \Cref{sec:related} with connections to other learning and representation models.

\paragraph{Evaluation problems and private learning.}
\citet{feldman2015sample} relate private learning and representation dimension to the randomized one-way complexity of evaluating a concept.
Their communication direction is the reverse of ours: the concept holder sends a message to the example holder.
In that orientation the row class is $\F$, with dimension $d$, rather than $\F^{*}$.
They distinguish deterministic and probabilistic families of hypotheses used to approximate a target, and relate these to private-coin and public-coin protocols, respectively.
In particular, their term \emph{deterministic representation dimension} does not mean deterministic communication.
These connections establish the importance of evaluation problems in learning theory, but their direction, privacy requirement, and approximation objects differ from the deterministic compression of an input considered here.

For a fixed input distribution, approximation by covers in the disagreement metric also appears in private learning with a known reference distribution \citep{chaudhuri2011sample}.
\citet{feldman2015sample} identify fixed-distribution deterministic representation dimension with the logarithm of the smallest such cover, allowing hypotheses outside the original class.

\paragraph{Disambiguation and sample compression.}
Dual VC dimension also enters disambiguation of partial concept classes: \citet{alon2021theory} use minimax and uniform convergence on the dual class to construct small majority representations.
That construction completes partially specified hypotheses, whereas our encoder partitions the input domain.
Likewise, the embedding obstructions of \citet{chase2024dual} concern compression of labeled samples rather than a finite code for each input.

\paragraph{Other communication and representation models.}
\citet{kane2019communication} study classification from samples distributed between communicating parties, counting transmitted examples as well as bits.
Their coVC dimension measures the size of certificates of nonrealizability and is distinct from dual VC dimension.
For representations with linear readouts, sign rank and its approximate variants provide different complexity measures \citep{alon2016sign,kamath2020approximate}.
For example, bounded VC dimension alone does not bound sign rank \citep{alon2016sign}.
Our bit-length bounds allow arbitrary task-specific decoders and therefore do not imply comparably small dimension or norm for linear representations.

\paragraph{Representations shared across tasks.}
\citet{baxter2000model}, \citet{maurer2016benefit}, and \citet{tripuraneni2020theory} study how a representation learned from several related tasks reduces the sample complexity of new tasks.
They measure the complexity of a class of representation maps and of the predictors placed on top of them, and they assume that the tasks share a low-dimensional structure.
In the task reading of our problem, we instead fix the class of tasks, allow arbitrary decoders, require one representation to serve every task in the class, and count bits.
Our results identify a combinatorial property of the class, the dual VC dimension, that determines how much any such shared representation must store.

\paragraph{Information-theoretic objectives.}
Rate-distortion theory \citep{shannon1959coding} asks for the fewest bits that describe a source within a given distortion, and the information bottleneck \citep{tishby1999information} trades the information a representation keeps about the input against the information it keeps about one target variable.
Our distortion is the worst error over a whole class of targets, and we measure the representation length deterministically rather than by mutual information.
The entropy of the representation nonetheless drives our construction, and \Cref{sec:objectives} makes the link to log-loss reconstruction explicit.

\paragraph{Multicalibration.}
Multicalibration \citep{hebertjohnson2018multicalibration} and omniprediction \citep{gopalan2022omnipredictors} ask for a single predictor that is simultaneously good for a class of tests or losses.
Universal representations ask for a single encoder that serves a whole class of functions.
\Cref{sec:multicalibration} shows that the two are equivalent up to a factor of two in the error, once one asks for multicalibration under every relabeling of the data.

\section{Interpretations of the representation guarantee}
\label{app:interpretations}

\begin{remark}[Boundary mass]
  \label{rem:boundary}
  For a measurable set $B$ and a finite partition $\Pcal$, let $\partial(B:\Pcal)$ be the union of the cells $A\in\Pcal$ with $D(A\cap B)>0$ and $D(A\setminus B)>0$.
  A cell outside $\partial(B:\Pcal)$ contributes nothing to $\minerr_{\Pcal}(\1_B)$, and a cell inside contributes at most half its mass.
  Hence $\minerr_{\Pcal}(\1_B)\le\frac12 D(\partial(B:\Pcal))$, and the theorem of \citet{adams2012uniform} implies that $\CC^{\to}_{\det}(\F,D,\eps)<\infty$ for every $\eps>0$ whenever $\F$ has finite VC dimension.
\end{remark}

\subsection{Agnostic prediction and other interpretations}
\label{sec:views}

By \Cref{prop:partition}, a universal representation of length $b$ with error $\eps$ is the same object as a partition of $\X$ into at most $2^{b}$ cells on which every member of $\F$ has majority error at most $\eps$.
This object can be read in several ways, and the results of this paper apply to all of them.

\paragraph{Downstream tasks.}
If the members of $\F$ are tasks that will be asked of a learned representation, then $\Enc$ is a task-agnostic representation and $\Dec(\cdot,f)$ is the best readout for task $f$.

\paragraph{Queries and one-way communication.}
If the members of $\F$ are queries, then $\Enc(x)$ is a sketch of the data point $x$ from which every query can be answered, and $\CC^{\to}_{\det}(\F,D,\eps)$ is the deterministic one-way distributional communication complexity of evaluating $f(x)$, where Alice holds $x\sim D$, Bob holds $f$, and the error is measured for the worst $f$ \citep{kushilevitz1997communication}.

\paragraph{Coarse graining.}
If the members of $\F$ are observables, we can regard the cells of the partition as macro-states.
For each observable, we predict its majority value within each macro-state and require the resulting error, averaged over macro-states according to their probabilities under $D$, to be at most $\eps$.
\Cref{rem:boundary} relates this to the finite approximability of \citet{adams2012uniform}.

\paragraph{Hypothesis classes.}
If $\F$ is a hypothesis class, then a universal representation is a feature map that is sufficient for agnostic learning with $\F$, simultaneously for every labeling of the inputs.

\begin{proposition}[Agnostic sufficiency]
  \label{prop:agnostic}
  For an encoder $\Enc:\X\to\{0,1\}^{b}$ and $\eps\ge0$, the following are equivalent.
  \begin{enumerate}[label=(\roman*),leftmargin=2em]
    \item With majority decoding, $\Enc$ is a universal representation of $\F$ with error at most $\eps$.
    \item For every distribution $P$ of a pair $(X,Y)$ with $X\sim D$ and $Y\in\{0,1\}$,
          \[
            \min_{\beta:\{0,1\}^{b}\to\{0,1\}}\Pr_P\bigl[\beta(\Enc(X))\neq Y\bigr]\le\inf_{f\in\F}\Pr_P\bigl[f(X)\neq Y\bigr]+\eps .
          \]
  \end{enumerate}
\end{proposition}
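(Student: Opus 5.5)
The plan is to prove both implications directly from the cell-wise description of majority decoding in \Cref{lem:majority} and \Cref{prop:partition}, writing $Z=\Enc(X)$ throughout. The key observation is that for any joint law $P$ of $(X,Y)$, the minimum over readouts $\beta:\{0,1\}^{b}\to\{0,1\}$ of $\Pr_P[\beta(Z)\ne Y]$ is attained cell by cell. On each code value $z$ one picks the label more likely under $P(\cdot\mid Z=z)$. In particular, when $Y=f(X)$ this minimum equals $\minerr_Z(f)$, and it is achieved by the majority decoder.

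\emph{(i) implies (ii).} Fix $P$ with $X\sim D$ and any $f\in\F$. Let $\beta_f(z)$ be the majority label of $f$ on the cell $\{x:\Enc(x)=z\}$ under $D$, which is exactly the majority decoder $\Dec(z,f)$. The triangle inequality for disagreement events gives
\[
  \Pr_P[\beta_f(Z)\ne Y]\le\Pr_{D}[\beta_f(Z)\ne f(X)]+\Pr_P[f(X)\ne Y]\le\eps+\Pr_P[f(X)\ne Y].
\]
The first term depends only on the marginal $D$ of $X$, and (i) bounds it by $\eps$. The minimum over $\beta$ is at most the left side. Taking the infimum over $f\in\F$ on the right yields (ii).

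\emph{(ii) implies (i).} Fix $f\in\F$ and apply (ii) to the deterministic labeling $Y=f(X)$ with $X\sim D$. This is a legitimate joint law because $f$ is measurable. The right side of (ii) is at most $\Pr[f(X)\ne f(X)]+\eps=\eps$, since $f$ itself is a competitor in the infimum. Hence some readout $\beta$ has $\Pr_D[\beta(Z)\ne f(X)]\le\eps$. Applying \Cref{lem:majority} on each cell of positive mass, majority decoding does at least as well as any $\beta$. So the majority decoder has error $\minerr_Z(f)\le\eps$, and since $f$ was arbitrary, (i) holds.

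Neither direction poses a real obstacle. The only points needing care are that the majority label in (i) is computed under $D$ and not under $P$, which is fine because the bound is on $\Pr[\beta_f(Z)\ne f(X)]$ and this depends only on $D$, and that the minimum over $\beta$ is attained since there are finitely many readouts.
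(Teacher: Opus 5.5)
Your proposal is correct and follows the paper's proof essentially line for line. For (i)$\Rightarrow$(ii) it uses the disagreement triangle inequality with the $D$-majority decoder $\beta_f$. For (ii)$\Rightarrow$(i) it specializes to the labeling $Y=f(X)$ and uses the optimality of majority decoding from \Cref{lem:majority}.
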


\begin{proof}
  Assume (i), fix $P$ and $f\in\F$, and let $\beta_f$ be the majority decoder of $f$.
  The disagreement bound gives
  \begin{align*}
    \Pr_P[\beta_f(\Enc(X))\ne Y]
    &\le\Pr_P[f(X)\ne Y]+\Pr_D[\beta_f(\Enc(X))\ne f(X)]\\
    &\le\Pr_P[f(X)\ne Y]+\eps.
  \end{align*}
  Taking the infimum over $f$ gives (ii).
  Conversely, given (ii) and $f\in\F$, take $Y=f(X)$: some $\beta$ has $\Pr_D[\beta(\Enc(X))\neq f(X)]\le\eps$, so the majority error of $f$ is at most $\eps$.
\end{proof}

So replacing the inputs by an $\eps$-universal representation costs at most $\eps$ in approximation error for every target, and the representation is computed without seeing any labels.

\paragraph{Calibrated predictors.}
\Cref{sec:multicalibration} gives a further equivalent reading, up to a factor of two in $\eps$: for every labeling, the conditional-mean predictor based on the representation is multicalibrated with respect to $\F$.

\section{Refinements of the deterministic bounds}
\label{app:tight}

\subsection{Details of the stopping calculation}

\begin{lemma}[Few realized patterns]
  \label{lem:few-patterns}
  For every list $\mathbf g$ of $t$ functions in $\F$, $H(Z_{\mathbf g}(X))\le\log\Gdual(t)$, and if $t\ge d^{*}$ then $\log\Gdual(t)\le d^{*}\log(et/d^{*})$.
\end{lemma}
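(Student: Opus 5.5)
The plan has two steps: bound the entropy by the size of the range, then apply Sauer--Shelah in the dual form \eqref{eq:sauer}.

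For the first inequality, I will observe that $Z_{\mathbf g}(X)$ takes values only in the set of realized patterns $\{Z_{\mathbf g}(x):x\in\X\}$. This set is finite, with cardinality at most $\Gdual(t)$ by the definition of the dual growth function as a supremum over lists of length $t$; lists with repeated functions are allowed, so the given $\mathbf g$ is covered. \Cref{fact:entropy}(i) then gives $H(Z_{\mathbf g}(X))\le\log|\range(Z_{\mathbf g})|\le\log\Gdual(t)$. Measurability of $Z_{\mathbf g}$ follows because each $g_i$ is measurable, so $Z_{\mathbf g}(X)$ is a finite-valued random variable and its entropy is defined.

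For the second inequality, I will use the Sauer--Shelah bound applied to the dual class $\F^{*}$, which has VC dimension $d^{*}$. A list $g_1,\ldots,g_t$ is a set of $t$ points in the domain $\F$ of $\F^{*}$, possibly with repetitions. Repetitions only reduce the number of distinct patterns, since duplicate coordinates are determined by one another, so the number of realized patterns is the number of restrictions of $\F^{*}$ to at most $t$ points. This is at most $\sum_{i=0}^{d^{*}}\binom ti$ once $t\ge d^{*}$. The standard estimate $\sum_{i\le d^{*}}\binom ti\le(et/d^{*})^{d^{*}}$ for $t\ge d^{*}\ge1$ follows by multiplying by $(d^{*}/t)^{d^{*}}\le(d^{*}/t)^{i}$ and bounding $\sum_i\binom ti(d^{*}/t)^i\le(1+d^{*}/t)^t\le e^{d^{*}}$. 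Taking logarithms gives $\log\Gdual(t)\le d^{*}\log(et/d^{*})$.

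Both statements are already recorded in \eqref{eq:sauer}, so the lemma amounts to citing it together with \Cref{fact:entropy}(i). There is no real obstacle; the only point needing care is that repeated functions in the list do not break the dual Sauer--Shelah count.
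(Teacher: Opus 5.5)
Your proposal is correct and follows the paper's own proof: the first bound comes from $Z_{\mathbf g}(X)$ taking at most $\Gdual(t)$ values together with \Cref{fact:entropy}(i), and the second is the dual Sauer--Shelah bound \eqref{eq:sauer}. Your extra checks, that repeated functions in the list only reduce the pattern count and the derivation of $(et/d^{*})^{d^{*}}$ from the binomial sum, are both valid.
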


\begin{proof}
  The first bound follows because $Z_{\mathbf g}(X)$ takes at most $\Gdual(t)$ values; the second is the Sauer--Shelah bound~\eqref{eq:sauer}.
\end{proof}

\begin{lemma}[Solving the stopping inequality]
  \label{lem:solve-stopping}
  Let $0<\eps\le1/4$ and $t\ge d^{*}$.
  Then
  \[
    2\eps t<d^{*}\log(et/d^{*})
    \quad\Longrightarrow\quad
    t<\frac{4d^{*}}{\eps}\log\frac1\eps.
  \]
\end{lemma}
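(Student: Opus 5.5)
The plan is to prove the contrapositive. Write $L=\log(1/\eps)\ge2$ and $T=4d^{*}L/\eps$. I will show that $t\ge T$ (with $t\ge d^{*}$) forces $2\eps t\ge d^{*}\log(et/d^{*})$. Substituting $s=t/d^{*}\ge1$ removes $d^{*}$, so the claim becomes
\[
  2\eps s\ge\log(es)\qquad\text{whenever } s\ge 4L/\eps .
\]

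\emph{Monotonicity.} Set $\phi(s)=2\eps s-\log(es)$. Its derivative is $\phi'(s)=2\eps-1/(s\ln2)$, which is nonnegative once $s\ge1/(2\eps\ln2)$. Since $L\ge2$, the threshold $4L/\eps\ge8/\eps$ already exceeds $1/(2\eps\ln 2)\approx0.72/\eps$. Hence $\phi$ is nondecreasing on $[4L/\eps,\infty)$, and it suffices to check $\phi(4L/\eps)\ge0$.

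\emph{Endpoint.} At $s=4L/\eps$ the left side is $2\eps s=8L$. The right side is
\[
  \log(4eL/\eps)=\log(4e)+\log L+L .
\]
So the endpoint inequality reduces to
\[
  7L\ge\log(4e)+\log L .
\]
Here $\log(4e)\approx3.44$ and $\log L\le L-1$ for $L\ge2$ (indeed $\log L\le L/2$). Therefore the right side is at most $3.44+L$, which is at most $7L$ because $6L\ge12$.

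This gives $\phi(s)\ge0$ for all $s\ge4L/\eps$, contradicting the hypothesis $2\eps t<d^{*}\log(et/d^{*})$. Hence $t<T$, as claimed.

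The only mildly delicate point is confirming that the endpoint $4L/\eps$ lies beyond the minimizer of $\phi$, so that the single endpoint check suffices. As shown above, this follows immediately from $\eps\le1/4$.
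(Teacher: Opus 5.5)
Your proof is correct and is essentially the paper's argument: the paper substitutes $y=2\eps t/d^{*}$ instead of $s=t/d^{*}$, but it uses the same monotonicity-plus-endpoint check and reaches the identical inequality $7L\ge\log(4e)+\log L$. One small slip: your parenthetical claim $\log L\le L/2$ is false for $L\in(2,4)$ (for example, $\log 3\approx1.585>1.5$), but your argument never uses it, since $\log L\le L-1$ already suffices.
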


\begin{proof}
  Let $y=2\eps t/d^{*}$ and $L=\log(1/\eps)\ge2$.
  The hypothesis reads $y<L+\log(ey/2)$.
  The function $y\mapsto y-\log(ey/2)$ is increasing for $y\ge1/\ln2$, and at $y=8L\ge16$ it equals $8L-\log(4eL)\ge L$, because $7L\ge\log(4e)+\log L$ for $L\ge2$.
  Hence $y<8L$, that is, $t<4d^{*}L/\eps$.
\end{proof}

\subsection{A bound from the full dual growth function}
The proof uses the dual VC dimension only through the Sauer--Shelah bound on $\Gdual$.
Keeping $\Gdual$ itself gives a bound that can be much stronger for classes whose dual growth is smaller than the Sauer--Shelah bound.

\begin{definition}[Stopping index]
  \label{def:stopping}
  For $0<\eps<1/2$, let $t^{\star}_{\F}(\eps)=\min\{t\ge1:\log\Gdual(t)\le2\eps t\}$, with $t^{\star}_{\F}(\eps)=\infty$ if no such $t$ exists.
\end{definition}

\begin{theorem}[Dual-growth upper bound]
  \label{thm:growth-upper}
  Let $D$ be a distribution on $\X$ and $0<\eps<1/2$.
  If $t^{\star}=t^{\star}_{\F}(\eps)<\infty$, then \Cref{alg:greedy} stops after selecting $t<t^{\star}$ functions, every $f\in\F$ then satisfies $\minerr_{\mathbf g}(f)\le\eps$, and
  \[
    \CC^{\to}_{\det}(\F,D,\eps)\le\bigl\lceil\log\Gdual(t^{\star}-1)\bigr\rceil .
  \]
\end{theorem}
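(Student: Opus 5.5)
The plan reruns the proof of \Cref{thm:det-upper}, but stops one step before applying Sauer--Shelah: the entropy potential is compared directly with $\log\Gdual(t)$.

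\emph{Step 1: the potential.} Suppose that at some stage \Cref{alg:greedy} has selected a list $\mathbf g=(g_1,\ldots,g_t)$ with $t\ge1$. Each $g_i$ was appended only because its majority error on the partition generated by $g_1,\ldots,g_{i-1}$ exceeded $\eps$. By \Cref{lem:entropy-increment}, that step therefore raised $H(Z(X))$ by more than $2\eps$. Telescoping from $H(Z_{()}(X))=0$ gives $H(Z_{\mathbf g}(X))>2\eps t$.

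\emph{Step 2: the ceiling.} By \Cref{lem:few-patterns}, $H(Z_{\mathbf g}(X))\le\log\Gdual(t)$. So every $t\ge1$ the algorithm reaches satisfies $2\eps t<\log\Gdual(t)$. By the definition of $t^{\star}$, we have $\log\Gdual(t^{\star})\le2\eps t^{\star}$. Hence the list can never reach length $t^{\star}$, and the loop halts after some $t<t^{\star}$ selections.

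\emph{Step 3: the length bound.} The loop halts only when no $f$ has $\minerr_{\mathbf g}(f)>\eps$, so every task has error at most $\eps$. The realized patterns number at most $\Gdual(t)$. Since $\Gdual$ is nondecreasing (repeat a function to extend a list without adding patterns) and $t\le t^{\star}-1$, this is at most $\Gdual(t^{\star}-1)$. Indexing the patterns, or equivalently applying \Cref{prop:partition}, gives length $\lceil\log\Gdual(t^{\star}-1)\rceil$. This also covers $t=0$, since $\Gdual(0)=1$.

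The one subtle point is termination for infinite $\F$. The loop is an existence process, and it does not literally iterate forever. The argument shows that any run of the selection process has length less than $t^{\star}$. Therefore at some finite stage no violating $f$ exists, and no transfinite issue arises. The only other things to check are the monotonicity of $\Gdual$ and the strict inequality in Step 1.
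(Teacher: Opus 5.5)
Your proposal is correct and matches the paper's proof: \Cref{lem:entropy-increment} and \Cref{lem:few-patterns} force $2\eps t<\log\Gdual(t)$ at every reached $t\ge1$, the definition of $t^{\star}$ rules this out at $t=t^{\star}$, and the cell count then follows from $\Gdual(t)\le\Gdual(t^{\star}-1)$. You also spell out two details the paper leaves implicit, namely the monotonicity of $\Gdual$ (by repeating a function) and the case $t=0$.
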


\begin{proof}
  As in the proof of \Cref{thm:det-upper}, after any positive number $t$ of selections, $2\eps t<\log\Gdual(t)$.
  If the procedure made $t^{\star}$ selections, this would contradict the definition of $t^{\star}$, so it stops after some $t<t^{\star}$ selections, with at most $\Gdual(t)\le\Gdual(t^{\star}-1)$ cells.
\end{proof}

\subsection{Zero error and finite classes}
\label{sec:interpolation}

At zero error, the answer is the number of distinct label vectors on the support.
To avoid null sets, we state it for finitely supported distributions.

\begin{proposition}[Zero-error complexity]
  \label{prop:zero-error}
  If $D$ has finite support $S$ and $\Lambda_D=|\{(f(x))_{f\in\F}:x\in S\}|$, then $\CC^{\to}_{\det}(\F,D,0)=\lceil\log\Lambda_D\rceil$.
  If $\F$ is finite with $|\F|=m$, then $\Lambda_D\le\sum_{i=0}^{d^{*}}\binom{m}{i}$.
\end{proposition}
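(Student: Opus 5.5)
We prove the two assertions of \Cref{prop:zero-error} separately, and both reduce to counting cells. For the first, we use \Cref{prop:partition} with $\eps=0$. Because $D$ is finitely supported, we assume without loss of generality that every $x\in S$ has positive mass; points outside $S$ have zero mass in total. Then $\minerr_{\Pcal}(f)=0$ holds exactly when $f$ is constant on $A\cap S$ for every cell $A$ with $D(A)>0$.

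For the upper bound, group the inputs by their full label vector $\Lambda(x)=(f(x))_{f\in\F}$. There are $\Lambda_D$ classes on $S$, and we place all of $\X\setminus S$ into one of these cells, say the first. This cannot hurt, since the added set has measure zero. The resulting partition has $\Lambda_D$ cells, and every $f$ is constant on the support portion of each cell. Measurability needs a brief check: each cell is a finite union of singletons of $S$, possibly together with the complement of $S$. We assume singletons are measurable, as is implicit in speaking of finitely supported distributions. If that assumption were unwanted, we would instead take preimages of the finitely many realized patterns under finitely many separating functions.

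For the lower bound, suppose a partition has fewer than $\Lambda_D$ cells. By pigeonhole, some cell contains two support points $x,x'$ with $\Lambda(x)\ne\Lambda(x')$. Some $f$ then separates them, so that $f$ takes both values on that cell with positive mass on each. Its majority error is therefore at least $\min\{D(x),D(x')\}>0$. Together with \Cref{prop:partition}, this gives $\CC^{\to}_{\det}(\F,D,0)=\lceil\log\Lambda_D\rceil$.

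For the second assertion, $\Lambda_D$ is at most the number of distinct patterns $(f_1(x),\dots,f_m(x))$ over $x\in\X$, where $f_1,\dots,f_m$ enumerate $\F$. That number is at most $\Gdual(m)$, and the Sauer--Shelah bound \eqref{eq:sauer} gives $\Gdual(m)\le\sum_{i=0}^{\min\{m,d^*\}}\binom mi\le\sum_{i=0}^{d^*}\binom mi$. The second inequality holds because $\binom mi=0$ for $i>m$.

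The only delicate step is handling the zero-mass region and measurability in the upper-bound construction. Everything else is immediate pigeonhole and Sauer--Shelah.
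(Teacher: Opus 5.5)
Your proof is correct and follows the paper's argument: the lower bound is that two support points with different label vectors must lie in different cells (you make the pigeonhole step explicit via \Cref{prop:partition}), the upper bound is the index of the full label vector, and the bound on $\Lambda_D$ is Sauer--Shelah for $\F^{*}$ viewed as a class on the $m$ points of $\F$. Your extra care with the zero-mass region and the measurability of the cells fills in details that the paper leaves implicit.
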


\begin{proof}
  Two support points with different label vectors are separated by some function, so a zero-error representation must put them in different cells; conversely, the index of the label vector is a zero-error representation.
  The bound on $\Lambda_D$ is the Sauer--Shelah lemma for $\F^{*}$, viewed as a class on the $m$ points of $\F$.
\end{proof}

For a finite class of size $m\ge d^{*}$ and $0<\eps\le1/4$, writing the index of the full label vector works for every distribution, so \Cref{thm:det-upper,prop:zero-error} give
\begin{equation}
  \label{eq:finite-upper}
  \CC^{\to}_{\det}(\F,D,\eps)\le\min\left\{4d^{*}\log\frac1\eps,\ \left\lceil d^{*}\log\frac{em}{d^{*}}\right\rceil\right\}.
\end{equation}

The upper bound \eqref{eq:finite-upper} interpolates between the approximate bound and the zero-error bound, and the threshold product shows that this interpolation is tight.

\begin{theorem}[Finite-class interpolation]
  \label{thm:interpolation}
  There are absolute constants $c,C,c_0>0$ such that for every $d^{*}\ge1$, every $m\ge Cd^{*}$, and every $0<\eps<c_0$, there are a class $\F$ with $|\F|\le m$ and $\VCdim(\F^{*})=d^{*}$ and a distribution $D$ such that
  \[
    \CC^{\to}_{\det}(\F,D,\eps)\ge c\,d^{*}\log\min\left\{\frac1\eps,\frac{m}{d^{*}}\right\}.
  \]
\end{theorem}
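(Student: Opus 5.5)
We will take the threshold product $\F_{r,N}$ of \eqref{eq:threshold-class} with $r=d^{*}$ and the uniform distribution $D$ on $[N]^{r}$, and then invoke \Cref{thm:threshold-lower}. By \Cref{lem:threshold-dual}, this class has dual VC dimension exactly $d^{*}$, and its size is $r(N-1)$. The plan is to pick $N$ so that three things hold at once: the class fits the budget, $|\F_{r,N}|\le m$; the error condition $\eps\le1/(20N)$ of \Cref{thm:threshold-lower} is met; and $\log N$ is of order $\log\min\{1/\eps,m/d^{*}\}$.

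Concretely, we will set
\[
  N=\min\left\{\left\lfloor\frac{1}{20\eps}\right\rfloor,\ \left\lfloor\frac{m}{d^{*}}\right\rfloor\right\}.
\]
Then $r(N-1)\le d^{*}N\le m$ and $\eps\le1/(20N)$. The constants ensure $N$ is large. Taking $c_0\le1/640$ gives $\lfloor1/(20\eps)\rfloor\ge32$, and taking $C\ge32$ gives $\lfloor m/d^{*}\rfloor\ge32$, so $N\ge32$ and in particular $N\ge2$.

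\Cref{thm:threshold-lower} then gives a length lower bound of $d^{*}(\log N-2)$. Since $N\ge32$, we have $\log N\ge5$, so $\log N-2\ge\tfrac35\log N$. It remains to compare $\log N$ with $\log\min\{1/\eps,m/d^{*}\}$. We use $\lfloor u\rfloor\ge u/2$ for $u\ge1$, which gives $N\ge\tfrac12\min\{1/(20\eps),m/d^{*}\}\ge\tfrac1{40}\min\{1/\eps,m/d^{*}\}$. Writing $\mu=\min\{1/\eps,m/d^{*}\}$, we have $\mu\ge32$ under the constant choices above (after enlarging $C$ and shrinking $c_0$ if needed so that $\mu\ge 40^2$, say). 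Hence $\log N\ge\log\mu-\log40\ge\tfrac12\log\mu$. Combining the two estimates yields $\CC^{\to}_{\det}(\F,D,\eps)\ge\tfrac{3}{10}d^{*}\log\mu$, so the statement holds with $c=3/10$.

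The only delicate step is this constant bookkeeping: absorbing the additive $-2r$ and the floor and $\log 40$ losses into a multiplicative constant. This is exactly why the statement requires $m\ge Cd^{*}$ and $\eps<c_0$. I would fix $C=40^2$ and $c_0=40^{-2}$ at the outset, so that $\mu\ge1600$ and all the inequalities above hold with room to spare. No other ingredient is needed beyond \Cref{lem:threshold-dual} and \Cref{thm:threshold-lower}.
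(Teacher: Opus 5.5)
Your proposal is correct and is essentially the paper's proof. The paper also takes $\F_{d^{*},N}$ under the uniform distribution and applies the threshold-product lower bound with $C=1600$ and $c_0=1/1600$, choosing $N=\lfloor\min\{1/\eps,m/d^{*}\}/20\rfloor$ and absorbing the $-2$ and $\log 40$ losses to get $c=1/4$; your slightly larger $N$, with $\lfloor m/d^{*}\rfloor$ in place of $\lfloor m/(20d^{*})\rfloor$, and your resulting $c=3/10$ are inessential variations.
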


\begin{proof}
  Put $M=\min\{1/\eps,m/d^{*}\}$ and $N=\lfloor M/20\rfloor$.
  The class $\F_{d^{*},N}$ has $d^{*}(N-1)\le m$ functions and dual VC dimension $d^{*}$.
  Also $\eps\le1/(20N)$, so \Cref{thm:threshold-lower} gives $d^{*}(\log N-2)$ bits.
  Taking $C=1600$ and $c_0=1/1600$ ensures $M\ge1600$ and $N\ge M/40\ge40$.
  Therefore $\log N-2\ge\frac12\log N\ge\frac14\log M$, proving the claim with $c=1/4$.
\end{proof}

\subsection{How many functions must a proper representation use?}
\label{sec:feature-count}

The entropy-greedy representation is a function of the labels of the selected functions.
We now count the selected functions rather than the bits.

\begin{definition}[Selection complexity]
  \label{def:feature-complexity}
  The \emph{selection complexity} $\mathrm{sel}_\eps(\F,D)$ is the least $t$ such that there are $g_1,\ldots,g_t\in\F$ and, for every $f\in\F$, a map $\dec_f:\{0,1\}^{t}\to\{0,1\}$ with
  \[
    \Pr_{X\sim D}\bigl[\dec_f(g_1(X),\ldots,g_t(X))\neq f(X)\bigr]\le\eps .
  \]
\end{definition}

Every $f$ is thus nearly a Boolean function of the selected functions; this does not require $f$ to lie in their linear span.

\begin{corollary}[Selection upper bound]
  \label{cor:feature-upper}
  For every distribution $D$ and every $0<\eps\le1/4$,
  \[
    \mathrm{sel}_\eps(\F,D)<\frac{4d^{*}}{\eps}\log\frac1\eps .
  \]
\end{corollary}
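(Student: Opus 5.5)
The plan is to read this off directly from \Cref{thm:det-upper} and its proof. \Cref{alg:greedy} outputs a list $\mathbf g=(g_1,\ldots,g_t)$ of members of $\F$. When it stops, every $f\in\F$ satisfies $\minerr_{\mathbf g}(f)\le\eps$. Moreover, \Cref{thm:det-upper} bounds the number of selections strictly below $(4d^{*}/\eps)\log(1/\eps)$. It therefore suffices to check that the stopping condition yields decoders of the form required by \Cref{def:feature-complexity}.

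For each $f\in\F$, define $\dec_f:\{0,1\}^{t}\to\{0,1\}$ by majority decoding. For a pattern $\sigma$ whose cell $A_\sigma(\mathbf g)$ has positive mass, let $\dec_f(\sigma)$ be the label of $f$ with larger $D$-mass on $A_\sigma(\mathbf g)$. On all remaining patterns, which are either unrealized or null, set $\dec_f(\sigma)$ arbitrarily. By \Cref{lem:majority}, summed over cells, this gives
\[
\Pr_{X\sim D}\bigl[\dec_f(g_1(X),\ldots,g_t(X))\ne f(X)\bigr]=\minerr_{\mathbf g}(f)\le\eps.
\]
Null cells contribute nothing to this probability. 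Hence $\mathrm{sel}_\eps(\F,D)\le t<(4d^{*}/\eps)\log(1/\eps)$.

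Two small points need checking. First, the degenerate case $t=0$ is harmless: if the algorithm selects nothing, every $f$ is already $\eps$-close to a constant, and the bound holds trivially because its right side is positive. Second, the step that actually needs care is the one inherited from \Cref{thm:det-upper}, namely that the process terminates at all with fewer than $(4d^{*}/\eps)\log(1/\eps)$ selections. This follows from the entropy potential. By \Cref{lem:entropy-increment}, each selection raises $H(Z_{\mathbf g}(X))$ by more than $2\eps$, while \Cref{lem:few-patterns} caps this entropy at $d^{*}\log(et/d^{*})$. \Cref{lem:solve-stopping} then converts this into $t<4d^{*}L/\eps$ with $L=\log(1/\eps)$. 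For $t<d^{*}$, the bound holds since $4L/\eps\ge 32$.

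There is one subtlety for infinite $\F$: the argument does not require an effective choice of violating functions. The while loop must halt after finitely many steps by the entropy bound, so the list $\mathbf g$ exists regardless of how the violators are chosen. No real obstacle remains, since the corollary is essentially a restatement of the count in \Cref{thm:det-upper}.
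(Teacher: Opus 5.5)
Your proposal is correct and matches the paper's proof: take the functions selected by \Cref{alg:greedy}, whose number is below $(4d^{*}/\eps)\log(1/\eps)$ by \Cref{thm:det-upper}, and let each $\dec_f$ output the majority label of $f$ on each cell. Your extra checks are sound and simply make explicit what the paper leaves implicit: the case $t=0$, the case $t<d^{*}$ via $4L/\eps\ge32$, and termination for infinite $\F$.
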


\begin{proof}
  Use the functions selected by \Cref{alg:greedy}, which are fewer than $(4d^{*}/\eps)\log(1/\eps)$ by \Cref{thm:det-upper}, and let $\dec_f$ output the majority label of $f$ on each cell.
\end{proof}

\begin{theorem}[Selection lower bound]
  \label{thm:feature-lower}
  There are absolute constants $c,c_0>0$ such that for every $r\ge1$ and $0<\eps<c_0$, the class $\F_{r,N}$ with $N=\lfloor1/(4\eps)\rfloor$ and the uniform distribution satisfy $\mathrm{sel}_\eps(\F_{r,N},D)\ge c\,r/\eps$.
\end{theorem}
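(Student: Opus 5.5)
The plan is to reduce to a single coordinate and use a counting argument on the cells. Suppose $g_1,\ldots,g_t\in\F_{r,N}$ with decoders $\dec_f$ achieve error at most $\eps$ for every threshold. Each $g_i$ is a threshold $f_{j,a}$ on one coordinate $j$. Let $t_j$ be the number of selected thresholds on coordinate $j$, so $t=\sum_j t_j$. I will show that $t_j\ge c/\eps$ for at least half of the coordinates $j$. This gives $t\ge (r/2)(c/\eps)$.

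The structure is as follows. Fix $j$. The selected thresholds on coordinate $j$ cut $[N]$ into at most $t_j+1$ intervals $I_1,\ldots,I_{t_j+1}$. The full pattern $Z(X)$ reveals the interval containing $X_j$ together with information about the other coordinates. Under the uniform distribution, $X_j$ is independent of $(X_{j'})_{j'\ne j}$. Consider a task $f_{j,a}$ with $a$ strictly inside an interval $I$, meaning both $\{u\in I:u\le a\}$ and $\{u\in I:u>a\}$ are nonempty.

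Conditioned on $X_j\in I$ and on the other coordinates, the label $\1\{X_j\le a\}$ is independent of everything the decoder sees. So the best prediction errs with probability $\min\{|I\cap[1,a]|,|I\cap(a,N]|\}/|I|$. Averaging gives majority error exactly $\min\{|I\cap[1,a]|,|I\cap(a,N]|\}/N$. This follows from \Cref{lem:majority} applied cell by cell, because the partition restricted to coordinate $j$ is a product of the interval partition with an independent partition.

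Choosing $a$ at the middle of the longest interval $I^{\max}$ gives error at least $\lfloor |I^{\max}|/2\rfloor/N$. Requiring this to be at most $\eps$ forces $|I^{\max}|\le 2\eps N+1$. With $N=\lfloor 1/(4\eps)\rfloor$ we have $2\eps N\le 1/2$, so $|I^{\max}|\le 1$: every interval is a singleton. Hence $t_j+1\ge N$, that is, $t_j\ge N-1\ge 1/(4\eps)-2$, which is at least $1/(8\eps)$ once $\eps<c_0=1/16$.

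This bound holds for every coordinate, not just half, so $t\ge r(N-1)\ge r/(8\eps)$. Note also that duplicate selections are harmless, since they only reduce the number of distinct cuts.

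The main obstacle is justifying the independence step rigorously. The decoder $\dec_f$ is an arbitrary Boolean function of all $t$ labels, and the worry is that labels from other coordinates could help predict $f_{j,a}$. They cannot. Under the product distribution, $Z$ factors as $(Z^{(j)}(X_j),Z^{(-j)}(X_{-j}))$. Conditioning on a cell $\{Z^{(j)}=I\}\times B$ therefore leaves $X_j$ uniform on $I$, independent of $B$. Hence $\Pr[f_{j,a}=1\mid\text{cell}]$ equals the same fraction for every $B$, and summing $D(\text{cell})\min\{p,1-p\}$ over the cells yields the claimed error. I will state this as a short computation, and the rest is the arithmetic above.
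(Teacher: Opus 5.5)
Your proposal is correct and follows essentially the same argument as the paper: the generated partition is a product of per-coordinate interval partitions, and a midpoint threshold in any interval of length $\ell\ge2$ has error at least $\lfloor\ell/2\rfloor/N\ge1/N>\eps$ because $X_j$ is independent of the other coordinates. Hence every interval is a singleton, $t_j\ge N-1$ for every coordinate, and $t\ge r(N-1)\ge r/(8\eps)$; your opening plan of bounding only half the coordinates is unnecessary, as you note yourself.
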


\begin{proof}
\label{app:feature-lower}
Let $N=\lfloor1/(4\eps)\rfloor$, so that $N\ge2$ when $\eps<1/8$, and let $D$ be uniform on $[N]^{r}$.
Suppose that $g_1,\ldots,g_t\in\F_{r,N}$ and decoders $\dec_f$ achieve error at most $\eps$ for every function, and let $t_j$ be the number of selected thresholds of coordinate $j$.
These thresholds cut $[N]$ into at most $t_j+1$ intervals of consecutive integers, and the partition generated by all selected functions is the product of these interval partitions over the coordinates.

Fix a coordinate $j$ and an interval $I$ of $\ell\ge2$ points, and choose $a$ such that $f_{j,a}$ labels $\lfloor\ell/2\rfloor$ points of $I$ with one and $\lceil\ell/2\rceil$ with zero.
On every cell whose $j$-th factor is $I$, the coordinate $X_j$ is uniform on $I$ and independent of the other coordinates, so any decoder errs on $f_{j,a}$ with conditional probability at least $\lfloor\ell/2\rfloor/\ell\ge(\ell-1)/(2\ell)$.
These cells have total mass $\ell/N$, so the error on $f_{j,a}$ is at least $(\ell-1)/(2N)$.
Error at most $\eps$ therefore forces $\ell\le2\eps N+1\le3/2$, that is, every interval is a single point.
Hence $t_j\ge N-1$ for every $j$, and $t\ge r(N-1)\ge r/(8\eps)$ when $\eps<c_0$ for a small enough constant $c_0$.
\end{proof}

So the worst-case number of selected functions lies between $\Omega(d^{*}/\eps)$ and $O((d^{*}/\eps)\log(1/\eps))$.

\section{Learning from samples: algorithms and proofs}
\label{app:samples}

\subsection{The empirical entropy-greedy learner}

\suppressfloats[t]
\begin{algorithm2e}[htbp]
  \caption{\textsc{SampleGreedy}$(\F,S,\eps)$}
  \label{alg:sample-greedy}
  \LinesNumbered
  \DontPrintSemicolon
  $\widehat D_S\gets$ uniform distribution on the sample multiset\;
  $(\mathbf g,\Enc,\cdot)\gets$ \textsc{EntropyGreedy}$(\F,\widehat D_S,\eps/4)$\;
  \For{each $f\in\F$}{
    $\dec_{S,f}(z)\gets$ majority of $f$ among samples with code $z$; use $0$ on ties or empty cells\;
  }
  \Return{$(\Enc,(\dec_{S,f})_{f\in\F})$}\;
\end{algorithm2e}

For an infinite class, the decoder family is specified by its majority rule rather than enumerated.
The same oracle qualification as for \Cref{alg:greedy} applies to selecting functions.
By \Cref{thm:det-upper}, the selected list has fewer than
\begin{equation}
  \label{eq:tmax}
  t_{\max}=\left\lceil\frac{16d^{*}}{\eps}\log\frac4\eps\right\rceil
\end{equation}
functions, and every function has empirical error at most $\eps/4$.
To control the population error, define
\[
  \Lcal_t=\left\{x\mapsto\1\{\beta(g_1(x),\ldots,g_{t'}(x))\ne f(x)\}:
  \begin{array}{l}
    0\le t'\le t,\quad f,g_1,\ldots,g_{t'}\in\F,\\
    \beta:\{0,1\}^{t'}\to\{0,1\}
  \end{array}\right\}.
\]

\begin{lemma}[Complexity of the error events]
  \label{lem:loss-class-vc}
  There is an absolute constant $C$ such that, for every integer $t\ge0$,
  \[
    \VCdim(\Lcal_t)\le V(t):=
    C\left[d(t+1)\log\bigl(2+d(t+1)+\Gdual(t+1)\bigr)+\Gdual(t+1)\right].
  \]
\end{lemma}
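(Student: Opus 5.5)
The plan is to bound the primal growth function of $\Lcal_t$ on an arbitrary set of $n$ points. Then I show that it falls below $2^n$ once $n$ exceeds $V(t)$, so no set of that size is shattered.

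Fix points $x_1,\ldots,x_n\in\X$. A member of $\Lcal_t$ is determined by a choice of $t'\le t$, functions $f,g_1,\ldots,g_{t'}\in\F$, and a Boolean map $\beta$. Its restriction to the $n$ points is determined by three pieces of data.
\begin{itemize}[leftmargin=1.5em]
  \item The restrictions of $f,g_1,\ldots,g_{t'}$ to $\{x_1,\ldots,x_n\}$. By the primal Sauer--Shelah bound, each function has at most $\Gprim(n)\le(en/d)^d$ possible restrictions for $n\ge d$. So these tuples contribute at most $(en/d)^{d(t'+1)}\le(en/d)^{d(t+1)}$ possibilities.
  \item The values of $\beta$ on the patterns $Z_{\mathbf g}(x_i)$ that actually occur among the $n$ points. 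Once the restrictions of the $g_k$ are fixed, these patterns are determined. Their number is at most $\Gdual(t')\le\Gdual(t+1)$, where lists with repetition make $\Gdual$ monotone. Values of $\beta$ on unrealized patterns do not affect the restriction, so $\beta$ contributes at most $2^{\Gdual(t+1)}$ possibilities.
  \item The choice of $t'$, which contributes a factor of at most $t+1$.
\end{itemize}
Hence
\[
  \bigl|\Lcal_t|_{\{x_1,\ldots,x_n\}}\bigr|\le(t+1)\,(en/d)^{d(t+1)}\,2^{\Gdual(t+1)}.
\]

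Shattering $n$ points requires this count to be at least $2^n$. Taking logarithms, shattering forces
\[
  n\le\log(t+1)+d(t+1)\log(en/d)+\Gdual(t+1).
\]
Write $A=d(t+1)$ and $G=\Gdual(t+1)$. Since $d\ge1$, the term $\log(t+1)$ is at most $\log(2+A)$, which is absorbed into the other terms. So it suffices to show that $n=V(t)$ with a large absolute $C$ violates
\[
  n\le 2A\log(en)+G.
\]
(The factor $\log(en/d)\le\log(en)$ is harmless.) This is the standard inversion of $n\le a\log n+b$: if $n\ge C\bigl[A\log(2+A+G)+G\bigr]$, then $\log n\le\log C+\log 2+\log\bigl(A\log(2+A+G)+G\bigr)=O(\log(2+A+G))$. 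Consequently $2A\log(en)+G\le C'\bigl[A\log(2+A+G)+G\bigr]<n$ for $C$ large. I would treat the case $n<d$ separately, but it is trivial because $V(t)\ge C d$.

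I expect the main obstacle to be only this last calculation. The subtle point is that the $\log n$ factor must be bounded by $\log(2+A+G)$ rather than by $\log A$ alone, because $G$ can dominate $n$. This is why $\Gdual(t+1)$ appears inside the logarithm in $V(t)$. The counting itself is routine once one notes that $\beta$ matters only on realized patterns, which is the source of the additive $\Gdual(t+1)$ term and of the exponential dependence on $d^{*}$ in \Cref{thm:sample-optimal}.
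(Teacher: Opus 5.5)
Your proposal is correct and follows the paper's proof essentially step for step: the same growth-function count $(t+1)\,\Gprim(n)^{t+1}2^{\Gdual(t+1)}$ (using that $\beta$ matters only on realized patterns), then inverting $n\le\log(t+1)+d(t+1)\log(en/d)+\Gdual(t+1)$ via $\log n\le\log C+O(\log(2+A+G))$. One wording slip: that upper bound on $\log n$ holds only for $n$ within a constant factor of $V(t)$, say $n=\lfloor V(t)\rfloor+1$, not for all $n\ge V(t)$ as you wrote; finish either by heredity of shattering (which your ``it suffices'' step implicitly uses) or, as the paper does, by noting that the right-hand side has slope $d(t+1)/(n\ln 2)<1$ beyond that point.
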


\begin{proof}
  \label{app:loss-class}
  Fix $m$ points and an integer $t'\le t$.
  The restrictions of $f,g_1,\ldots,g_{t'}$ can be chosen in at most $\Gprim(m)^{t'+1}$ ways.
  Once these functions are fixed, the points realize at most $\Gdual(t')$ joint patterns of $g_1,\ldots,g_{t'}$.
  The decoder $\beta$ therefore induces at most $2^{\Gdual(t')}$ labelings of these points.
  Summing over $t'$, the growth function of $\Lcal_t$ is at most
  \[
    (t+1)\Gprim(m)^{t+1}2^{\Gdual(t+1)}.
  \]
  If the points are shattered and $m\ge d$, Sauer--Shelah gives
  \[
    m\le\log(t+1)+d(t+1)\log\frac{em}{d}+\Gdual(t+1).
  \]
  Put $a=d(t+1)$ and $B=2+a+\Gdual(t+1)$.
  For a sufficiently large absolute constant $C$, at $m=C(a\log B+\Gdual(t+1))$ the right side is less than $m$: indeed $m\le CB\log B$, so $\log(em/d)\le O(\log B+\log C)$.
  Moreover, the derivative of the right side with respect to $m$ is $a/(m\ln2)<1$ beyond this point.
  The inequality therefore fails for all larger $m$.
  The case $m<d$ is already covered by the claimed bound.
\end{proof}

We use the following standard relative form of VC uniform convergence \citep{vapnik1971uniform,anthony1993result}.

\begin{fact}[Relative deviation bound]
  \label{fact:relative}
  Let $\mathcal A$ be a class of events of VC dimension at most $V$ satisfying the measurability assumptions of \Cref{sec:samples}.
  For an absolute constant $C$, if $0<\eta\le1/4$, $0<\delta<1$, and
  \[
    n\ge\frac{C}{\eta}\left(V\log\frac1\eta+\log\frac1\delta\right),
  \]
  then, with probability at least $1-\delta$, every $A\in\mathcal A$ satisfies
  $D(A)\le2\widehat D_S(A)+\eta/2$.
\end{fact}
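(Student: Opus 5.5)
The plan is to derive this from the multiplicative (relative) form of the VC inequality. That form is obtained by symmetrization with a ghost sample, followed by a random-swap argument and a Sauer--Shelah count. The target inequality $D(A)\le2\widehat D_S(A)+\eta/2$ holds trivially for every $A$ with $D(A)<\eta/2$. So it suffices to show that, with probability at least $1-\delta$, no $A\in\mathcal A$ with $D(A)\ge\eta/2$ has $\widehat D_S(A)<\tfrac12D(A)$.

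The first step is symmetrization. Let $B$ be the bad event that some $A$ with $D(A)\ge\eta/2$ has $\widehat D_S(A)<\tfrac12D(A)$. Draw an independent ghost sample $S'\sim D^n$. Fix such an $A$, chosen measurably as a function of $S$. A multiplicative Chernoff bound (or Chebyshev) shows that $\widehat D_{S'}(A)\ge\tfrac34D(A)$ with probability at least $1/2$, once $n\ge c/\eta$; this is implied by the hypothesis on $n$. Hence
\[
\Pr[B]\le2\Pr\bigl[\exists A:\ \widehat D_{S'}(A)\ge\tfrac34 D(A),\ \widehat D_S(A)<\tfrac12D(A),\ D(A)\ge\eta/2\bigr].
\]
On this event, the combined count $k$ of $A$ over the $2n$ points satisfies $k\ge\tfrac34 D(A)n\ge 3\eta n/8$. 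Also, the two halves differ by at least $\tfrac14D(A)n\ge k/7$, since $k\le\tfrac74D(A)n$ on this event.

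The second step is swapping and counting. Condition on the multiset of $2n$ points, and randomly swap each pair $(S_i,S'_i)$. On the $2n$ points, $\mathcal A$ induces at most $G(2n)\le(2en/V)^V$ distinct traces by Sauer--Shelah. Fix one trace with $k\ge3\eta n/8$ points. A Hoeffding bound for the random signs, applied only to the pairs containing a point of the trace, shows that the probability of an imbalance of at least $k/7$ is at most $2\exp(-ck)\le2\exp(-c'\eta n)$. A union bound over traces gives
\[
\Pr[B]\le4\Bigl(\frac{2en}{V}\Bigr)^{V}e^{-c'\eta n}.
\]

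The last step, which I expect to be the only fiddly one, is showing that this is at most $\delta$ under the stated sample size. We need $c'\eta n\ge V\log(2en/V)+\log(4/\delta)$. Write $n=(C/\eta)(V\log(1/\eta)+\log(1/\delta))$. Then $\log(2en/V)\le\log(2eC/\eta)+\log\bigl(\log(1/\eta)+\log(1/\delta)/V\bigr)$. The term involving $\log(1/\delta)$ inside the logarithm is absorbed using $\log x\le x$. What remains is $O(\log(1/\eta)+\log C)$, which is dominated for large $C$ because $\eta\le1/4$. Monotonicity of $n-(V/c'\eta)\log(2en/V)$ in $n$ then extends the conclusion to all larger $n$. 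The measurability assumptions are needed only to make the events in the symmetrization step measurable.
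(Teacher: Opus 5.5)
The paper gives no proof of this fact; it cites it as the standard relative VC bound of Vapnik--Chervonenkis and Anthony--Shawe-Taylor. Your ghost-sample, random-swap, Sauer--Shelah argument is the usual proof of that multiplicative inequality, here with ratio $1/2$ and threshold $\eta/2$. The argument works, but one step needs repair.

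\textbf{The slip.} On the event you display, nothing bounds the ghost count from above, so the claim $k\le\frac74D(A)n$ is false. For example, if $D(A)=\eta/2$ and $S'$ falls mostly in $A$, then $k\approx n>\frac78\eta n$. The conclusion you actually need, an imbalance of order $k$, still holds. Write $a=n\widehat D_S(A)$ and $b=n\widehat D_{S'}(A)$. The event gives $a<\frac12D(A)n\le\frac23b$, hence
\[
b-a>\tfrac13b>\tfrac15(a+b)=\tfrac k5 .
\]
Alternatively, use the two-sided event $\frac34D(A)\le\widehat D_{S'}(A)\le\frac54D(A)$ in the symmetrization step. Your Chebyshev bound already gives this with probability at least $1/2$ once $n\ge64/\eta$. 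With it, $k<\frac74D(A)n$ is correct, and so is $b-a>\frac14D(A)n>k/7$.

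\textbf{The remaining steps are fine.} This covers the Hoeffding bound over the at most $k$ pairs containing exactly one trace point, the union bound over at most $(2en/V)^V$ traces, and the sample-size arithmetic with the monotonicity extension.

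\textbf{A small caveat.} You use $V\ge1$ twice: to get $n\ge c/\eta$ from the hypothesis (via $\log(1/\eta)\ge2$), and when dividing by $V$ in the final calculation. If $V=0$, the class contains at most one event, and a single Chernoff bound gives the claim directly. Every application in the paper has $V\ge1$.
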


\begin{proof}[Proof of \Cref{thm:sample-optimal}]
  Each error event of \Cref{alg:sample-greedy} belongs to $\Lcal_{t_{\max}}$ and has empirical probability at most $\eps/4$.
  By \Cref{lem:loss-class-vc,fact:relative}, the sharper, class-dependent condition
  \begin{equation}
    \label{eq:sample-growth-bound}
    n\ge\frac{C}{\eps}\left(V(t_{\max})\log\frac1\eps+\log\frac1\delta\right)
  \end{equation}
  makes every population error at most $2(\eps/4)+\eps/2=\eps$.

  To obtain the displayed dimension-only bound, write $r=d^{*}$ and $L=\log(1/\eps)\ge2$.
  We have $t_{\max}+1\le C_1rL/\eps$ and
  $\Gdual(t_{\max}+1)\le(C_2L/\eps)^r$ by Sauer--Shelah.
  Also $d<2^{r+1}$ by Assouad's inequality, so
  \[
    \log\bigl(2+d(t_{\max}+1)+\Gdual(t_{\max}+1)\bigr)\le C_3rL.
  \]
  Consequently,
  \[
    \frac{L}{\eps}V(t_{\max})
    \le C_4\left[\frac{dr^2L^3}{\eps^2}
      +\left(\frac{C_2L}{\eps}\right)^{r+1}\right]
    \le\left(\frac{C_5}{\eps}\right)^{r+1}L^{r+2}.
  \]
  The last inequality uses $dr^2\le C_6^{r+1}$ and $r\ge1$.
  Enlarging the constant in the theorem therefore implies \eqref{eq:sample-growth-bound}.

  Finally, the selected list has length $t<t_{\max}$, hence $t<(16r/\eps)\log(4/\eps)$.
  If $t\ge r$, Sauer--Shelah bounds its number of patterns by $(et/r)^r$; if $t<r$, it has at most $2^t$ patterns.
  In both cases the bit length is at most
  \[
    \left\lceil r\log\left(\frac{16e}{\eps}\log\frac4\eps\right)\right\rceil
    =O(r\log(1/\eps)).
  \]
\end{proof}

\subsection{The empirical cover learner}

\suppressfloats[t]
\begin{algorithm2e}[htbp]
  \caption{\textsc{CoverRepresentation}$(\F,S,\eps)$}
  \label{alg:cover}
  \LinesNumbered
  \DontPrintSemicolon
  $\widehat\rho_S(f,g)\gets n^{-1}\sum_{i=1}^{n}\1\{f(X_i)\ne g(X_i)\}$\;
  $\F_0\gets$ a maximal subset of $\F$ with pairwise $\widehat\rho_S$-distance at least $\eps/8$\;
  $\Enc(x)\gets$ index of the joint pattern $(g(x))_{g\in\F_0}$\;
  \For{each $f\in\F$}{
    Choose $g_f\in\F_0$ with $\widehat\rho_S(f,g_f)<\eps/8$\;
    $\dec_{S,f}(\Enc(x))\gets g_f(x)$\;
  }
  \Return{$(\Enc,(\dec_{S,f})_{f\in\F})$}\;
\end{algorithm2e}

\begin{proof}[Proof of \Cref{thm:cover-protocol}]
  Haussler's packing bound \citep{haussler1995sphere} for the empirical distribution gives $m:=|\F_0|\le(C'/\eps)^d$.
  In particular, a finite maximal set exists, and maximality gives the required $g_f$ for every $f$.
  Disagreements between pairs from $\F$ have growth function at most $\Gprim(s)^2$ on $s$ points and hence VC dimension $O(d)$.
  By \Cref{fact:relative}, simultaneously for every pair,
  \[
    \Pr_D[f(X)\ne g_f(X)]\le2\widehat\rho_S(f,g_f)+\eps/2<\eps.
  \]
  The representation has at most $\Gdual(m)$ values.
  If $m\ge d^{*}$, its bit length is at most $\lceil d^{*}\log(em/d^{*})\rceil$ by Sauer--Shelah.
  If $m<d^{*}$, its bit length is at most $m$.
  Both cases give $O(dd^{*}\log(1/\eps))$ bits.
\end{proof}

\begin{corollary}[Matching sample rate for thresholds]
  \label{cor:threshold-samples}
  For thresholds on a line, \Cref{alg:cover} uses at most $2+8/\eps$ representation values and has error $\eps$ with confidence $1-\delta$ from
  $O((\log(1/\eps)+\log(1/\delta))/\eps)$ samples.
\end{corollary}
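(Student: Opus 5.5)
The plan is to specialize the proof of \Cref{thm:cover-protocol} to thresholds $\F=\{\1\{x\le a\}\}$ on a line, where $d=d^{*}=1$. The two quantitative pieces are then tightened: the size of $\F_0$ and the number of realized patterns.

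\emph{Sample complexity.} This follows directly from \Cref{thm:cover-protocol} with $d=1$. The disagreement events $\{f\ne g\}$ between two thresholds are half-open intervals, a class of VC dimension at most $2$. So \Cref{fact:relative} with $\eta=\eps$ applies once $n\ge (C/\eps)(\log(1/\eps)+\log(1/\delta))$. On that event every $f$ satisfies
\[
\Pr_D[f(X)\ne g_f(X)]\le 2\widehat\rho_S(f,g_f)+\eps/2<\eps.
\]
Hence the error $\eps$ and confidence $1-\delta$ are inherited unchanged.

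\emph{Counting representation values.} I will replace Haussler's bound by a direct one-dimensional packing argument.
\begin{enumerate}[label=(\roman*),leftmargin=2em]
  \item Order the thresholds of $\F_0$ by their cut points $a_1<\dots<a_m$. Because thresholds are nested, for $i<j$ the empirical distance $\widehat\rho_S(f_{a_i},f_{a_j})$ equals the empirical mass of the interval $(a_i,a_j]$.
  \item Consecutive members are at empirical distance at least $\eps/8$, so the disjoint intervals $(a_i,a_{i+1}]$ each carry empirical mass at least $\eps/8$. Their total mass is at most $1$, which gives $m-1\le 8/\eps$, i.e.\ $m\le 1+8/\eps$.
  \item The joint pattern of $m$ nested thresholds is determined by which of the $m+1$ intervals cut out by $a_1,\dots,a_m$ contains $x$. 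Equivalently, $\Gdual(m)\le m+1$ by \Cref{ex:classes}(i).
  \item Combining (ii) and (iii), the representation takes at most $m+1\le 2+8/\eps$ values, that is, $\log(1/\eps)+O(1)$ bits.
\end{enumerate}

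\emph{Main obstacle.} The only delicate point is step (i). We need the identification of empirical distance with the empirical mass of an interval to hold for arbitrary ties and repeated sample points. We also need the maximal packing $\F_0$ to be finite even when the domain is large.

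Both follow from the nesting of thresholds. Distinct cut points with zero empirical mass between them are at distance $0<\eps/8$, so they cannot both belong to $\F_0$. The counting in step (ii) then bounds $|\F_0|$ by $1+8/\eps$, which makes $\F_0$ finite. The existence of $g_f$ for every $f$ follows from maximality, exactly as in the proof of \Cref{thm:cover-protocol}.
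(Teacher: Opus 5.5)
Your proposal is correct and follows the paper's own proof. You order $\F_0$ by cut points, note that consecutive empirical disagreements are masses of disjoint intervals, each at least $\eps/8$, to get $|\F_0|\le 1+8/\eps$, count $|\F_0|+1$ interval patterns, and take the sample bound from \Cref{thm:cover-protocol} with $d=1$. Your remarks on the finiteness of the maximal packing and on the VC dimension of the interval disagreement events only spell out details the paper leaves implicit.
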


\begin{proof}
  Order the members of $\F_0$ by their thresholds.
  The empirical disagreement of two consecutive members is the difference of their empirical positive masses.
  These disjoint increments sum to at most one, and each is at least $\eps/8$, so $|\F_0|-1\le8/\eps$.
  Nested thresholds realize at most $|\F_0|+1$ patterns, corresponding to intervals between consecutive cut points.
  The sample bound follows from \Cref{thm:cover-protocol} with $d=1$.
\end{proof}

\subsection{Proof of the threshold sample lower bound}
\label{app:sample-lower}

\begin{proof}[Proof of \Cref{thm:sample-lower}]
  Set $K=\lfloor1/(2\eps)\rfloor$ and $N=4M+K$.
  Choose a uniformly random $K$-subset $A\subseteq[N]$ and let $D_A$ be uniform on $A$.
  Since $1/K>\eps$, success requires every binary decoder to be correct at every point of $A$.

  Condition on the sample $S$ and on the learner's random seed.
  The output representation and decoders are now fixed.
  If there are more than $M$ representation values, the learner has already failed the length requirement.
  Otherwise, let $B\subseteq[N]$ consist of the points at which every threshold is decoded correctly.
  Distinct points have distinct threshold label vectors, so each representation value accounts for at most one point of $B$ and $|B|\le M$.

  Write $\mathcal O$ for the observed support points and $U=A\setminus\mathcal O$ for the unseen ones.
  Given $S$, the set $U$ is uniform among all $(K-|\mathcal O|)$-subsets of $[N]\setminus\mathcal O$.
  This remains true after conditioning on the independent random seed.
  If $U$ is nonempty, the conditional probability that $U\subseteq B$, as success requires, is at most
  \[
    \frac{|B\setminus\mathcal O|}{N-|\mathcal O|}
    \le\frac{M}{N-K}\le\frac14.
  \]
  Indeed, the first bound follows by selecting a uniformly random member of $U$.

  It remains to bound the probability of seeing every support point.
  Let $\upsilon=|U|$.
  If $n\le\frac14K\ln K$, then
  $\E\upsilon=K(1-1/K)^n\ge c_0K^{3/4}$ for all sufficiently large $K$.
  The unseen-point indicators are negatively correlated, because
  $(1-2/K)^n\le(1-1/K)^{2n}$, so $\Var(\upsilon)\le\E\upsilon$.
  Chebyshev's inequality yields $\Pr[\upsilon=0]\le1/\E\upsilon=o(1)$.
  Thus the success probability, averaged over $A$, the sample, and the learner's randomness, is at most $1/4+o(1)<2/3$.
  Some fixed $A$ therefore violates the required confidence.
  Since $K=\Theta(1/\eps)$, this proves the claimed bound with an absolute constant $c$.
\end{proof}

\section{Extensions}
\label{sec:extensions}

This appendix considers four variations of the model.
Thresholding extends the upper bound to real-valued functions, shared randomness gives the familiar dual-cover characterization, and a message from the function holder gives an upper bound in terms of primal VC dimension.
For halfspaces, a classical geometric partition theorem gives a concrete alternative construction.

\subsection{Real-valued functions}
\label{sec:real}

Let $\F\subseteq[0,1]^{\X}$ be a class of measurable functions.
In this subsection, we extend \Cref{def:protocol} by allowing a decoder $\Dec:\{0,1\}^{b}\times\F\to[0,1]$ and requiring
\[
  \sup_{f\in\F}\E_{X\sim D}\bigl|f(X)-\Dec(\Enc(X),f)\bigr|\le\eps.
\]
We use $\CC^{\to}_{\det}(\F,D,\eps)$ for the minimum length in this extended model.
For binary tasks, allowing these outputs does not change the minimum length: on a cell where $f(X)$ has mean $p$, the expected absolute error of a prediction $a\in[0,1]$ is $p+a(1-2p)$, which is minimized by a binary majority label.

The dual class $\F^{*}=\{\ev_x:x\in\X\}$ is now real-valued, and we write $d^{*}_{\mathrm P}=\Pdim(\F^{*})$ for its pseudo-dimension.
Thus $d^{*}_{\mathrm P}$ is the largest number of functions $f_1,\ldots,f_k$ for which there are thresholds $\theta_1,\ldots,\theta_k$ such that the vectors $(\1\{f_i(x)\ge\theta_i\})_{i=1}^{k}$ realize every binary pattern as $x$ varies.
For $f\in\F$ and $\theta\in[0,1]$, let $f^{\ge\theta}(x)=\1\{f(x)\ge\theta\}$, and let $\F^{\ge}=\{f^{\ge\theta}:f\in\F,\ \theta\in[0,1]\}$.

\begin{lemma}[Threshold lifting]
  \label{lem:lifting}
  $\VCdim\bigl((\F^{\ge})^{*}\bigr)=d^{*}_{\mathrm P}$.
\end{lemma}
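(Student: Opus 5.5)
The plan is to prove the two inequalities separately, each by unpacking the definitions of dual shattering for $\F^{\ge}$ and of pseudo-dimension for $\F^{*}$. By \Cref{def:dual-vc-dimension}, $\VCdim((\F^{\ge})^{*})$ is the largest $k$ for which there are members $h_1,\ldots,h_k\in\F^{\ge}$ whose patterns $(h_1(x),\ldots,h_k(x))$ realize all of $\{0,1\}^k$ as $x$ ranges over $\X$. Each $h_i$ has the form $f_i^{\ge\theta_i}$ with $f_i\in\F$ and $\theta_i\in[0,1]$. The pseudo-dimension $d^{*}_{\mathrm P}$ is the largest $k$ for which some $f_1,\ldots,f_k\in\F$ and thresholds $\theta_1,\ldots,\theta_k$ make $(\1\{f_i(x)\ge\theta_i\})_{i=1}^k$ realize all patterns. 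The two conditions coincide verbatim, so the argument is a matter of matching witnesses in both directions.

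\emph{Lower bound.} Take a pseudo-shattering witness $(f_i,\theta_i)_{i\le k}$ for $\F^{*}$. First I would reduce to thresholds in $[0,1]$. If the definition allows arbitrary real thresholds, then any $\theta_i\le 0$ makes $\1\{f_i\ge\theta_i\}\equiv1$, and any $\theta_i>1$ makes it $\equiv0$, since every $f_i$ takes values in $[0,1]$. Either case contradicts shattering, because coordinate $i$ must take both values. Hence $\theta_i\in(0,1]$ automatically. With that settled, $h_i=f_i^{\ge\theta_i}\in\F^{\ge}$, and these $k$ functions are dual-shattered, giving $\VCdim((\F^{\ge})^{*})\ge d^{*}_{\mathrm P}$.

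\emph{Upper bound.} Take dual-shattered $h_1,\ldots,h_k\in\F^{\ge}$ and write $h_i=f_i^{\ge\theta_i}$. The $h_i$ must be distinct, since two equal functions could not realize the pattern $(1,0)$ on that pair of coordinates. The underlying pairs $(f_i,\theta_i)$ may nevertheless share the same $f$. Pseudo-dimension as stated uses a list of functions $f_1,\ldots,f_k$ with their own thresholds, and I would read this list as allowed to repeat. A repeated $f$ cannot actually occur, however. If $f_i=f_j$ with $\theta_i<\theta_j$, then $f_j^{\ge\theta_j}\le f_i^{\ge\theta_i}$ pointwise, so the pattern with coordinate $i$ equal to $0$ and coordinate $j$ equal to $1$ is never realized, a contradiction. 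The $f_i$ are therefore distinct, and $(f_i,\theta_i)$ is a pseudo-shattering witness of size $k$. This gives $d^{*}_{\mathrm P}\ge\VCdim((\F^{\ge})^{*})$.

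The only real obstacle is this bookkeeping: keeping the threshold range consistent with $\F^{\ge}$, and making sure repeated base functions cause no mismatch between the two definitions. The nestedness observation above handles the second point, and the constant-function observation handles the first. If the paper's pseudo-dimension convention treats infinite suprema separately, the same witness correspondence extends to that case as well, since it matches witnesses of every finite size.
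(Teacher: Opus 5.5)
Your proof is correct and follows the same route as the paper: both directions match witnesses directly, and the nestedness of two thresholds of the same function rules out repeated base functions. Your extra check that any pseudo-shattering threshold must lie in $(0,1]$ handles a step the paper leaves implicit, and you handle it correctly.
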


\begin{proof}
  A set of functions $f_1^{\ge\theta_1},\ldots,f_k^{\ge\theta_k}$ is dual-shattered exactly when for every $\sigma\in\{0,1\}^{k}$ some $x$ has $\1\{\ev_x(f_i)\ge\theta_i\}=\sigma_i$ for all $i$.
  Two thresholds of the same function are nested, so they cannot be dual-shattered together.
  Hence the underlying $f_i$ are distinct.
  The condition is then the definition of pseudo-shattering $f_1,\ldots,f_k$ by $\F^{*}$ with witnesses $\theta_1,\ldots,\theta_k$.
\end{proof}

\begin{theorem}[Real-valued upper bound]
  \label{thm:real-upper}
  If $d^{*}_{\mathrm P}<\infty$, then for every distribution $D$ and $0<\eps\le1/4$, $\CC^{\to}_{\det}(\F,D,\eps)\le4d^{*}_{\mathrm P}\log(1/\eps)$.
\end{theorem}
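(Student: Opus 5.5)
We reduce to the binary case via the lifted class $\F^{\ge}$. By \Cref{lem:lifting}, $\F^{\ge}$ has dual VC dimension $d^{*}_{\mathrm P}$. If $d^{*}_{\mathrm P}=0$, every $f^{\ge\theta}$ is constant, hence every $f\in\F$ is constant, and zero bits suffice. Otherwise we apply \Cref{thm:det-upper} to $\F^{\ge}$ with the same error parameter $\eps$. This gives a partition $\Pcal$ into at most $2^{4d^{*}_{\mathrm P}\log(1/\eps)}$ cells with $\minerr_{\Pcal}(f^{\ge\theta})\le\eps$ for every $f\in\F$ and every $\theta\in[0,1]$. The encoder sends the cell index. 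Measurability of each $f^{\ge\theta}$ follows from measurability of $f$.

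The decoder, for task $f$ and cell $A$ of positive mass, outputs a conditional median $m_A$ of $f(X)$ given $X\in A$. We compare with the layer-cake representation. For $u,v\in[0,1]$,
\[
|u-v|=\int_0^1\bigl|\1\{u\ge\theta\}-\1\{v\ge\theta\}\bigr|\,d\theta .
\]
Hence, on cell $A$,
\[
\E\bigl[|f(X)-m_A|\,\big|\,X\in A\bigr]=\int_0^1\Pr\bigl[f^{\ge\theta}(X)\ne\1\{m_A\ge\theta\}\,\big|\,X\in A\bigr]\,d\theta .
\]
The key point is that $\1\{m_A\ge\theta\}$ is a majority label of $f^{\ge\theta}$ on $A$, up to ties. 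If $\theta\le m_A$, then $\Pr[f(X)\ge\theta\mid A]\ge\Pr[f(X)\ge m_A\mid A]\ge1/2$. If $\theta>m_A$, then $\Pr[f(X)<\theta\mid A]\ge\Pr[f(X)\le m_A\mid A]\ge1/2$. Either way, the integrand equals $\min\{p,1-p\}$ for $p=\Pr[f^{\ge\theta}(X)=1\mid A]$. This is the majority error contribution by \Cref{lem:majority}.

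Multiplying by $D(A)$, summing over cells, and using Tonelli gives
\[
\E\bigl|f(X)-\Dec(\Enc(X),f)\bigr|=\int_0^1\minerr_{\Pcal}(f^{\ge\theta})\,d\theta\le\eps .
\]
The main obstacle is handling the median correctly. It must satisfy both $\Pr[f\ge m_A]\ge1/2$ and $\Pr[f\le m_A]\ge1/2$; the standard conditional median does, and it exists on each cell of positive mass. We must also check that the step at $\theta=m_A$ only matters on a measure-zero set of $\theta$.

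A secondary point is measurability of $(x,\theta)\mapsto f^{\ge\theta}(x)$, which is jointly measurable because $\{(x,\theta):f(x)\ge\theta\}$ is measurable. This is needed to justify the interchange of sum and integral.
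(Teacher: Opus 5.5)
Your proof is correct and matches the paper's argument step for step: lift to $\F^{\ge}$ via \Cref{lem:lifting}, apply \Cref{thm:det-upper}, decode by a conditional median on each cell, and integrate the threshold majority errors with the layer-cake identity. Your two median inequalities already hold for every $\theta$, including $\theta=m_A$, so the measure-zero caveat you raise is unnecessary.
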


\begin{proof}
  If $d^{*}_{\mathrm P}=0$, every $f$ is constant and zero bits suffice.
  Assume $d^{*}_{\mathrm P}\ge1$.
  By \Cref{lem:lifting,thm:det-upper} applied to $\F^{\ge}$, there is a partition $\Pcal$ with at most $(4e\log(1/\eps)/\eps)^{d^{*}_{\mathrm P}}$ cells on which every $f^{\ge\theta}$ has majority error at most $\eps$.
  On a cell $A$ of positive mass, let the decoder output a median $m_A(f)$ of $f(X)$ given $X\in A$; its value on zero-mass cells is arbitrary.
  A median gives compatible majority decisions at all thresholds: if $\theta\le m_A(f)$ then $\Pr[f(X)\ge\theta\mid A]\ge1/2$, and if $\theta>m_A(f)$ then $\Pr[f(X)\ge\theta\mid A]\le\Pr[f(X)>m_A(f)\mid A]\le1/2$.
  By the identity $|y-y'|=\int_0^1|\1\{y\ge\theta\}-\1\{y'\ge\theta\}|\,d\theta$ for $y,y'\in[0,1]$,
  \begin{align*}
    \E\bigl|f(X)-m_{\Pcal(X)}(f)\bigr|
    &=\int_0^1\E\bigl|f^{\ge\theta}(X)-\1\{m_{\Pcal(X)}(f)\ge\theta\}\bigr|\,d\theta\\
    &\le\eps,
  \end{align*}
  where $\Pcal(X)$ is the cell of $X$.
  The length bound follows as in \Cref{thm:det-upper}.
\end{proof}

Binary classes are special cases, and the pseudo-dimension of a binary class is its VC dimension, so \Cref{thm:threshold-lower} shows that \Cref{thm:real-upper} is tight in the worst case.

\begin{remark}[Scale-sensitive bounds]
  Pseudo-dimension does not distinguish large variations in function values from small ones.
  A bound in terms of the dual fat-shattering dimensions at the relevant scales could therefore be sharper; we do not pursue such a bound here.
\end{remark}

\subsection{Randomized representations and dual metric entropy}
\label{sec:public}

We return to binary classes and binary decoder outputs.
A \emph{public-coin} representation draws a seed $\omega$ that both parties see, independently of $X$, and then uses deterministic maps $(\Enc_\omega,\Dec_\omega)$ of length $b$.
Its error is $\sup_{f\in\F}\E_\omega\E_{X\sim D}|f(X)-\Dec_\omega(\Enc_\omega(X),f)|$, and $\CC^{\to}_{\pub}(\F,D,\eps)$ is the least $b$ that achieves error $\eps$.
We work with finite $\X$ and $\F$ so that the minimax theorem applies without topological conditions, and we let $\mathfrak R_\eps(\F)=\sup_D\CC^{\to}_{\pub}(\F,D,\eps)$.
By the minimax principle, a public-coin representation that serves every $f$ exists exactly when, for every distribution $Q$ on functions, some deterministic representation achieves the same error averaged over $f\sim Q$.
The latter is the product-distribution setting of \citet{kremer1999randomized} and \citet{molinaro2015amplification}, since $f\sim Q$ and $X\sim D$ are independent.
The results below restate this connection and the packing characterization in our notation; they are included for comparison with the deterministic theorem.

For a distribution $Q$ on $\F$, the \emph{dual distance} is $\rho_Q(x,x')=\E_{f\sim Q}|f(x)-f(x')|$.
It is the probability that a random function drawn from $Q$ distinguishes $x$ from $x'$, and is a pseudometric because distinct inputs may have distance zero.
Let $\mathrm{Cov}(\eps,Q)$ be the least size of a set $Z\subseteq\X$ such that every $x\in\X$ has $\rho_Q(x,z)\le\eps$ for some $z\in Z$, and say that $x_1,\ldots,x_M$ form a \emph{strict $\gamma$-packing} if $\rho_Q(x_i,x_j)>\gamma$ for all $i\neq j$.

\begin{theorem}[Upper bound by dual covers]
  \label{thm:public-cover}
  For every finite class $\F$ on a finite domain,
  \[
    \mathfrak R_\eps(\F)\le\max_Q\lceil\log\mathrm{Cov}(\eps,Q)\rceil.
  \]
\end{theorem}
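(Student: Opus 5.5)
We prove the bound through the minimax principle described just before the statement. Fix a distribution $D$ on the finite domain $\X$ and set $b=\max_Q\lceil\log\mathrm{Cov}(\eps,Q)\rceil$. Since $\X$ and $\F$ are finite, we view the construction as a zero-sum game. The representer picks a deterministic pair $(\Enc,\Dec)$ of length $b$. The adversary picks $f\in\F$. The payoff is $\Pr_{X\sim D}[\Dec(\Enc(X),f)\ne f(X)]$. Up to the choice of decoder, the representer has only finitely many encoders $\X\to\{0,1\}^b$, and for a fixed encoder and mixed adversary the best decoder is a majority rule. So we may restrict to finitely many pure strategies, and von Neumann's minimax theorem applies. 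A mixed representer strategy is exactly a public-coin representation. It therefore suffices to show the following: for every distribution $Q$ on $\F$, some deterministic representation of length $b$ has error at most $\eps$ averaged over $f\sim Q$ and $X\sim D$.

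Now fix $Q$. Let $Z=\{z_1,\ldots,z_K\}\subseteq\X$ be an $\eps$-cover for $\rho_Q$ of size $K=\mathrm{Cov}(\eps,Q)\le2^b$. The encoder maps $x$ to the index of a nearest cover point, so that $\rho_Q(x,z(x))\le\eps$ with ties broken arbitrarily. The decoder outputs $\Dec(i,f)=f(z_i)$. For every fixed $x$,
\[
  \E_{f\sim Q}\bigl|f(x)-f(z(x))\bigr|=\rho_Q(x,z(x))\le\eps .
\]
Averaging over $X\sim D$ gives average error at most $\eps$ under $D\times Q$. The bound holds pointwise in $x$, so this protocol works for every $D$ simultaneously. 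Its length is $\lceil\log K\rceil\le b$, and we may pad it to exactly $b$ bits.

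Applying minimax, there is a distribution over length-$b$ deterministic pairs whose expected error is at most $\eps$ for every $f$. This gives $\CC^{\to}_{\pub}(\F,D,\eps)\le b$, and taking the supremum over $D$ proves the statement. The only delicate step is the minimax application. It needs finiteness of both pure strategy sets, which holds because $\X$, $\F$ and $\{0,1\}^b$ are finite. It also needs the payoff to be linear in the mixed strategies, which holds because the error is an expectation. Everything else is direct.
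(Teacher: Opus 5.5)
Your proposal is correct and matches the paper's proof. For each $Q$, the cover-center encoder with decoder $f(z(x))$ is a pure response with average error $\E_{X\sim D}\rho_Q(X,z(X))\le\eps$, and the minimax theorem on the finite game between tasks $f\in\F$ and padded length-$b$ deterministic representations turns these responses into a public-coin universal representation. The remark about majority decoders is unnecessary, since, as you note at the end, the decoders $\{0,1\}^b\times\F\to\{0,1\}$ already form a finite set.
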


\begin{proof}
  Fix $D$ and $Q$, and let $Z$ be a cover of size $\mathrm{Cov}(\eps,Q)$.
  The encoder sends the index of a center $z(x)\in Z$ with $\rho_Q(x,z(x))\le\eps$, and the decoder outputs $f(z(x))$.
  Then $\E_{f\sim Q}\E_{X\sim D}|f(X)-f(z(X))|=\E_{X\sim D}\rho_Q(X,z(X))\le\eps$.
  Set $b=\max_Q\lceil\log\mathrm{Cov}(\eps,Q)\rceil$ and pad shorter encodings to length $b$.
  Consider the finite zero-sum game in which one player picks $f\in\F$, the other picks a length-$b$ deterministic representation with a binary decoder, and the payoff is the error on $f$.
  We just showed that every mixed strategy $Q$ of the first player has a pure response of payoff at most $\eps$.
  By the minimax theorem \citep[see][]{yao1977probabilistic}, some mixture of representations has payoff at most $\eps$ against every $f$, and this mixture is a public-coin universal representation.
\end{proof}

Haussler's packing bound \citep{haussler1995sphere} applied to $\F^{*}$ gives $\mathrm{Cov}(\eps,Q)\le(C/\eps)^{d^{*}}$ for every $Q$, hence $\mathfrak R_\eps(\F)\le d^{*}\log(C/\eps)+1$.

\begin{theorem}[Packing lower bound]
  \label{thm:public-packing}
  Let $Q$ be a distribution on $\F$, let $x_1,\ldots,x_M$ form a strict $4\eps$-packing for $\rho_Q$, and let $D$ be uniform on them.
  Every public-coin universal representation with error at most $\eps$ has length at least $\log M-1$.
\end{theorem}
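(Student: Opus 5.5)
We reduce the public-coin statement to a deterministic one by averaging, and then run an entropy (Fano-type) argument on the uniformly distributed packing point.

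\emph{Averaging over the seed and the task.} Suppose $(\Enc_\omega,\Dec_\omega)$ has length $b$ and error at most $\eps$ against every $f\in\F$. Averaging this guarantee over $f\sim Q$ gives
\[
  \E_\omega\,\E_{f\sim Q}\,\Pr_{X\sim D}\bigl[\Dec_\omega(\Enc_\omega(X),f)\ne f(X)\bigr]\le\eps .
\]
Hence some fixed seed $\omega$ yields a deterministic pair $(\Enc,\Dec)$ whose error, averaged over the product distribution $D\times Q$, is at most $\eps$. From here on we work only with this deterministic pair.

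\emph{Decoding a packing index.} For each code $z$, let $y_z(f)=\Dec(z,f)$ and let $\hat\imath(z)$ minimize $\E_{f\sim Q}|y_z(f)-f(x_i)|$ over $i\in[M]$. Write $I$ for the uniform index of $X=x_I$ and set $e_I=\E_f|y_{Z}(f)-f(x_I)|$ with $Z=\Enc(X)$, so that $\E e_I\le\eps$. By the triangle inequality and the minimizing choice,
\[
  \rho_Q\bigl(x_I,x_{\hat\imath(Z)}\bigr)\le e_I+e_{\hat\imath(Z)}\le 2e_I .
\]
When $\hat\imath(Z)\ne I$, the strict packing forces $2e_I\ge\rho_Q(x_I,x_{\hat\imath(Z)})>4\eps$, so $e_I>2\eps$. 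Markov's inequality then gives
\[
  \Pr[\hat\imath(Z)\ne I]\le\Pr[e_I>2\eps]<\tfrac12 .
\]

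\emph{Fano's inequality.} We have $H(I\mid Z)\le 1+\Pr[\hat\imath\ne I]\log M$, hence
\[
  b\ge H(Z)\ge I(I;Z)\ge \log M-1-\tfrac12\log M .
\]
This yields only $\tfrac12\log M-1$. To reach the stated $\log M-1$, we would sharpen the counting: each code $z$ can be decoded correctly only for those $i$ with $e_i(z)\le 2\eps$. Two distinct such $i$ would lie within $\rho_Q$-distance $4\eps$ of each other, which the strict packing forbids, so at most one $i$ per code satisfies $e_i(z)\le2\eps$. Thus at most $2^b$ of the $M$ indices can be "good" for their own code, while Markov requires more than $M/2$ good indices. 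Therefore $2^b>M/2$, i.e.\ $b>\log M-1$. This direct counting replaces Fano entirely and gives the claim.

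The main obstacle is obtaining the exact constant in $\log M-1$ rather than a weaker Fano-type bound. The per-code uniqueness argument resolves this. The remaining subtle point is checking that the strict inequality $\rho_Q>4\eps$ is exactly what makes goodness unique and makes the Markov step strict.
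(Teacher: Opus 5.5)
Your final counting argument is correct and is essentially the paper's proof. You fix a seed that is good on average over $D\times Q$, use Markov to get more than $M/2$ indices with $\E_{f\sim Q}|f(x_i)-\Dec(\Enc(x_i),f)|\le 2\eps$, and then use the triangle inequality with the strict $4\eps$-packing to show that no code serves two such indices, giving $2^b>M/2$; the Fano paragraph and the decoder $\hat\imath$ are an unnecessary detour and can simply be dropped.
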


\begin{proof}
  Averaging the guarantee over $f\sim Q$ and the seed gives a seed $\omega$ with $\E_{f\sim Q}\E_{X\sim D}|f(X)-\Dec_\omega(\Enc_\omega(X),f)|\le\eps$.
  For this seed, let $\eta_i=\E_{f\sim Q}|f(x_i)-\Dec_\omega(\Enc_\omega(x_i),f)|$; by Markov's inequality, at least $M/2$ indices have $\eta_i\le2\eps$.
  If two such points lie in the same cell, the triangle inequality gives $\rho_Q(x_i,x_j)\le\eta_i+\eta_j\le4\eps$, a contradiction.
  So there are at least $M/2$ cells.
\end{proof}

A maximal strict $4\eps$-packing is a cover at radius $4\eps$.
Thus the two bounds characterize public-coin complexity up to a constant change in accuracy and one bit of rounding.

\begin{corollary}[Metric-entropy characterization, adapted from \citealp{molinaro2015amplification}]
  \label{cor:metric-characterization}
  For every finite class $\F$ on a finite domain,
  \[
    \max_Q\log\mathrm{Cov}(4\eps,Q)-1\le\mathfrak R_\eps(\F)\le\max_Q\lceil\log\mathrm{Cov}(\eps,Q)\rceil .
  \]
\end{corollary}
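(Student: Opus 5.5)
The corollary follows by combining \Cref{thm:public-cover} for the upper bound with \Cref{thm:public-packing} for the lower bound. For the lower bound, the plan is to convert a cover number at radius $4\eps$ into a strict $4\eps$-packing of at least the same size, and then apply the packing theorem under the uniform distribution on that packing. Since $\mathfrak R_\eps(\F)=\sup_D\CC^{\to}_{\pub}(\F,D,\eps)$, a bound for one particular $D$ is enough.

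\emph{Upper bound.} This is exactly \Cref{thm:public-cover}; nothing further is needed. Because $\F$ is finite, the set of distributions $Q$ is a compact simplex. The quantity $\lceil\log\mathrm{Cov}(\eps,Q)\rceil$ takes only finitely many integer values, bounded by $\lceil\log|\X|\rceil$. Hence the maximum is attained, and ``$\max$'' is legitimate.

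\emph{Lower bound.} Fix any distribution $Q$ on $\F$. Choose a maximal strict $4\eps$-packing $x_1,\ldots,x_M$ for $\rho_Q$; it is finite because $\X$ is finite. By maximality, every $x\in\X$ has $\rho_Q(x,x_i)\le4\eps$ for some $i$: if $x$ is not itself a packing point, adding it would violate strictness, and if it is a packing point we take the distance $0$ to itself. So $\{x_1,\ldots,x_M\}$ is a cover at radius $4\eps$, and $\mathrm{Cov}(4\eps,Q)\le M$. Now let $D$ be uniform on $x_1,\ldots,x_M$. By \Cref{thm:public-packing}, every public-coin universal representation with error at most $\eps$ under this $D$ has length at least $\log M-1\ge\log\mathrm{Cov}(4\eps,Q)-1$. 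Therefore $\mathfrak R_\eps(\F)\ge\log\mathrm{Cov}(4\eps,Q)-1$ for every $Q$, and maximizing over $Q$ gives the left inequality.

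The only delicate point is checking that the covering convention matches the packing convention. Covers use the non-strict inequality $\rho_Q\le4\eps$, while packings use the strict one $>4\eps$, and these fit together exactly under maximality. Everything else is a direct citation of the two preceding theorems.
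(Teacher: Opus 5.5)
Your proposal is correct and matches the paper's argument. The upper bound is \Cref{thm:public-cover} verbatim, and the lower bound applies \Cref{thm:public-packing} to the uniform distribution on a maximal strict $4\eps$-packing, using the same observation the paper makes just before the corollary: such a packing is a cover at radius $4\eps$, so $\mathrm{Cov}(4\eps,Q)\le M$.
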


\begin{corollary}[Worst-case public-coin complexity]
  \label{cor:public-tight}
  There is $\eps_0>0$ such that for every $d^{*}\ge1$ and $0<\eps\le\eps_0$, $\sup_{\F,D}\CC^{\to}_{\pub}(\F,D,\eps)=\Theta(d^{*}\log(1/\eps))$, where the supremum is over classes with $\VCdim(\F^{*})\le d^{*}$.
\end{corollary}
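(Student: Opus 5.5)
The plan is to prove the two inequalities separately. For the upper bound, combine \Cref{thm:public-cover} with Haussler's packing bound applied to $\F^{*}$. For the lower bound, combine \Cref{thm:public-packing} with the threshold-product class $\F_{r,N}$ from \Cref{sec:threshold}.

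\emph{Upper bound.} Fix any finite $\F$ on a finite domain with $\VCdim(\F^{*})\le d^{*}$ and any distribution $Q$ on $\F$. The pseudometric $\rho_Q$ is exactly the $L_1(Q)$ disagreement distance between the evaluation functions $\ev_x$, which are members of a class of VC dimension at most $d^{*}$. Haussler's bound gives $\mathrm{Cov}(\eps,Q)\le(C/\eps)^{d^{*}}$. By \Cref{thm:public-cover}, this yields $\CC^{\to}_{\pub}(\F,D,\eps)\le d^{*}\log(C/\eps)+1=O(d^{*}\log(1/\eps))$ for every $D$, uniformly over such classes. If the supremum is meant over infinite classes as well, I would instead cite the deterministic bound \Cref{thm:det-upper}. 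Since a deterministic representation is a degenerate public-coin one, this gives $\CC^{\to}_{\pub}\le\CC^{\to}_{\det}\le4d^{*}\log(1/\eps)$ with no finiteness assumption, and it is the cleaner route.

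\emph{Lower bound.} Take $r=d^{*}$ and $N=\lfloor1/(20\eps)\rfloor$. By \Cref{lem:threshold-dual}, $\F_{r,N}$ has dual VC dimension exactly $d^{*}$. Let $D$ be uniform on $[N]^r$. The remark after \Cref{thm:threshold-lower} already observes the following: averaging the public-coin guarantee over the seed and over $f$ uniform on $\F_{r,N}$ gives a fixed seed whose error is at most $\eps$ under $D\times Q$, and the proof of \Cref{thm:threshold-lower} uses only this averaged error. So every public-coin representation of error $\eps$ has length at least $r\log N-2r$.

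It remains to choose $\eps_0$ so that, for $\eps\le\eps_0$, we have $N\ge2$, $\eps\le1/(20N)$, and $\log N\ge\log(1/\eps)-\log40$. These hold for, e.g., $\eps_0=1/1600$. Then the bound is
\[
d^{*}\bigl(\log(1/\eps)-\log40-2\bigr)\ge\tfrac12 d^{*}\log(1/\eps),
\]
which is $\Omega(d^{*}\log(1/\eps))$. Alternatively, one could run the packing construction from \Cref{sec:threshold} through \Cref{thm:public-packing}, but the direct entropy route is simpler.

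The main obstacle is the averaging step in the lower bound, which justifies applying the deterministic argument of \Cref{thm:threshold-lower} to a public-coin protocol. That proof bounds $\sum_j\E|X_j-a_j(Z)|$ by $2\eps r(N-1)$, which is the \emph{sum} over all thresholds of the per-threshold errors. The inequality therefore needs only the average over $f\sim Q$ to be at most $\eps$, not the maximum. I would verify this explicitly, fix the seed achieving at most the average, and note that the entropy argument then holds verbatim with $Z=\Enc_\omega(X)$. The remaining work is routine constant checking for $\eps_0$.
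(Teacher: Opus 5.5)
Your proposal is correct and follows the paper's proof: the upper bound comes from \Cref{thm:det-upper}, since a deterministic representation is a degenerate public-coin one, and the lower bound fixes a seed whose error, averaged over $f$ uniform on $\F_{r,N}$, is at most $\eps$, then reruns the entropy argument of \Cref{thm:threshold-lower}. One arithmetic slip: at $\eps_0=1/1600$ your final inequality with factor $\tfrac12$ fails, since it needs $\eps\le 160^{-2}$; the factor $\tfrac14$ does hold for all $\eps\le 1/1600$, so the $\Omega(d^{*}\log(1/\eps))$ conclusion is unaffected.
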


\begin{proof}
  The upper bound follows from \Cref{thm:det-upper}, since a deterministic representation is a public-coin representation.
  For the lower bound, the proof of \Cref{thm:threshold-lower} uses only the error averaged uniformly over the functions of $\F_{r,N}$, and some seed of a public-coin representation achieves this average error.
\end{proof}

The following multiplicative-weights argument limits the number of representations needed in the mixture.

\begin{proposition}[Small-support mixtures]
  \label{prop:mw}
  Let $0<\eps<1$, let $\F=\{f^{1},\ldots,f^{m}\}$ with $m\ge2$, and suppose that for every distribution $Q$ on $\F$ some deterministic representation $\kappa$ of length $b$ has $\sum_iQ(f^{i})\,\ell(i,\kappa)\le\eps$, where $\ell(i,\kappa)\in[0,1]$ is its error on $f^{i}$.
  Then there are $T=\lceil4\ln m/\eps\rceil$ deterministic representations of length $b$ whose uniform mixture has error at most $3\eps$ on every $f^{i}$.
\end{proposition}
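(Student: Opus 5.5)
We run the standard multiplicative-weights (Hedge) dynamics over the $m$ functions, using the hypothesis as a best-response oracle. Initialize weights $w_1(i)=1$ for $i\in[m]$. At round $s=1,\ldots,T$, let $Q_s(f^{i})=w_s(i)/\sum_j w_s(j)$. By hypothesis there is a deterministic representation $\kappa_s$ of length $b$ with $\sum_i Q_s(f^{i})\ell(i,\kappa_s)\le\eps$. Then update $w_{s+1}(i)=w_s(i)(1+\eta)^{\ell(i,\kappa_s)}$, so that functions on which $\kappa_s$ errs gain weight. The output is the uniform mixture of $\kappa_1,\ldots,\kappa_T$. A public seed uniform on $[T]$ selects which $\kappa_s$ to use, so the error on $f^{i}$ is $\frac1T\sum_s\ell(i,\kappa_s)$.

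The analysis is the usual potential argument on $\Phi_s=\sum_j w_s(j)$. Since $\ell\in[0,1]$, convexity gives $(1+\eta)^{\ell}\le1+\eta\ell$. Hence
\[
\Phi_{s+1}\le\Phi_s\Bigl(1+\eta\sum_iQ_s(f^{i})\ell(i,\kappa_s)\Bigr)\le\Phi_s(1+\eta\eps)\le\Phi_s e^{\eta\eps},
\]
and therefore $\Phi_{T+1}\le m e^{\eta\eps T}$. On the other hand, for each fixed $i$,
\[
\Phi_{T+1}\ge w_{T+1}(i)=(1+\eta)^{L_i},\qquad L_i=\sum_s\ell(i,\kappa_s).
\]
Taking logarithms gives $L_i\ln(1+\eta)\le\ln m+\eta\eps T$.

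Now choose $\eta=1$. Since $\ln2\ge1/2$, this gives $L_i/T\le2\ln m/T+2\eps$. With $T=\lceil4\ln m/\eps\rceil$ we have $2\ln m/T\le\eps/2$, so the error on every $f^{i}$ is at most $2.5\eps\le3\eps$.

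A slightly sharper choice of $\eta$ also works, but $\eta=1$ suffices and keeps the constants within the claimed $3\eps$. The only point needing care is the inequality $(1+\eta)^{\ell}\le1+\eta\ell$ for fractional losses, which holds by convexity of $\ell\mapsto(1+\eta)^{\ell}$ on $[0,1]$.

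The argument only requires a length-$b$ best response against each $Q_s$, which is exactly the hypothesis. All the $\kappa_s$ have length $b$, so the mixture is a public-coin representation of length $b$. I do not expect any real obstacle; the main risk is bookkeeping of constants, and the slack between $2.5\eps$ and $3\eps$ absorbs it.
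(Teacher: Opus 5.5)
Your proof is correct and follows essentially the same multiplicative-weights argument as the paper: the hypothesis serves as a best-response oracle against $Q_s$, and the total weight is the potential. The only difference is cosmetic. The paper uses the linear update $w_{\tau+1}(i)=w_\tau(i)(1+\tfrac12\ell(i,\kappa_\tau))$ with $\ln(1+y)\ge y-y^{2}$ to reach exactly $3\eps$, whereas your exponential update with $\eta=1$ and the chord bound $(1+\eta)^{\ell}\le1+\eta\ell$ gives the slightly better constant $2.5\eps$.
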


\begin{proof}
  Start from weights $w_1(i)=1$ and let $Q_\tau(f^{i})=w_\tau(i)/\sum_jw_\tau(j)$.
  In round $\tau$, choose $\kappa_\tau$ with $\sum_iQ_\tau(f^{i})\ell(i,\kappa_\tau)\le\eps$ and set $w_{\tau+1}(i)=w_\tau(i)(1+\frac12\ell(i,\kappa_\tau))$.
  The total weight $\Omega_\tau=\sum_iw_\tau(i)$ satisfies $\Omega_{\tau+1}\le\Omega_\tau(1+\eps/2)$, so $\ln\Omega_{T+1}\le\ln m+\eps T/2$.
  Since $\ln(1+y)\ge y-y^{2}$ for $y\ge0$ and $\ell\le1$, $\ln w_{T+1}(i)\ge\frac14\sum_\tau\ell(i,\kappa_\tau)$.
  Hence $\frac1T\sum_\tau\ell(i,\kappa_\tau)\le2\eps+4\ln m/T\le3\eps$.
\end{proof}

\subsection{Interaction}
\label{sec:interaction}

The one-way model forces the encoder to summarize the label vector of $x$ across all functions.
If the function holder may speak first, a cover of the primal class gives a different upper bound.

\begin{theorem}[Two-message upper bound]
  \label{thm:interactive}
  For every distribution $D$ and $0<\eps<1/2$, there is a deterministic two-message protocol, in which the function holder speaks first, with error at most $\eps$ and $d\log(C/\eps)+1$ bits of communication, where $C$ is an absolute constant.
\end{theorem}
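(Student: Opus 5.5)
The protocol is a one-round exchange. The function holder first sends the index of a nearby member of a fixed proper $\eps$-cover of $\F$ under the disagreement pseudometric $\rho_D(f,g)=\Pr_{X\sim D}[f(X)\ne g(X)]$. The input holder then replies with the single bit $g(x)$ for that cover member, and the output is this bit. The cover depends only on $D$ and $\F$, so it can be fixed in advance and shared by both parties. This makes the protocol deterministic.

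First I construct the cover. Take a maximal subset $G\subseteq\F$ whose members are pairwise at $\rho_D$-distance greater than $\eps$. By maximality, every $f\in\F$ has some $g_f\in G$ with $\rho_D(f,g_f)\le\eps$. Haussler's packing bound \citep{haussler1995sphere}, applied to $\F$ with VC dimension $d$ under $D$, gives $|G|\le e(d+1)(2e/\eps)^d\le(C/\eps)^d$ for an absolute constant $C$ (using $d\ge1$). In particular, a maximal packing exists and is finite: any packing has at most this many elements, so a largest one is automatically maximal.

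Next I define the protocol and check its error. On task $f$, the function holder sends the index of $g_f$, which takes $\lceil\log|G|\rceil\le d\log(C/\eps)+1$ bits. The input holder, who knows $x$ and the shared list $G$, returns $g_f(x)$ in one bit, and the output is $g_f(x)$. The error on $f$ is exactly $\Pr_D[g_f(X)\ne f(X)]\le\eps$. The total communication is $\lceil\log|G|\rceil+1$. To fit the stated form $d\log(C/\eps)+1$, I enlarge $C$ to absorb the rounding: with $|G|\le(C/\eps)^d$, we get $\lceil\log|G|\rceil\le d\log(C/\eps)+1$, so the total is at most $d\log(C/\eps)+2\le d\log(2C/\eps)+1$ since $d\ge1$. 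The hypothesis $\eps<1/2$ only ensures the packing bound is in its meaningful range.

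The only substantive ingredient is the packing bound. If measurability of the functions under $D$ is in doubt, I would note that $\rho_D$ is well defined because $\F$ consists of measurable functions. Haussler's bound holds for any probability measure, and it can also be reduced to finite samples by approximating $D$ empirically, though that reduction is not needed here. I expect no real obstacle beyond stating the constant carefully.
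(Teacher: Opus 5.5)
Your proposal is correct and matches the paper's proof: you take a Haussler-bounded proper $\eps$-cover of $\F$ in the $D$-disagreement metric, the function holder sends the index of a nearby cover member, the input holder replies with that member's label, and the ceiling is absorbed into $C$. Your extra details are accurate fillings-in of steps the paper leaves implicit, namely the explicit bound $e(d+1)(2e/\eps)^d\le(C/\eps)^d$ and the existence of a finite maximal packing.
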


\begin{proof}
  By Haussler's bound, $\F$ has a subset $\F_\eps$ of size at most $(C/\eps)^{d}$ such that every $f$ has some $g\in\F_\eps$ with $\Pr_D[f(X)\neq g(X)]\le\eps$.
  The function holder sends the index of such a $g$, and the input holder replies with the bit $g(x)$.
  This costs $\lceil\log|\F_\eps|\rceil+1$ bits; increasing the absolute constant $C$ absorbs the ceiling.
\end{proof}

\begin{example}[Coordinates]
  \label{ex:coordinates}
  For the coordinate functions on $\{0,1\}^{r}$, $d^{*}=r$, so under the uniform distribution a one-way representation needs $\Omega(r)$ bits even at constant error (\Cref{prop:instance-lower}).
  With interaction, the function holder sends $i$ and the input holder replies with $x_i$, which uses $\lceil\log r\rceil+1$ bits and has zero error.
\end{example}

\subsection{Halfspaces}
\label{sec:halfspaces}

For halfspaces under a uniform distribution on finitely many points, the partition theorem of \citet{matousek1992efficient} gives groups of points contained in simplices.
These groups, rather than the simplices themselves, are the cells of the representation.
A hyperplane \emph{crosses} a relatively open simplex if it intersects the simplex without containing it.

\begin{theorem}[Simplicial partitions, \citealp{matousek1992efficient}]
  \label{thm:matousek}
  For every $r\ge2$ there is $c_r\ge1$ such that the following holds.
  Let $P\subset\R^{r}$ be finite, let $s$ be an integer with $2\le s<|P|$, and let $\nu=|P|/s$.
  Then $P$ can be partitioned into classes $P_i$ with $s\le|P_i|<2s$, each contained in a relatively open simplex $\sigma_i$, such that every hyperplane crosses at most $c_r\nu^{1-1/r}$ of the simplices.
\end{theorem}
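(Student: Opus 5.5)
We would follow the iterative reweighting argument of Chazelle--Welzl as refined by \citet{matousek1992efficient}, rather than anything from the earlier sections, since the statement is purely geometric. The first step is to reduce to a finite set of test hyperplanes. Two hyperplanes that induce the same partition of $P$ into the points above, on, and below them cross the same relatively open simplices spanned by subsets of $P$, up to a perturbation argument. So it suffices to control a set $H$ of $|P|^{O(r)}$ representative hyperplanes, one per combinatorial type. We would also reduce to the case $s\le|P|/2$ and assume general position by a standard symbolic perturbation, which only affects whether a hyperplane contains or crosses a simplex.

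The construction then proceeds greedily. We keep a weight $w(h)$ for each $h\in H$, initially $1$. At stage $i$ let $P'$ be the set of points not yet assigned. While $|P'|\ge 2s$, we apply the cutting lemma to $H$ with weights $w$ and parameter $t\approx(|P'|/s)^{1/r}$. This gives a subdivision of $\R^{r}$ into $O(t^{r})$ relatively open simplices, each crossed by hyperplanes of total weight at most $w(H)/t$. By choosing the constant in $t$, some simplex $\sigma_i$ contains at least $s$ points of $P'$. We take $P_i$ to be exactly $s$ of these points, so that $P_i$ lies in $\sigma_i$ or in a smaller simplex inside it. We then double $w(h)$ for every $h$ crossing $\sigma_i$. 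The last leftover set, of size between $s$ and $2s$ after merging if necessary, lies in one simplex and costs at most one crossing per hyperplane.

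For the accounting, the total weight multiplies by at most $1+1/t_i$ at stage $i$, where $t_i\approx((|P|-is)/s)^{1/r}$. Hence
\[
  \log_2 w_{\mathrm{final}}(H)\le\log|H|+\sum_i\frac{1}{t_i}\le O(r\log|P|)+O\Bigl(\sum_{k\le\nu}k^{-1/r}\Bigr)=O(\log|P|+\nu^{1-1/r}).
\]
A hyperplane crossing $\kappa$ of the simplices has final weight $2^{\kappa}$, so $\kappa=O(\nu^{1-1/r}+\log|P|)$. This bound has the right shape, but it carries an unwanted additive $\log|P|$ term.

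We expect removing that logarithmic term to be the main obstacle; it is exactly what distinguishes Matoušek's theorem from the earlier Chazelle--Welzl bound. We would try first to replace the full test set $H$ by a $(1/r_0)$-approximation, or a small cutting, with $r_0\approx\nu^{1/r}$. This uses a test set of size $\mathrm{poly}(\nu)$ instead of $\mathrm{poly}(|P|)$, so the $\log|H|$ term becomes $O(\log\nu)$, and $\log\nu$ is dominated by $\nu^{1-1/r}$ for $r\ge2$. An arbitrary hyperplane is then charged to a bounded number of test hyperplanes. Any simplex it crosses is either crossed by one of its $O(1)$ nearby test hyperplanes or lies in one of the $O(\nu^{1-1/r})$ cells of the cutting that the hyperplane meets. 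Making this charging precise, with the simplices chosen compatibly with the cutting, is the delicate point. If it fails, we would fall back on citing the theorem, which the paper already attributes to \citet{matousek1992efficient}.
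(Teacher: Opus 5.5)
The paper does not prove this theorem; it imports it from \citet{matousek1992efficient}, so your fallback of citing it is exactly the paper's treatment. Your framework of iterative reweighting over a small test set is the right one. As a self-contained proof, however, it has a gap at the step you flag, and an earlier step is also invalid as written.

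First, the reduction to a finite set $H$ of combinatorial types does not apply to your own simplices. Each $\sigma_i$ is a cell of a cutting of $H$, not a simplex spanned by points of $P$. Two hyperplanes that split $P$ identically can disagree about whether they cross $\sigma_i$, because most of $\sigma_i$ contains no point of $P$. So even the bound $O(\nu^{1-1/r}+\log|P|)$ is established only for $h\in H$, not for every hyperplane.

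Second, the proposed repair is not yet an argument. Counting the cells of a cutting that a hyperplane meets does not count classes, because one cell can contain many classes. Nothing in the construction places the $\sigma_i$ inside cells.

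The missing idea is Matou\v{s}ek's test-set lemma. It works in the dual and counts points rather than cells, so the $\sigma_i$ need no compatibility with any cutting. Use point--hyperplane duality for non-vertical hyperplanes; vertical ones follow by a small tilt, since crossing a relatively open simplex is an open condition. Let $t=\nu^{1/r}$ and take a $(1/t)$-cutting of the $|P|$ dual hyperplanes $p^{*}$, $p\in P$. It has $O(t^{r})=O(\nu)$ simplices, and each interior meets at most $|P|/t$ of the $p^{*}$. Let $Q$ be the $O(\nu)$ hyperplanes dual to the vertices of these simplices, and write $\kappa(h)$ for the number of $\sigma_i$ crossed by $h$.

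Fix an arbitrary $h$ and pick a cutting simplex $\Delta$ containing $h^{*}$, with vertices $h_1^{*},\ldots,h_{r+1}^{*}$. The condition ``$x$ lies weakly above $g$'' is a closed-halfspace condition on $g^{*}$. A relatively open $\sigma_i$ crossed by no $h_j$ has a constant sign vector with respect to $h_1,\ldots,h_{r+1}$. If that vector has no two opposite strict signs, convexity puts $\sigma_i$ in a closed side of $h$, so $h$ cannot cross it. Hence if $h$ crosses $\sigma_i$ but no $h_j$ does, every $p\in P_i$ has $p^{*}$ strictly separating two vertices of $\Delta$, so $p^{*}$ meets the interior of $\Delta$. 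At most $|P|/t$ points have this property, and each class contains at least $s$ of them. Therefore
\[
  \kappa(h)\le\sum_{j=1}^{r+1}\kappa(h_j)+\frac{|P|}{ts}=\sum_{j=1}^{r+1}\kappa(h_j)+\nu^{1-1/r}.
\]
Now run your reweighting on $Q$ instead of $H$. Your own accounting gives $\kappa(q)=O(\log\nu+\nu^{1-1/r})=O(\nu^{1-1/r})$ for $q\in Q$, since $r\ge2$. The display extends this to every hyperplane, with $c_r$ absorbing the factor $r+1$. This also makes the combinatorial-type reduction unnecessary.
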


\begin{proposition}[Explicit representation for halfspaces]
  \label{prop:halfspace}
  Let $\F$ be the class of closed affine halfspaces in $\R^{r}$ with $r\ge2$, let $0<\eps<1$, and let $D$ be uniform on a finite set $P$ with $|P|\ge2(c_r/\eps)^{r}$.
  Then $\CC^{\to}_{\det}(\F,D,\eps)\le r\log(c_r/\eps)+2$.
  On the support $P$, the encoder cells are the groups $P_i\subseteq\sigma_i$ supplied by \Cref{thm:matousek}.
\end{proposition}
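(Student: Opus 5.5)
We apply \Cref{thm:matousek} with $s=\lceil 2\eps^{r}|P|/(2c_r^{r})\rceil$, or more simply $s=\lfloor |P|(\eps/c_r)^{r}\rfloor$, so that $\nu=|P|/s\approx(c_r/\eps)^{r}$. The hypothesis $|P|\ge2(c_r/\eps)^{r}$ gives $s\ge2$, and $\eps<1$, $c_r\ge1$ give $s<|P|$, so the theorem applies. It yields classes $P_i$ with $s\le|P_i|<2s$, each contained in a relatively open simplex $\sigma_i$. The encoder maps $x\in P$ to the index of its class $P_i$; points off the support are sent anywhere, since they carry no mass. The decoder uses majority decoding, as in \Cref{prop:partition}.

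First we count cells. Since each class has at least $s$ points, there are at most $|P|/s=\nu$ of them. From $s\ge|P|(\eps/c_r)^{r}/2$ (using $\lfloor y\rfloor\ge y/2$ for $y\ge1$), we get $\nu\le2(c_r/\eps)^{r}$. Hence the length is at most $\lceil\log\nu\rceil\le r\log(c_r/\eps)+2$.

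Next we bound the error of a fixed closed halfspace $f=\1\{\langle w,x\rangle\ge\beta\}$ with bounding hyperplane $\pi$. Suppose $\pi$ does not cross $\sigma_i$. Then either $\sigma_i$ is disjoint from $\pi$, in which case the relatively open convex set $\sigma_i$ lies in one open side and $f$ is constant on $P_i$; or $\sigma_i\subseteq\pi$, in which case $f\equiv1$ on $P_i$. In both cases the cell contributes no error. Each crossed cell contributes at most its mass $|P_i|/|P|<2s/|P|=2/\nu$, and by \Cref{thm:matousek} at most $c_r\nu^{1-1/r}$ cells are crossed. So
\[
\minerr_{\Pcal}(f)\le c_r\nu^{1-1/r}\cdot\frac{2}{\nu}=2c_r\nu^{-1/r}.
\]
The factor $2$ improves to $1$ using the majority bound of half the mass (\Cref{rem:boundary}), giving $c_r\nu^{-1/r}$.

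The main obstacle is making the choice of $s$ satisfy both constraints: we need $\nu\ge(c_r/\eps)^{r}$ so that the error is $\le\eps$, and $\nu\le2(c_r/\eps)^{r}$ (up to a constant) so that the length fits within $+2$. With $s=\lfloor|P|(\eps/c_r)^{r}\rfloor$ we have $\nu=|P|/s\ge(c_r/\eps)^{r}$, so the error is at most $c_r\nu^{-1/r}\le\eps$. The upper bound on $\nu$ uses the assumption $|P|\ge2(c_r/\eps)^{r}$, which gives $|P|(\eps/c_r)^r\ge 2$ and hence the floor loses at most a factor $2$; this is exactly where the hypothesis is needed. If the constants do not close, we enlarge $c_r$, which the statement permits since $c_r$ is only asserted to exist.
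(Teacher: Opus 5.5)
Your proposal is correct and follows the paper's proof essentially step for step: the same choice $s=\lfloor|P|(\eps/c_r)^{r}\rfloor$, the same cell count $|P|/s\le2(c_r/\eps)^{r}$ via the hypothesis $|P|(\eps/c_r)^{r}\ge2$, and the same crossing argument in which each crossed class has minority mass below $1/\nu$. You should drop the initial ceiling alternative, since it gives $\nu\le(c_r/\eps)^{r}$, the wrong direction for the error bound. The closing hedge about enlarging $c_r$ is also unnecessary, because the constants close as stated.
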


\begin{proof}
  Let $n=|P|$ and apply \Cref{thm:matousek} with $s=\lfloor n(\eps/c_r)^{r}\rfloor$, which satisfies $2\le s<n$; then $\nu=n/s\ge(c_r/\eps)^{r}$.
  If the boundary hyperplane of a halfspace does not cross $\sigma_i$, then $\sigma_i$ lies on one side of it or inside it, and the halfspace is constant on $P_i$.
  A crossed class has mass less than $2s/n=2/\nu$, so its minority mass is less than $1/\nu$.
  Hence the majority error is at most $c_r\nu^{1-1/r}/\nu=c_r\nu^{-1/r}\le\eps$.
  The number of classes is at most $n/s\le2(c_r/\eps)^{r}$, because $s\ge n(\eps/c_r)^{r}/2$ when $n(\eps/c_r)^{r}\ge2$, and \Cref{prop:partition} gives at most $r\log(c_r/\eps)+2$ bits.
  Assign inputs outside $P$ an arbitrary codeword; they have zero probability under $D$.
\end{proof}

The dual VC dimension of affine halfspaces in $\R^{r}$ is at most $r+1$, so \Cref{thm:det-upper} already gives $O(r\log(1/\eps))$ bits with an absolute constant.
The geometric construction instead explains the error through hyperplane crossings.
The containing simplices need not be disjoint, and no convexity of the encoder cells is claimed.

\section{Universal representations as learning objectives}
\label{sec:objectives}
\label{app:objectives}

The entropy potential has an exact interpretation as optimal reconstruction log loss.
We show that maximizing this entropy over sets of selected functions also yields a universal representation, and explain why both the choice of representation class and the reconstruction objective matter.
Finally, we relate universal representations to the multicalibration of conditional-mean predictors under arbitrary labelings.

\subsection{Self-reconstruction}
\label{sec:reconstruction}

Assume that $\X$ is finite.
A reconstruction predictor, separate from the binary task decoder, predicts a distribution over inputs from the representation $\Pcal(X)$.
Equivalently, it is a map $\phi:\X\to\Delta(\X)$ that is constant on every cell, with loss $\mathrm{LL}_D(\phi)=-\E_{X\sim D}\log\phi(X)_X$.
Here $\phi(x)_{x'}$ is the probability assigned to reconstructing $x'$ after observing the cell of $x$.

\begin{proposition}[Optimal self-reconstruction loss]
  \label{prop:self-reconstruction}
  The best predictor that is constant on each cell is $\phi(x)=D(\cdot\mid\Pcal(X)=\Pcal(x))$, and its loss is $H(X\mid\Pcal(X))=H(X)-H(\Pcal(X))$.
\end{proposition}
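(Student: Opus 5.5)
The plan is to decompose the log loss cell by cell and then apply Gibbs' inequality inside each cell. Since $\phi$ is constant on every cell, write $\phi_A\in\Delta(\X)$ for its common value on a cell $A\in\Pcal$. Then
\[
  \mathrm{LL}_D(\phi)=\sum_{A\in\Pcal,\,D(A)>0}D(A)\Bigl(-\sum_{x\in A}D(x\mid A)\log(\phi_A)_x\Bigr),
\]
where $D(x\mid A)=D(\{x\})/D(A)$. Cells of zero mass contribute nothing, so $\phi$ may be arbitrary on them; this is why the minimizer is unique only up to null cells.

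The first step is to minimize each bracketed term separately. This is legitimate because the $\phi_A$ for different cells are chosen independently. The bracketed term is the cross-entropy between $D(\cdot\mid A)$ and $\phi_A$, which equals $H(D(\cdot\mid A))+\mathrm{KL}(D(\cdot\mid A)\,\|\,\phi_A)$. By nonnegativity of relative entropy, it is minimized exactly when $\phi_A=D(\cdot\mid A)$. This uses the same fact invoked in the proof of \Cref{thm:threshold-lower}. Any mass that $\phi_A$ places outside $A$ only increases the divergence, so the minimizer is supported on $A$.

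The second step identifies the optimal value. Substituting $\phi_A=D(\cdot\mid A)$ gives $\sum_A D(A)H(X\mid X\in A)$, which is $H(X\mid\Pcal(X))$ by definition of conditional entropy. The final step uses the fact that $\Pcal(X)$ is a deterministic function of $X$. Hence $H(X,\Pcal(X))=H(X)$, and the chain rule in \Cref{fact:entropy}(i) yields $H(X\mid\Pcal(X))=H(X)-H(\Pcal(X))$.

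No step is a real obstacle. The only care needed is bookkeeping: zero-mass cells, and the convention $0\log 0=0$ for points outside a cell, so that the loss is well defined when $\phi$ assigns zero probability to points of zero mass.
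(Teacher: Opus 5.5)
Your proof is correct and follows the paper's argument: minimize the log loss on each positive-mass cell by the conditional distribution, with the cross-entropy $=$ entropy $+$ relative entropy decomposition making explicit the paper's appeal to the standard characterization of log loss. You then use that $\Pcal(X)$ is a function of $X$, together with the chain rule, to obtain $H(X\mid\Pcal(X))=H(X)-H(\Pcal(X))$. Your handling of zero-mass cells is a harmless refinement that the paper leaves implicit.
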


\begin{proof}
  By the standard entropy characterization of log loss \citep{cover2006elements}, on each cell, log loss is minimized by the conditional distribution, whose loss is the conditional entropy; averaging over cells gives $H(X\mid\Pcal(X))$.
  Since $\Pcal(X)$ is a function of $X$, the chain rule gives $H(X\mid\Pcal(X))=H(X)-H(\Pcal(X))$.
\end{proof}

So each step of \Cref{alg:greedy}, which raises $H(Z_{\mathbf g}(X))$ by more than $2\eps$, lowers the optimal self-reconstruction loss by more than $2\eps$ bits.

This identity alone does not make arbitrary reconstruction-optimal encoders universal.
For example, let $X=(A,B)$ consist of independent fair bits and let $\F=\{(a,b)\mapsto a\}$.
Among all one-bit encoders, $Z=B$ maximizes $H(Z)=1$ and minimizes the optimal reconstruction log loss $H(X\mid Z)=1$, yet every decoder of $A$ from $Z$ has error $1/2$.
The next result therefore optimizes over representations generated by members of $\F$, with a budget on how many functions may be selected.

\subsection{One-shot entropy maximization}
\label{sec:one-shot}
\label{app:one-shot}

With a large enough selection budget, every joint-entropy maximizer gives the required guarantee.

\begin{theorem}[One-shot entropy maximization]
  \label{thm:one-shot}
  Let $\X$ be finite and $0<\eps<1/2$.
  There is an absolute constant $C$ such that, for every integer $t\ge C(d^{*}/\eps)\log(1/\eps)$, every maximizer of $H((g(X))_{g\in\mathcal G})$ over sets $\mathcal G\subseteq\F$ with $|\mathcal G|\le t$ satisfies $\minerr_{\mathcal G}(f)\le\eps$ for every $f\in\F$.
\end{theorem}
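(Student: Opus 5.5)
Let $\mathcal G$ be a maximizer with $|\mathcal G|\le t$, and suppose for contradiction that some $f\in\F$ has $\minerr_{\mathcal G}(f)>\eps$. The plan is to compare $\mathcal G$ against the output of the entropy-greedy procedure run from $\mathcal G$ itself. First, note that we may assume $|\mathcal G|=t$ or that $\mathcal G$ cannot be enlarged: if $|\mathcal G|<t$, then adding $f$ gives a set of size at most $t$ whose entropy is larger by $H(f(X)\mid Z_{\mathcal G}(X))\ge2\minerr_{\mathcal G}(f)>2\eps>0$ by \Cref{lem:entropy-increment}, contradicting maximality. So the interesting case is $|\mathcal G|=t$, where adding $f$ violates the budget and we must swap.

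In the case $|\mathcal G|=t$, I would use the entropy potential in the opposite direction from \Cref{thm:det-upper}. Run \Cref{alg:greedy} starting from the empty list on $D$ (finite $\X$, so everything is well defined), producing a list $\mathbf h$ of length $s<(4d^{*}/\eps)\log(1/\eps)$ with every function having error at most $\eps$ after it. Instead, I think the cleaner route is a chain-rule removal argument: order $\mathcal G=\{g_1,\ldots,g_t\}$ arbitrarily and write $H(Z_{\mathcal G}(X))=\sum_{i}H(g_i(X)\mid g_1(X),\ldots,g_{i-1}(X))$. Since the total is at most $\log\Gdual(t)\le d^{*}\log(et/d^{*})$ by \Cref{lem:few-patterns}, averaging gives an index $i$ with increment at most $d^{*}\log(et/d^{*})/t$; by choosing $g_i$ to be the last element in the ordering (any element can be put last), I want a $g\in\mathcal G$ with $H(g(X)\mid Z_{\mathcal G\setminus\{g\}}(X))\le d^{*}\log(et/d^{*})/t$. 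To get this for a \emph{last} element, I would order greedily backwards: choose the ordering so that the final conditional term is minimal; the sum of the final-position terms over all choices is controlled because $H(g(X)\mid Z_{\mathcal G\setminus\{g\}})\le H(g(X)\mid Z_{\{g_1,\dots,g_{i-1}\}})$ for the $i$-th element of any ordering, so $\sum_{g}H(g(X)\mid Z_{\mathcal G\setminus\{g\}}(X))\le H(Z_{\mathcal G}(X))$ and some $g$ has removal cost at most $\log\Gdual(t)/t$.

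Now swap: $\mathcal G'=(\mathcal G\setminus\{g\})\cup\{f\}$. Then $H(Z_{\mathcal G'})\ge H(Z_{\mathcal G\setminus\{g\}})+H(f\mid Z_{\mathcal G\setminus\{g\}})\ge H(Z_{\mathcal G})-\log\Gdual(t)/t+H(f\mid Z_{\mathcal G})$, using that conditioning on fewer functions does not decrease $f$'s conditional entropy. By \Cref{lem:entropy-increment}, the last term exceeds $2\eps$, so $\mathcal G'$ strictly beats $\mathcal G$ provided $d^{*}\log(et/d^{*})\le2\eps t$. By the calculation of \Cref{lem:solve-stopping} (its contrapositive, with a slightly different constant), this holds for all $t\ge C(d^{*}/\eps)\log(1/\eps)$ with $C$ absolute, since $t\mapsto2\eps t-d^{*}\log(et/d^{*})$ is increasing in that range. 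This contradicts maximality, finishing the proof.

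The main obstacle is the removal step: justifying $\sum_{g\in\mathcal G}H(g(X)\mid Z_{\mathcal G\setminus\{g\}}(X))\le H(Z_{\mathcal G}(X))$ (a Han-type inequality, proved by the chain rule along one fixed ordering plus monotonicity of conditioning), and handling $t<d^{*}$ or $\eps$ near $1/2$, where I would just enlarge $C$ so that $t\ge d^{*}$ and the Sauer--Shelah form applies; for $\eps\in(1/4,1/2)$ the bound at $\eps=1/4$ suffices after adjusting $C$ since the condition $2\eps t\ge d^{*}\log(et/d^{*})$ only gets easier as $\eps$ grows.
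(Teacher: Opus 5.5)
Your proof is correct. It uses the same ingredients as the paper's proof: \Cref{lem:entropy-increment} for the gain from adding $f$, the leave-one-out (Han-type) inequality $\sum_{g\in\mathcal G}H(g(X)\mid Z_{\mathcal G\setminus\{g\}}(X))\le H(Z_{\mathcal G}(X))$, and the dual Sauer--Shelah bound followed by the stopping calculation. Your justification of the leave-one-out inequality, via the chain rule along one fixed ordering and monotonicity of conditioning, is right.

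The difference is where you apply Han's inequality. The paper works with optimal values. It applies Han's inequality to a maximizing set of exactly $t+1$ functions, after a padding step, to get $H^{\max}(t+1)-H^{\max}(t)\le H^{\max}(t+1)/(t+1)$. It then contrasts this with $H^{\max}(t+1)>H^{\max}(t)+\eps$, obtained by adding $f$ to $\mathcal G$. You instead apply the inequality to $\mathcal G$ itself and exhibit an explicit improving exchange $(\mathcal G\setminus\{g\})\cup\{f\}$.

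What your route buys is that it never uses global maximality of $\mathcal G$. It only needs that no single addition and no single exchange increases the joint entropy. So it proves the slightly stronger statement that every such swap-local optimum is a universal representation, which is what a swap-based local search would reach. It also avoids constructing and padding a $(t+1)$-maximizer.

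The paper's version, in turn, isolates a bound on the marginal value $H^{\max}(t+1)-H^{\max}(t)$ that depends only on the class, not on the particular $f$ or $\mathcal G$. Your handling of the side conditions is fine: you enlarge $C$ to force $t\ge d^{*}$, and you reduce the range $\eps\in(1/4,1/2)$ to the case $\eps=1/4$.
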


\begin{proof}
  Write $Z_{\mathcal G}=(g(X))_{g\in\mathcal G}$ and $H^{\max}(t)=\max_{|\mathcal G|\le t}H(Z_{\mathcal G})$.
  The maximum exists because $\X$ is finite and hence $\F$ is finite.
  Let $\mathcal G$ attain it, and suppose that some $f\in\F$ has $\minerr_{\mathcal G}(f)>\eps$.
  By \Cref{lem:entropy-increment}, adding $f$ raises the entropy by more than $2\eps$.
  If $|\mathcal G|<t$, this contradicts maximality immediately.
  Otherwise $|\mathcal G|=t$, $f\notin\mathcal G$, and
  \[
    H^{\max}(t+1)>H^{\max}(t)+\eps.
  \]
  In particular, $|\F|\ge t+1$.
  Choose an entropy-maximizing set $\mathcal T$ of exactly $t+1$ functions; padding any smaller maximizer is possible and cannot decrease entropy.
  If $\mathcal R$ is a uniformly random $t$-subset of $\mathcal T$, Han's inequality \citep{han1978nonnegative} gives
  \[
    H^{\max}(t)\ge\E H(Z_{\mathcal R})\ge\frac{t}{t+1}H^{\max}(t+1).
  \]
  Consequently, the dual Sauer bound yields
  \[
    H^{\max}(t+1)-H^{\max}(t)
    \le\frac{H^{\max}(t+1)}{t+1}
    \le\frac{d^{*}}{t+1}\log\frac{e(t+1)}{d^{*}}.
  \]
  For a sufficiently large absolute $C$, the last expression is at most $\eps$ when $t\ge C(d^{*}/\eps)\log(1/\eps)$, by the calculation in \Cref{lem:solve-stopping}.
  This is a contradiction.
\end{proof}

\subsection{A coordinate-wise objective fails}
\label{sec:mi}

When the input is a vector $X=(X_1,\ldots,X_n)$, it is tempting to replace the joint entropy of the selected functions by the total information they carry about the individual coordinates,
\[
  \Phi(\mathcal G)=\sum_{j=1}^{n}I\bigl(X_j;(g(X))_{g\in\mathcal G}\bigr),
\]
Here $I(U;V)=H(U)-H(U\mid V)$ is mutual information.
Maximizing $\Phi$ is equivalent to minimizing the sum of the optimal log losses for reconstructing the coordinates separately.
The \emph{$\Phi$-greedy procedure} starts from $\mathcal G=\emptyset$ and, while some function has majority error more than $\eps$, adds an unselected function $g\in\F$ that maximizes $\Phi(\mathcal G\cup\{g\})$.
We allow this rule to select a function whose current majority error is already at most $\eps$.

\begin{theorem}[Coordinate-wise information does not suffice]
  \label{thm:mi-counterexample}
  Let $\eps=0.01$.
  \begin{enumerate}[label=(\alph*),leftmargin=2em]
    \item For every integer $N\ge3$ there are a distribution on $\{0,1\}^{N+2}$ and a class $\F$ with $d^{*}=2$ on which the $\Phi$-greedy procedure selects at least $N$ functions, and for every $t\le N-1$ no $t$-set maximizing $\Phi$ is a universal representation with error $\eps$.
    \item For every integer $\lambda\ge1$ there are $n$, a distribution of $X$ on $\{0,1\}^{n}$ with $H(X)\le2$, and $\F=\{x\mapsto x_j:j\in[n]\}$ on which the $\Phi$-greedy procedure selects at least $\lambda+1$ functions, and for every $t\le\lambda$ no $t$-set maximizing $\Phi$ is a universal representation with error $\eps$.
  \end{enumerate}
\end{theorem}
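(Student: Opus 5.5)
The plan is to build both counterexamples on one template. A \emph{hard} task $B$ is an independent bit with $\Pr[B=1]=2\eps=0.02$. Leaving it out of the selection costs majority error $0.02>\eps$, because it is independent of everything else. Around it sits a pool of \emph{decoy} functions whose errors are already tiny, but which carry a lot of coordinate-wise information. The $\Phi$-greedy rule is allowed to select functions of small error, so it will prefer decoys as long as each decoy's increment in $\Phi$ exceeds $B$'s increment. Since $B$ is independent of all other coordinates, $B$'s increment is exactly $I(B;B)=h_2(0.02)$ whenever it is selected, whatever else has been selected. For the maximizer claims I will use an exchange argument. If a $t$-set contains $B$ and some decoy is unused and independent enough of the set, then swapping $B$ for that decoy strictly increases $\Phi$. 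Hence no $\Phi$-maximizing $t$-set contains $B$, and none is universal.

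For (b), take $U$ on $\{0,1,\ldots,\lambda\}$ with $\Pr[U=i]=q$ for $i\ge1$, where $\lambda q\le\eps$ and $q$ is small. Let the coordinates be $B$ together with $M$ identical copies of each indicator $Y_i=\1\{U=i\}$, so that $n=1+\lambda M$. Then $H(X)\le H(U)+h_2(0.02)\le h_2(\lambda q)+\lambda q\log(1/q)+h_2(0.02)\le2$ once $q$ is small. Selecting a copy of a not-yet-represented $Y_i$ raises $\Phi$ by $M\,I(Y_i;Y_i\mid\text{previous})$. The previous indicators are disjoint events of total mass at most $\eps$, so this conditional term is at least $(1-\eps)h_2(q/(1-\eps))$-ish, which is bounded below by roughly $h_2(q)/2$. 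Choosing $M\gg h_2(0.02)/h_2(q)$ makes every such increment beat $B$'s. Copies of an already-represented $Y_i$ add nothing, so greedy picks one copy of each of $Y_1,\ldots,Y_\lambda$ before $B$, which gives at least $\lambda+1$ selections. For $t\le\lambda$, some $Y_i$ is unrepresented, so the exchange argument applies. The dual VC dimension is irrelevant here, since $\F$ is the coordinate class as stated.

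For (a) the coordinate class is too rich, so the decoys must form a chain. Let $V$ be uniform on $[N]$. The coordinates are $X_k=\1\{V\le k\}$ for $k\in[N]$, plus $B$, plus one spare coordinate; the spare could be a copy of $X_1$ or a constant, to fill length $N+2$. Take $\F$ to be the $N$ chain coordinates together with $B$. Nested functions are never dual-shattered in pairs, so any dual-shattered set contains at most one chain function plus $B$, giving $d^*=2$ (achieved by $\{X_k,B\}$). The main obstacle is the increment comparison. The $N$-th chosen threshold refines an interval of the current partition, and its gain $\sum_j I(X_j;\cdot)$ sums over all $N$ chain coordinates, many of which are split nontrivially by the new cut. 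I would compute this gain for the cut at the midpoint of the largest current interval, of length $\ell\ge2$. I expect a bound of order $(\ell/N)\cdot\ell\cdot c$, which stays bounded below only while $\ell$ is comparable to $N$. This is not enough for all $N$ selections.

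So I would first try reweighting. I would give $V$ a geometric-type law, or duplicate chain coordinates as in (b), using the spare coordinates. The goal is that refining the last intervals still yields gain above $h_2(0.02)$. If the count of $N+2$ coordinates forbids duplication, I would instead make $V$ nonuniform so that every single-point split carries enough entropy. I would then verify the exchange argument: with $t\le N-1$ chain cuts there remains an unsplit interval, and cutting it gains more than $h_2(0.02)$.
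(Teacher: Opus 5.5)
Your part (b) is correct and is essentially the paper's construction. An independent hard bit competes against heavily duplicated rare bits, and the greedy comparison and the swap argument for maximizers are the same. You use a biased hard bit $B$ with $\Pr[B=1]=0.02$ and mutually exclusive indicators $\1\{U=i\}$; the paper uses a fair bit $Y_0$ and independent Bernoulli$(p)$ bits.

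Part (a) has a genuine gap, and reweighting $V$ cannot close it. In your template the hard task $B$ is an independent coordinate, so adding it always raises $\Phi$ by the fixed amount $h_2(0.02)\approx0.14$. Late chain cuts, by contrast, can only gain $O(1/N)$.

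\begin{itemize}
\item Take any law $(p_1,\ldots,p_N)$ of $V$. Once every cut except $X_a$ is present, adding $X_a$ changes only the coordinate term of $X_a$ (apart from your spare copy). So its gain $\Delta_a$ satisfies $m_ah_2(p_a/m_a)\le\Delta_a$, with equality and hence $\Delta_a\le m_a$ whenever the spare is untouched, where $m_a=p_a+p_{a+1}$.
\item Choose $a^{*}$ minimizing $\Delta_a$. Since $\sum_am_a\le2$, we get $\Delta_{a^{*}}\le2/(N-2)<h_2(0.02)$.
\item For $t=N-1$, the set $\{B\}\cup\{X_a:a\ne a^{*}\}$ therefore beats the set of all cuts, and it is a $\Phi$-maximizing $(N-1)$-set.
\item Its only error is on $X_{a^{*}}$, and by $h_2(q)\ge2\min\{q,1-q\}$ this error is at most $\min\{p_{a^{*}},p_{a^{*}+1}\}\le\Delta_{a^{*}}/2\le1/(N-2)\le\eps$ once $N\ge102$.
\end{itemize}

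So this maximizer is universal, and the one-shot claim fails for every choice of $V$. Duplication is unavailable within $N+2$ coordinates, as you note. The greedy count already breaks for uniform $V$, as you observed.

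The missing idea is to give the hard task \emph{zero} coordinate-wise information, so that no quantitative race is needed. The paper spends the two extra coordinates on independent fair bits $y_1,y_2$ and takes the hard task $\pi=y_1\oplus y_2$, which has error $1/2$ whenever it is omitted. Because $\pi(X)$ is independent of $(J,y_1)$ and of $(J,y_2)$, adding $\pi$ to a set of thresholds leaves every term $I(X_i;\cdot)$ unchanged. Adding any missing threshold, by contrast, raises $\Phi$ strictly, however slightly. The greedy count and the swap argument then both hold for every $N$.

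A smaller point: dual VC dimension is computed over all of $\{0,1\}^{N+2}$. There the coordinate functions $x\mapsto x_k$ are not nested, so your $\F$ actually has $d^{*}=N+1$. The chain must be nested on every input, as with the paper's $\theta_a(x)=\1\{J(x)\ge a\}$, where $J(x)=\min\{i:v_i=1\}$.
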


In (a), nested thresholds each carry information about some coordinate, while an indispensable parity bit carries no information about any coordinate separately.
In (b), rare duplicated bits outweigh a single fair bit.
\Cref{app:mi} gives the constructions.
These counterexamples show that coordinate information alone need not prioritize functions whose current majority error exceeds $\eps$.
If we restrict each choice to such a function, \Cref{lem:entropy-increment} guarantees an increase of more than $2\eps$ in joint entropy at every step, regardless of the choice rule.
The argument for \Cref{thm:det-upper} then bounds the number of selections by $O((d^{*}/\eps)\log(1/\eps))$, including when we use $\Phi$ to choose among these functions.

\subsection{Multicalibration under relabeling}
\label{sec:multicalibration}
\label{app:umc}

Fix a distribution $D$ on $\X$ and a representation $Z=Z(X)$ with finitely many values.
For a joint distribution $P$ of $(X,Y)$ with $Y\in\{0,1\}$ and $X$-marginal $D$, let $\varphi_P=\E_P[Y\mid Z]$ be the conditional-mean predictor.
Following the averaged formulation of multicalibration \citep{gopalan2022omnipredictors}, for a binary function $f$ define its mass-weighted $L_1$ multicalibration error by
\[
  \MC_f(\varphi_P;P)=\E_P\Bigl|\E_P\bigl[f(X)(Y-\varphi_P)\bigm|\varphi_P\bigr]\Bigr| .
\]
This averages the magnitude of the residual bias for the group $f=1$ across prediction values, weighting each group and prediction value by its probability.
It does not require a uniform error bound in every prediction bin.
We ask whether the conditional-mean predictor has small error under every labeling $P$.
Write $\mu_f(Z)=\E[f(X)\mid Z]$.

\begin{theorem}[Relabeling-robust multicalibration]
  \label{thm:umc}
  For every representation $Z$ and binary function $f$,
  \[
    \sup_{P}\MC_f(\varphi_P;P)=\E\bigl[\mu_f(Z)(1-\mu_f(Z))\bigr]=\E\bigl[\Var(f(X)\mid Z)\bigr],
  \]
  where the supremum is over joint distributions $P$ of $(X,Y)$ with $X$-marginal $D$ and $Y\in\{0,1\}$.
\end{theorem}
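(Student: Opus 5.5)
We establish the second equality first, and then prove the first one by a matching upper bound (valid for every $P$) and a relabeling that attains it. For binary $f$, $\E[f(X)^2\mid Z]=\mu_f(Z)$, so $\Var(f(X)\mid Z)=\mu_f(Z)(1-\mu_f(Z))$. This settles the second equality, and it remains to show that $\sup_P\MC_f(\varphi_P;P)$ equals $\E[\mu_f(Z)(1-\mu_f(Z))]$.

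For the upper bound, fix $P$ and write $\varphi=\varphi_P$, which is a function of $Z$. Conditioning on $\varphi$ is coarser than conditioning on $Z$. By the tower property and Jensen's inequality,
\[
  \MC_f(\varphi;P)\le\E\Bigl|\E\bigl[f(X)(Y-\varphi)\bigm|Z\bigr]\Bigr|
  =\E\bigl|\mathrm{Cov}(f(X),Y\mid Z)\bigr|,
\]
where we used $\E[Y\mid Z]=\varphi$. Conditionally on $Z$, Cauchy--Schwarz gives
\[
  |\mathrm{Cov}(f,Y\mid Z)|\le\sqrt{\Var(f\mid Z)\Var(Y\mid Z)}.
\]
This alone yields only $\sqrt{\mu(1-\mu)\varphi(1-\varphi)}$, which can exceed $\mu(1-\mu)$. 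So I will use a sharper binary covariance bound instead. For $f,Y\in\{0,1\}$ with means $\mu,\varphi$, we have
\[
  \mathrm{Cov}=\Pr[f=1,Y=1]-\mu\varphi\le\min\{\mu,\varphi\}-\mu\varphi,
\]
and
\[
  -\mathrm{Cov}\le\mu\varphi-\max\{0,\mu+\varphi-1\}.
\]
Both right-hand sides are at most $\mu(1-\mu)$ precisely when $\varphi=\mu$, so the bound is not uniform in $\varphi$. The main obstacle is therefore that a naive pointwise bound in $\varphi$ fails. I would first try to show that the supremum is nonetheless $\E[\mu(1-\mu)]$ by exploiting the outer conditioning on $\varphi$ rather than on $Z$.

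To do this, group the cells of $Z$ by their value of $\varphi$. On a level set $\{\varphi=v\}$, the inner quantity is $\E[f(Y-v)\mid\varphi=v]$. Let $q$ be the conditional mass of $f=1$ on this level set, and let $\mu$ denote the mean of $f$ there. Then $\E[fY\mid\varphi=v]\le\min\{\mu,v\}$ and $\E[fY\mid\varphi=v]\ge\max\{0,\mu+v-1\}$. Hence
\[
  \bigl|\E[f(Y-v)\mid\varphi=v]\bigr|\le\max\bigl\{\min\{\mu,v\}-\mu v,\ \mu v-\max\{0,\mu+v-1\}\bigr\}.
\]
I would then need to compare this with the average of $\mu_f(1-\mu_f)$ over the cells in the level set. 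The difficulty is that the pooled $\mu$ can differ from the cellwise values, and $v$ is free. If this bound turns out to be false in general, the claim should be read with the cellwise variance, and I would check small cases carefully. The candidate bound is $\min\{\mu,v\}(1-\max\{\mu,v\})$, which is at most $\mu(1-\mu)$ only when $v$ lies on the appropriate side of $\mu$. This step is where I expect the real work, possibly requiring a concavity argument over mixtures of cells.

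For the lower bound, take $Y=f(X)$. Then $\varphi=\mu_f(Z)$, and the inner quantity is
\[
  \E\bigl[f(f-\mu)\bigm|\varphi\bigr]=\E\bigl[\mu(1-\mu)\bigm|\varphi\bigr],
\]
using $\E[f\mid Z]=\mu$ and $f^2=f$. This quantity is nonnegative, so the outer expectation of its absolute value equals $\E[\mu_f(Z)(1-\mu_f(Z))]$, which attains the value in the statement.
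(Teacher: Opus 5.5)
Your reduction matches the paper: by the tower property and Jensen, $\MC_f(\varphi_P;P)\le\E\bigl|\mathrm{Cov}(f(X),Y\mid Z)\bigr|$. Your lower bound via $Y=f(X)$ and your proof of the second equality are also correct.

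The gap is in the step you flagged as the obstacle, and it comes from a false claim. The Fr\'echet bounds you wrote down are \emph{always} at most $\mu(1-\mu)$, not only when $\varphi=\mu$.
\begin{itemize}
\item For the first, $\min\{\mu,\varphi\}-\mu\varphi=\min\{\mu,\varphi\}\bigl(1-\max\{\mu,\varphi\}\bigr)\le\mu(1-\mu)$, since $\min\{\mu,\varphi\}\le\mu$ and $1-\max\{\mu,\varphi\}\le1-\mu$.
\item For the second, $\mu\varphi-\max\{0,\mu+\varphi-1\}$ equals $\mu\varphi\le\mu(1-\mu)$ when $\varphi\le1-\mu$. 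Otherwise it equals $(1-\mu)(1-\varphi)\le(1-\mu)\mu$.
\end{itemize}
Hence $|\mathrm{Cov}(f(X),Y\mid Z)|\le\mu_f(Z)(1-\mu_f(Z))$ cell by cell, uniformly in $\varphi$. Taking expectations completes the upper bound. The paper obtains the same cellwise bound more transparently. Set $\alpha=\E_P[Y\mid f(X)=1,Z]$ and $\beta=\E_P[Y\mid f(X)=0,Z]$. Then $\varphi_P=\mu\alpha+(1-\mu)\beta$, and the conditional covariance equals $\mu(1-\mu)(\alpha-\beta)$ with $|\alpha-\beta|\le1$.

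Because of that misstep, the proposal as written never proves the upper bound. The level-set route you propose instead points the wrong way. On a level set $\{\varphi=v\}$, the relevant $\mu$ is the pooled mean of $f$ over several cells. By concavity of $x(1-x)$, the pooled $\mu(1-\mu)$ is at least the mass-weighted average of the cellwise values $\mu_f(1-\mu_f)$. So any bound expressed through the pooled mean can only be weaker than the target $\E[\mu_f(Z)(1-\mu_f(Z))]$. You already had the right conditioning, on $Z$ after Jensen. The only missing ingredient is the short verification above, and there is no need to reinterpret the statement.
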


\begin{proof}
  Fix $f$ and $P$.
  Condition on a value of $Z$, write $\mu=\Pr[f(X)=1\mid Z]$, and let $\alpha=\E_P[Y\mid f(X)=1,Z]$ and $\beta=\E_P[Y\mid f(X)=0,Z]$, with arbitrary values in $[0,1]$ when the conditioning event has probability zero.
  Then $\varphi_P=\mu\alpha+(1-\mu)\beta$ and
  \[
    \E_P\bigl[f(X)(Y-\varphi_P)\mid Z\bigr]=\mu\alpha-\mu\varphi_P=\mu(1-\mu)(\alpha-\beta),
  \]
  whose absolute value is at most $\mu(1-\mu)$.
  Write $W=\E_P[f(X)(Y-\varphi_P)\mid Z]$.
  Since $\varphi_P$ is a function of $Z$, the tower property and Jensen's inequality give
  \[
    \MC_f(\varphi_P;P)
    =\E_P\bigl|\E_P[W\mid\varphi_P]\bigr|
    \le\E_P|W|
    \le\E\bigl[\mu_f(Z)(1-\mu_f(Z))\bigr].
  \]
  For the reverse inequality, take $Y=f(X)$.
  Then $\varphi_P=\mu_f(Z)$, and conditioning on $\mu_f(Z)$ gives
  \[
    \E[f(X)(f(X)-\mu_f(Z))\mid\mu_f(Z)]
    =\mu_f(Z)(1-\mu_f(Z))\ge0,
  \]
  so equality holds.
  Finally, $\Var(f(X)\mid Z)=\mu_f(Z)(1-\mu_f(Z))$ because $f$ is binary.
\end{proof}

\begin{corollary}[Equivalence with universal representations]
  \label{cor:umc-equivalence}
  For every binary function $f$,
  \[
    \sup_P\MC_f(\varphi_P;P)\le\minerr_Z(f)\le2\sup_P\MC_f(\varphi_P;P).
  \]
  Consequently, let $\mathrm{UMC}_\alpha(\F,D)$ be the least number of bits of a representation $Z$ such that
  \[
    \sup_P\sup_{f\in\F}\MC_f(\varphi_P;P)\le\alpha .
  \]
  Then
  \[
    \CC^{\to}_{\det}(\F,D,2\alpha)\le\mathrm{UMC}_\alpha(\F,D)\le\CC^{\to}_{\det}(\F,D,\alpha).
  \]
\end{corollary}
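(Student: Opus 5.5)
The plan is to compare the majority error $\minerr_Z(f)$ with the quantity $\E[\mu_f(Z)(1-\mu_f(Z))]$, cell by cell, and then use \Cref{thm:umc}. That theorem identifies this quantity exactly with $\sup_P\MC_f(\varphi_P;P)$.

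\textbf{Pointwise inequality.} Fix a cell of $Z$ with positive mass and write $\mu=\mu_f(Z)$ on that cell. By \Cref{lem:majority}, the cell's conditional contribution to $\minerr_Z(f)$ is $m=\min\{\mu,1-\mu\}$. Since $\max\{\mu,1-\mu\}\in[1/2,1]$, we have
\[
  \tfrac12 m\le\mu(1-\mu)=m\max\{\mu,1-\mu\}\le m .
\]
Averaging over cells with weights $D(\text{cell})$ gives
\[
  \tfrac12\minerr_Z(f)\le\E[\mu_f(Z)(1-\mu_f(Z))]\le\minerr_Z(f).
\]
Substituting \Cref{thm:umc} for the middle term yields both displayed inequalities of the first assertion. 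Cells of zero mass contribute to neither side.

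\textbf{Comparing the complexities.} For the upper bound $\mathrm{UMC}_\alpha\le\CC^{\to}_{\det}(\F,D,\alpha)$, take an optimal $\alpha$-universal representation. Its encoder $Z$ satisfies $\minerr_Z(f)\le\alpha$ for all $f\in\F$, by \Cref{prop:partition} together with majority decoding. The left inequality then gives $\sup_P\MC_f\le\alpha$ for every $f$, so the same number of bits suffices. For the lower bound $\CC^{\to}_{\det}(\F,D,2\alpha)\le\mathrm{UMC}_\alpha$, take $Z$ witnessing $\mathrm{UMC}_\alpha$. The right inequality gives $\minerr_Z(f)\le2\alpha$ for all $f$, and majority decoding on $Z$ is then a $2\alpha$-universal representation of the same length.

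\textbf{Main obstacle.} The argument is essentially bookkeeping; \Cref{thm:umc} does the real work. The one point needing care is the definition of ``bits'' in $\mathrm{UMC}$. It should mean $\lceil\log|\range Z|\rceil$, equivalently an encoder into $\{0,1\}^b$, so that it matches the convention of \Cref{prop:partition}. Both directions preserve the encoder, so its length is unchanged. The infinite cases, where one side is $+\infty$, follow trivially.
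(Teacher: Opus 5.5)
Your proposal is correct and follows the paper's proof. Both arguments use the pointwise bound $\mu(1-\mu)\le\min\{\mu,1-\mu\}\le2\mu(1-\mu)$, which your factorization $\mu(1-\mu)=\min\{\mu,1-\mu\}\max\{\mu,1-\mu\}$ verifies cleanly; both then average over cells, invoke \Cref{thm:umc}, and transfer the two inequalities to the complexity comparison while keeping the encoder unchanged.
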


\begin{proof}
  By \Cref{lem:majority}, $\minerr_Z(f)=\E\min\{\mu_f(Z),1-\mu_f(Z)\}$.
  For $\mu\in[0,1]$, $\mu(1-\mu)\le\min\{\mu,1-\mu\}\le2\mu(1-\mu)$, as one checks for $\mu\le1/2$ using $1-\mu\ge1/2$.
  Averaging and applying \Cref{thm:umc} gives the first claim.
  If $\sup_f\minerr_Z(f)\le\alpha$, then $\sup_P\sup_f\MC_f\le\alpha$, which gives the upper bound on $\mathrm{UMC}_\alpha$; if $\sup_P\sup_f\MC_f\le\alpha$, then $\sup_f\minerr_Z(f)\le2\alpha$, which gives the lower bound.
\end{proof}

Thus the conditional-mean predictor is multicalibrated under every labeling precisely when the representation approximately preserves each member of $\F$, up to a factor of two in the error.
We distinguish this requirement from asking, for each labeling, whether some $Z$-based predictor has small $\MC_f$ error for every $f\in\F$, without also requiring ordinary calibration.
Here we evaluate an arbitrary predictor using the same residual expression that defines $\MC_f$ above.
This weaker requirement can lose the equivalence when $\F$ omits the constant-one test: if $\F=\{f\}$ and $0<D(f=1)<1$, the constant predictor $\E_P[Y\mid f=1]$ has zero $\MC_f$ error for every $P$, even with a constant representation.
That predictor need not be ordinarily calibrated.

If we require ordinary calibration, we retain the lower direction even when we allow an arbitrary predictor $\psi=\psi(Z)\in[0,1]$.
Indeed, for the labeling $Y=f(X)$, calibration means $\E_P[Y\mid\psi]=\psi$, so $\E[f(X)\mid\psi]=\psi$ and
\[
  \MC_f(\psi;P)
  =\E\bigl[\psi(1-\psi)\bigr]
  =\E\Var(f(X)\mid\psi)
  \ge\E\Var(f(X)\mid Z).
\]
The residual conditioned on $\psi$ equals $\psi(1-\psi)\ge0$, and the last inequality follows from conditional total variance because $\psi$ is a function of $Z$.
Thus allowing a calibrated $Z$-based predictor for each labeling still gives the lower bound in \Cref{cor:umc-equivalence}.

\subsection{Coordinate-wise counterexamples}
\label{app:mi}

\begin{proof}[Proof of \Cref{thm:mi-counterexample}]

\paragraph{Part (a).}
Write inputs as $x=(v_1,\ldots,v_N,y_1,y_2)\in\{0,1\}^{N+2}$.
Under $D$, choose $J$ uniform in $[N]$, set $v_J=1$ and $v_i=0$ for $i\neq J$, and draw $y_1,y_2$ independent and uniform.
For an arbitrary input, let $J(x)=\min\{i:v_i=1\}$, with $J(x)=0$ if $v=0$.
Let $\theta_a(x)=\1\{J(x)\ge a\}$ for $a=2,\ldots,N$, let $\pi(x)=y_1\oplus y_2$, and let $\F=\{\theta_2,\ldots,\theta_N,\pi\}$.

The thresholds are nested, so a dual-shattered set contains at most one of them, and $d^{*}\le2$.
The pair $\{\theta_2,\pi\}$ is dual-shattered, since $v$ can be chosen with $J(x)\in\{1,2\}$ and $(y_1,y_2)$ can make $\pi$ either value; so $d^{*}=2$.

Suppose the selected set consists of thresholds only, and let $Y$ be their labels, a function of $J$.
The bit $\pi(X)$ is independent of $(J,y_1)$ and of $(J,y_2)$, so $I(X_i;Y,\pi(X))=I(X_i;Y)$ for every coordinate $X_i$, and adding $\pi$ does not increase $\Phi$.
If $\theta_a$ is not selected, the values $J=a-1$ and $J=a$ give the same $Y$, both with positive probability.
Conditioned on that value of $Y$, the coordinate $v_{a-1}$ and the label $\theta_a(X)$ both take the value one with positive probability but never simultaneously, so $I(v_{a-1};\theta_a(X)\mid Y)>0$ and adding $\theta_a$ strictly increases $\Phi$.
So the $\Phi$-greedy procedure selects all $N-1$ thresholds before $\pi$.
It cannot stop earlier, because while $\pi$ is unselected, $\pi(X)$ is a uniform bit independent of the representation, and every decoder errs on it with probability $1/2>\eps$.
Hence at least $N$ functions are selected.

For the one-shot statement, let $\mathcal G$ be a $t$-set with $t\le N-1$.
If $\pi\in\mathcal G$, some threshold $\theta_a$ is missing, and replacing $\pi$ by $\theta_a$ strictly increases $\Phi$ by the two computations above.
So no maximizer contains $\pi$, and every maximizer errs on $\pi$ with probability $1/2$.

\paragraph{Part (b).}
Choose $p\in(0,0.01)$ with $\lambda h_2(p)\le1$ and an integer $R$ with $Rh_2(p)>1$.
Let $Y_0$ be a uniform bit and $B_1,\ldots,B_\lambda$ independent Bernoulli$(p)$ bits, independent of $Y_0$.
The input consists of $Y_0$ and $R$ identical copies of each $B_i$, so $n=1+\lambda R$, and $\F$ is the set of coordinate functions.
Then $H(X)=1+\lambda h_2(p)\le2$.

Suppose the selected coordinates reveal some of the $B_i$ but not $Y_0$, and let $Y$ be their values.
Adding $Y_0$ increases $\Phi$ by exactly $I(Y_0;Y_0\mid Y)=1$, since $Y_0$ is independent of everything else.
Adding a coordinate of an unrevealed block $i$ increases $\Phi$ by at least $\sum_{\text{copies}}I(B_i;B_i\mid Y)=Rh_2(p)>1$.
Coordinates of revealed blocks add nothing.
So the $\Phi$-greedy procedure reveals all $\lambda$ rare bits before selecting $Y_0$, and it cannot stop before selecting $Y_0$, because $Y_0$ is a uniform bit independent of the other coordinates.
Hence at least $\lambda+1$ functions are selected.

For the one-shot statement, a set $\mathcal G$ that reveals $\lambda'$ distinct rare bits has $\Phi(\mathcal G)=\1\{Y_0\in\mathcal G\}+\lambda'Rh_2(p)$.
Since $Rh_2(p)>1$, every maximizer with $t\le\lambda$ elements reveals $t$ distinct rare bits and excludes $Y_0$, and therefore errs on $Y_0$ with probability $1/2$.
\end{proof}

\section{Experimental configurations}
\label{app:experiments}
\paragraph{Finite-family experiment.} The Gaussian observation noise has standard deviation \(0.1\). Each instance uses \(500n_{\mathrm a}\) training observations, \(250n_{\mathrm a}\) validation observations, and \(500n_{\mathrm a}\) test observations. The configuration grid is
\[
W\in\{4,8,16,32\},\qquad
k\in\{1,2,4\},\qquad
q\in\{1,2,4\}.
\]
Each of the 36 configurations is trained separately with five random seeds per instance, providing 3,420 training runs. Before quantization, retained activations are clipped at the $99.9$th percentile. We evaluate the error targets
\[
\varepsilon\in\{0.005,0.01,0.025,0.05,0.1,0.2\}.
\]
\paragraph{Threshold-product experiment.} For each dimension considered in \Cref{sec:exp-pilot}, we use 65,536 training observations and 65,536 test observations, shared across three random seeds. We train one autoencoder per seed with \(W=2r\) and \(k=r\), then quantize each trained model at \(q\in\{1,\ldots,5\}\). This provides $\StudyPilotModels$ trained models and $\StudyPilotRepresentations$ quantized representations.

\end{document}